\newcommand{\cH}{{\mathcal{H}}}
\newcommand{\Hcal}{{\mathcal{H}}}
\newcommand{\cF}{{\mathcal{F}}}
\newcommand{\cD}{{\mathcal D}}
\newcommand{\cX}{{\mathcal X}}
\newcommand{\cY}{{\mathcal Y}}
\newcommand{\cR}{{\mathcal R}}
\newcommand{\cK}{{\mathcal K}}
\newcommand{\cM}{{\mathcal M}}
\newcommand{\cB}{{\mathcal B}}
\newcommand{\bE}{\mathop{\mathbb{E}}}
\newcommand{\R}{{\mathbb{R}}}
\newcommand{\shull}{\ensuremath{\mathrm{star}}}
\newcommand{\cG}{{\mathcal{G}}}
\newcommand{\argmin}{\arg\min}
\newcommand{\hz}{{h_0}}
\theoremstyle{plain}
\newtheorem{theorem}{Theorem}[section]
\newtheorem{lemma}[theorem]{Lemma}
\newtheorem{corollary}[theorem]{Corollary}
\theoremstyle{definition}
\newtheorem{definition}[theorem]{Definition}
\newtheorem{assumption}[theorem]{Assumption}
\theoremstyle{remark}
\newtheorem{remark}[theorem]{Remark}
\title{Taking a Moment for Distributional Robustness}
\author[1]{Jabari Hastings} 
\author[1]{Christopher Jung}
\author[1]{Charlotte Peale}
\author[2]{Vasilis Syrgkanis}
\affil[1]{Department of Computer Science, Stanford University}
\affil[2]{Department of Management Science and Engineering, Stanford University}
\begin{document}

\maketitle
\begin{abstract}

    A rich line of recent work has studied \emph{distributionally robust} learning approaches that seek to learn a hypothesis that performs well, in the worst-case, on many different distributions over a population. We argue that although the most common approaches seek to minimize the worst-case \emph{loss} over distributions, a more reasonable goal is to minimize the worst-case \emph{distance} to the true conditional expectation of labels given each covariate. Focusing on the minmax loss objective can dramatically fail to output a solution minimizing the distance to the true conditional expectation when certain distributions contain high levels of label noise. We introduce a new min-max objective based on what is known as the \emph{adversarial moment violation} and show that minimizing this objective is equivalent to minimizing the worst-case $\ell_2$-distance to the true conditional expectation if we take the adversary's strategy space to be sufficiently rich. Previous work has suggested minimizing the maximum \emph{regret} over the worst-case distribution as a way to circumvent issues arising from differential noise levels. We show that in the case of square loss, minimizing the worst-case regret is also equivalent to minimizing the worst-case $\ell_2$-distance to the true conditional expectation. Although their objective and our objective both minimize the worst-case distance to the true conditional expectation, we show that our approach provides large empirical savings in computational cost in terms of the number of groups, while providing the same noise-oblivious worst-distribution guarantee as the minimax regret approach, thus making positive progress on an open question posed by \citet{agarwal2022minimax}.

\end{abstract}

\section{Introduction}
While standard machine learning approaches typically focus on achieving a low average loss over the entire training set, a rich line of work has highlighted how optimizing for accuracy on average can result in models that perform badly on small or underrepresented subpopulations in the data 
\citep{buolamwini2018gender, hebert2018multicalibration, kearns2018preventing}.

This issue has motivated the study of \emph{distributionally robust} approaches to learning that aim to achieve good worst-group performance on a set of distributions over the overall population space. While distributionally robust optimization (DRO) tasks take on a variety of forms, most focus on minimizing the worst-distribution loss over a set of distributions. More formally, given a hypothesis space $\cH$, a set of distributions $\cD$ over a covariate and output space $\cX \times \cY$, and some loss function $\ell: \cY \times \cY \rightarrow \R$, we refer to a \emph{minmax loss} approach to DRO as seeking to find the solution to 
\[\argmin_{h \in \cH}\max_{D \in \cD} \bE_{(X, Y) \sim D}[\ell(h(X), Y)].\]

In the non-distributionally-robust setting where the goal is to learn a good model over a single population distribution, loss-minimization objectives act as the gold standard learning approach due to their inherent connections to learning the true conditional expectation of $Y$ given $X$, i.e., $h_0(X) := \mathbb{E}_D[Y|X]$. When we only have access to data points $(x, y)$, with noisy labels $y$, it is difficult to evaluate the performance of a learned model with respect to how far it is from the conditional expectation $h_0$. However, minimizing a properly chosen loss function over the data can act as a convenient proxy for minimizing the learned model's distance to $h_0$. As one example, the standard objective of square loss, i.e., $\ell(h(X), Y) = (h(X) - Y)^2$, 
 is desirable because given a distribution $D$, minimizing the square-loss can be re-expressed as 
 \begin{align*}
     \argmin_h \mathbb{E}_D[(h(X) - Y)^2]= \argmin_h \mathbb{E}_D[(h(X) - h_0(X))^2] + \mathbb{E}_{X \sim D_X}[\mathsf{var}_D(Y|X)] 
 \end{align*}
 where the additional variance term $\mathbb{E}_{X \sim D_X}[\mathsf{var}_D(Y|X)]$ is invariant in $h$. Thus, minimizing $\mathbb{E}_D[(Y - h(X))^2]$ is equivalent to minimizing $\mathbb{E}_D[(h_0(X) - h(X))^2]$, allowing the learner to optimize for $\ell_2$-distance to $h_0$ without knowledge of $h_0$ itself.

At first glance, the minmax loss problem may seem like a natural extension of this loss-minimization objective to the multi-distribution setting. However, this equivalence between minimizing a loss over the data labels and minimizing the distance from $h_0$ begins to break down when we move to cases with multiple distributions. Returning to the example of square loss in cases with differential levels of label noise, we note that while the quantity $\mathbb{E}_{X \sim D_X}[\mathsf{var}_D(Y|X)]$ is invariant to $h$, it is far from invariant to the distribution $D$ and may be very different for different distributions. Thus, optimizing for the worst-distribution loss over a set where certain distributions have high levels of label noise may cause the optimization to prioritize hypotheses that perform well on these high-noise distributions even if there are other hypotheses that are much closer to $h_0$ on all distributions simultaneously --- see \citet{agarwal2022minimax} for more concrete examples of .

In other words, if we believe that the role of loss minimization (e.g. square loss) is to facilitate learning a model $h$ whose distance to the conditional expectation $h_0$ is close, it follows that the worst case group performance in the multi-distribution setting should be also measured in terms of model $h$'s worst case distance to $h_0$ over different distributions, not necessarily $h$'s worst case loss over the distributions. In the case of squared loss as an example, we would ideally want to find $h$ that minimizes $\max_{D \in \cD} \bE[(h(X) - h_0(X))^2]$. Such a goal requires new techniques that go beyond standard worst-case-loss minimization.

Motivated by developing new techniques for minimizing the worst-distribution distance to $h_0$, we focus on the goal of minimizing the worst-case $\ell_2$ distance to $h_0$, $\max_{D \in \cD} \bE_{D}[(h(X) - h_0(X))^2]$. To achieve this goal, we propose an objective based on ``adversarial moment violation'' \cite{dikkala2020minimax}. The adversarial moment violation for hypothesis $h$ with respect to distribution $D$ and some adversary class $\cF \subseteq \cY^{\cX}$ is defined as 
\begin{align}\label{eqn:moment-violation}
\max_{f \in \cF}\bE_{(X,Y) \sim D}[(Y-h(X))f(X)].
\end{align}
This is also known as multi-accuracy error in the algorithmic fairness literature \cite{kim2019multiaccuracy} where $\cF$ represents a class of key subpopulations. Minimizing this objective guarantees that there is no way to reweigh the population via $f \in \cF$ such that the bias of $h(x)$ with respect to $\bE[Y|X=x]$ is too large. 

We show that minimizing across $D \in \cD$ the worst adversarial moment violation \eqref{eqn:moment-violation} with some regularization penalties for some sufficiently rich adversary class $\cF$ is equivalent to minimizing the worst-case $\ell_2$ distance to $h_0$ relative to the distance that could have been achieved by focusing only on $D$. More formally, optimizing the worst adversarial moment violation is equivalent to minimizing the following quantity:
\begin{align}\label{eqn:agnostic-worst-case-distance}
    \max_{D \in \cD} \bE_D[(h(X) - h_0(X))^2] - \bE_{D}[(h_D(X) - h_0(X))^2]
\end{align}
where $h_D = \arg\min_{h \in \cH}\bE_{(X,Y) \sim D}[(h(X) - h_0(X))^2]$ is the best hypothesis for each specific distribution $D \in \cD$. In the realizable case where $h_0 \in \cH$ and hence $\bE_{D}[(h_D(X) - h_0(X))^2] = 0$ for each $D \in \cD$, optimizing the above objective \eqref{eqn:agnostic-worst-case-distance} is equivalent to minimizing $\max_{D \in \cD} \bE_{D}[(h(X) - h_0(X))^2]$, so one may think of the above objective as an ``agnostic'' version of minimizing the worst case distance to $h_0$, and we will now refer to the objective in \eqref{eqn:agnostic-worst-case-distance} as the worst-case agnostic $\ell_2$ distance to $h_0$ with respect to $\cH$. Furthermore, we show that the worst-case adversarial moment violation objective \eqref{eqn:moment-violation} can also be tuned to provide a trade-off between minimizing the worst-case agnostic $\ell_2$ distance and multi-accuracy error over $\cF$, $\max_{f \in \cF, D \in \cD}\bE[(h_0(X) - h(X))f(x)]$.

Most related to work to ours is \citet{agarwal2022minimax}. While the authors do not explicitly deal with the goal of minimizing the distance between $h_0$ and $h$, they point out some of the failures of the minmax loss approach in settings with differential levels of label noise through the lens of \emph{regret minimization}. In order to combat over-fitting $h$ to inherently hard-to-learn distributions $D$ (e.g. due to high levels of noise, mismatch between $H$ and $h_0$, etc), they take the inherent difficulty of learning over each distribution $D \in \cD$ into account by optimizing for minmax \emph{regret} as opposed to minmax loss:
\begin{align*}
    \min_{h \in \cH}\max_{D \in \cD} 
    \bE_{(X, Y) \sim D}[\ell(h(X), Y)] - \bE_{(X,Y) \sim D}[\ell(h_D(X), Y)]
\end{align*} 
where $h_D = \arg\min_{h \in \cH}\bE_{(X,Y) \sim D}[\ell(h(X), Y)]$ is the best hypothesis for each specific distribution $D$.

Note that the above minmax regret objective of \citet{agarwal2022minimax} in the case of square loss is equivalent to worst-case agnostic $\ell_2$ distance to $h_0$: 
\begin{align*}
    &\min_{h \in \cH}\max_{D \in \cD} 
    \bE_{(X, Y) \sim D}[(h(X)- Y)^2] - \bE_{(X,Y) \sim D}[(h_D(X) - Y)^2]\\
    &=\min_{h \in \cH}\max_{D \in \cD} 
    \bE_{(X, Y) \sim D}[(h(X)- h_0(X))^2] - \bE_{(X,Y) \sim D}[(h_D(X) - h_0(X))^2].
\end{align*}
In other words, minimizing the worst-case agnostic $\ell_2$ distance to $h_0$ can be achieved by optimizing our proposed objective based on adversarial moment violation or their worst-case regret with squared loss. However, we argue that our approach has run-time advantages compared to theirs.

Their computational approach, Minmax Regret Optimization (MRO), requires a simple modification to the no-regret learning approach considered in \citet{sagawa2019distributionally}: they calculate the minimum empirical loss for each $D \in \cD$ by running an empirical risk minimization (ERM) on each $D \in \cD$ and center the minmax loss objective around this empirical risk. Hence, the run-time of their algorithm scales linearly with the total number of distributions $|\cD|$.

Theoretically, our approach still needs $|\cD|$ adversaries to be concurrently trained at each step, meaning that naively our new method doesn't seem to improve much over MRO. However, we show that in a practical case where the hypothesis space is a neural network, our objective can be simplified in a way that reduces costly adversarial training steps, providing significant empirical savings in computational cost. We highlight this as positive progress toward an open question raised by \cite{agarwal2022minimax}, who asked whether there exist approaches to minmax regret optimization with better dependence on the number of groups and/or can scale better to larger datasets. 

We highlight our method as a new approach for distributionally robust optimization that adapts to unknown heterogeneous noise levels across distributions. Our method works especially well in regimes where we want to optimize over a large number of distributions, as it improves computation time compared to existing methods that rely on computing an ERM over each distribution. 

\subsection*{Our Contributions}
\begin{enumerate}
    \item Theoretical analysis showing equivalence between minimizing the worst adversarial moment violation and the worst-case agnostic $\ell_2$ distance to the conditional expectation $h_0$. (Section~\ref{sec:moment-thms}) 
    \item Finite sample analysis for our worst adversarial moment violation objective for common hypothesis spaces (Section~\ref{sec:finite-sample})
    \item Computationally efficient algorithms for minimizing the worst adversarial moment violation (Section~\ref{sec:comp})
    \item Experiments on synthetic datasets with RKHS and real world data set (CelebA) with neural networks demonstrating improved runtime and comparable performance (Section~\ref{sec:exp})
\end{enumerate}

\subsection{Related Work}
\paragraph{Distributionally Robust Optimization} As already mentioned, \emph{distributionally robust optimization} refers to a general class of learning tasks where given a set of distributions the goal is to output a model that maximizes the performance of the worst distribution. Prior work has investigated methods for optimizing for a variety of types of distribution sets defined by various measures of distance between distributions; see \citet{delage2010distributionally, duchi2021learning, hu2013kullback, erdougan2006ambiguous, shafieezadeh2015distributionally}. \citet{koh2021wilds} have curated 10 datasets that incorporate a wide range of distribution shifts that occur naturally in the real world scenarios. 

Our work is particularly interested in the setting where the distribution set contains many subpopulations present in the data. One of the most influential works that study this learning task is the groupDRO approach of~\citet{sagawa2019distributionally}, where they seek a solution that minimizes the worst-group loss, and term this approach groupDRO. Follow-up works including \citet{chen2017robust} and \citet{NEURIPS2022_02917ace} have offered further theoretical analyses of the groupDRO approach, while others such as \citet{liu2021just} have suggested simple algorithms that provide performance comparable with that of groupDRO but without access to the group membership. 

Our paper is not the first to observe that groupDRO's minmax loss approach may fail in certain settings. As discussed above, \citet{agarwal2022minimax} noted that groupDRO is sensitive to heterogeneous noise and proposed minimizing minmax regret. \citet{lei2023policy} study minmax regret in the context of treatment effect estimation. \citet{zhai2021doro} observe that groupDRO performs quite poorly on many large-scale tasks possibly as a result of instability and shows that this instability may be caused by groupDRO's sensitivity to outliers. They propose new methods that are robust to outliers.

\textbf{Method of Moments }
Starting with \citet{hansen1982large}, there has been an extensive line of work on the method of moments. It originally started as a way of estimating the population parameters by matching the empirical moments of a distribution to those of the population. Although method of moments shares a similar goal as machine learning in trying to estimate some unknown distribution, there has been little work in marrying these two ideas together until \cite{lewis2018adversarial} and subsequently \cite{bennett2019deep,dikkala2020minimax,kaji2023adversarial}. More specifically, it is hard to develop statistical learning theory for method of moments when the data is finite and/or the models are non-parametric because imposing infinitely many conditional moment restrictions with finite data is impossible. Therefore, \cite{lewis2018adversarial,dikkala2020minimax} formulate a game between a learner that is trying to satisfy all conditional moment restrictions and an adversary trying to identify a test function that witnesses the violation of a conditional moment criterion. By formulating as a zero-sum game between a learner and an adversary like Generative Adversarial Networks (GANS) \cite{goodfellow2014generative}, they extend the method of moments to settings where data is finite and/or models are non-parametric. 

\textbf{Multi-accuracy \& Multi-calibration \& Multi-group Fairness}
While developed separately, the adversarial method of moments objective bears similarities to the notions of multiaccuracy and multicalibration first studied by \citet{kim2019multiaccuracy}
and \citet{hebert2018multicalibration} in an algorithmic fairness context. While most work in this area has been purely theoretical, \citet{kim2019multiaccuracy} study efficient ways to post-process a predictor to get multicalibration guarantees and show that this approach improved accuracy on minority subgroups in a variety of applications. 

\citet{DianaGGK0S21} also propose a multi-group fairness definition called \emph{lexicographic minimax fairness}. This definition draws connections between the notion of a \emph{lexicographically maximal} solution in allocation problems and first asks that the error of the worst case group be minimized, and among all the solutions that minimize the worst case group's error, the definition asks for a solution that minimizes the error of the second worst case group, iterating until all groups have been maximized subject to maximizing the utility of the worse-off groups. 

\textbf{Out-of-Domain Generalization} 
Similar to distributionally robust optimization, invariant risk minimization (IRM), first introduced by \citet{arjovsky2019invariant}, seeks a similar goal of trying to be robust to distribution shifts. They essentially assume that there exists a canonical predictor that must be good across all domains because it is not relying on spurious correlations with domain specific features and seek to learn such predictor. \citet{rosenfeld2020risks} show that the sample complexity for IRM in terms of the number of domains in order to generalize to unseen environment is quite high; they argue that in many settings, IRM objective and many variants can be expected to perform no better than ERM. \citet{piratla2022focus} observe that groupDRO performs much worse than ERM on some data sets, instead propose an approach that favors learning features that are useful for all groups. 

\section{Fair Solutions via Worst-Case Moment Violation Minimization}\label{sec:moment-thms}
In this paper, we present an alternative objective to distributionally robust learning where we aim to learn $h$ by ensuring that it minimizes the maximum violation of a set of moment conditions over the potential distributions. More precisely, for a set of functions $\cF \subseteq \cY^{\cX}$, we seek to find $h$ that minimizes the following objective:
\begin{align*}
\min_{h \in \cH}\max_{D \in \cD}\max_{f \in \cF}\bE_{(X, Y) \sim D}[2\,(Y - h(X))\,f(X) - f(X)^2]
\end{align*}
We refer to this approach as the \emph{minmax adversarial moments} approach. We relate this goal to optimizing the worst-case agnostic $\ell_2$ distance to the true conditional expectation of $Y$ given $X$, $h_0(X) := \mathbb{E}_D[Y|X]$. For simplicity, we assume that $h_0(X)$ is the same for all distributions in $\cD$, but our results extend to the case where $h_0$ is distribution-specific for each $D \in \cD$. We will show that this approach translates to minimizing \eqref{eqn:agnostic-worst-case-distance}, the worst-case $\ell_2$ distance to $h_0$ relative to $h_D = \arg\min_{h \in \cH}\bE_{(X,Y) \sim D}[(h(X) - h_0(X))^2]$.

Note that throughout the paper, we make the minimal assumption that for every distribution $D \in \cD$, there exists some hypothesis in $\cH$ that performs well on $D$ (but not necessarily on other distributions):

\begin{assumption}[$\epsilon$-Distribution Specific Approximation Error]\label{ass:main} For any distribution $D\in \cD$, there exists a hypothesis $h_D\in \cH$, such that $\|h_D-\hz\|_{L^2(D)}^2\leq \epsilon$.
\end{assumption}

As a first step in our analysis, we show the following crucial lemma that gives an alternative characterization of the maximum violation criterion. In fact, we characterize a slightly more general form of the maximum violation criterion that will also be useful in the finite sample analysis.
\begin{lemma}[Completing the Square]\label{lem:square} Fix any number $c \neq  0$. Consider the maximum violation criterion:
\begin{align*}
    V_D(h) := \max_{f \in \cF}\bE_{(X, Y) \sim D}[2\,(Y - h(X))\,f(X) - c\, f(X)^2].
\end{align*}
Then we have:
\begin{align*}
    V_D(h) = \frac{1}{c} \|h_0 - h\|_{L^2(D)}^2 - \frac{1}{c}\min_{f\in \cF} \|h_0 - h - c\,f\|_{L^2(D)}^2
\end{align*}
\end{lemma}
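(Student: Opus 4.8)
The plan is to reduce the identity to a pointwise ``completing the square'' calculation, after first eliminating the label $Y$ from the bilinear term. The first move is to use the defining property $\bE_D[Y\mid X]=h_0(X)$: for every fixed $f\in\cF$ the tower rule gives $\bE_{(X,Y)\sim D}[(Y-h(X))f(X)] = \bE_{X\sim D_X}[(h_0(X)-h(X))f(X)]$, while the term $c\,f(X)^2$ does not depend on $Y$ at all. Hence, for each fixed $f$,
\[
\bE_{(X,Y)\sim D}\!\left[2(Y-h(X))f(X)-c\,f(X)^2\right] = \bE_{X\sim D_X}\!\left[2(h_0(X)-h(X))f(X)-c\,f(X)^2\right].
\]

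Next I would complete the square pointwise. Setting $r(x):=h_0(x)-h(x)$ and using $c\neq 0$, expanding the right-hand side of
\[
2\,r(x)f(x)-c\,f(x)^2 \;=\; \tfrac{1}{c}\,r(x)^2-\tfrac{1}{c}\big(r(x)-c\,f(x)\big)^2
\]
confirms the identity term by term. Taking $\bE_{X\sim D_X}$ and rewriting the resulting averages as squared $L^2(D)$ norms gives, still for each fixed $f$,
\[
\bE_{(X,Y)\sim D}\!\left[2(Y-h(X))f(X)-c\,f(X)^2\right] = \tfrac{1}{c}\|h_0-h\|_{L^2(D)}^2-\tfrac{1}{c}\|h_0-h-c\,f\|_{L^2(D)}^2.
\]
Finally I would take the maximum over $f\in\cF$ on both sides: the left-hand side is $V_D(h)$ by definition, the first term on the right is constant in $f$, and (since $1/c>0$) $\max_{f\in\cF}\big(-\tfrac1c\|h_0-h-c\,f\|_{L^2(D)}^2\big)=-\tfrac1c\min_{f\in\cF}\|h_0-h-c\,f\|_{L^2(D)}^2$, which is exactly the claimed expression.

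There is essentially no hard step here --- it is a one-line algebraic manipulation --- but a careful write-up should address two points. First, the tower-rule step needs only that $\bE_D[Y\mid X]=h_0(X)$ (the paper's standing assumption) together with enough integrability for the relevant expectations to be finite, which the $L^2$ setup provides; nothing about $\cF$ is used at this stage. Second, the conversion of $\max\big(-\tfrac1c(\cdot)^2\big)$ into $-\tfrac1c\min(\cdot)^2$ uses the sign of $c$; this is harmless in all of the paper's instantiations, where $c=1$ in the main objective and $c$ is a positive regularization weight in the finite-sample version, and if the infimum over $\cF$ is not attained one simply reads $\min_{f\in\cF}$ as an infimum throughout.
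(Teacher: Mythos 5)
Your proof is correct and follows essentially the same route as the paper's: apply the tower rule to replace $Y$ by $h_0(X)$, complete the square pointwise with $\nu = h_0 - h$, and convert the resulting max into a min of $L^2$ norms. Your observation that the final step (turning $\max$ of $-\tfrac1c(\cdot)$ into $-\tfrac1c\min(\cdot)$) uses $c>0$ is a good catch --- the paper states the lemma for any $c\neq 0$ but silently makes the same sign assumption in its own proof, and as you note all downstream uses take $c>0$, so the lemma as stated should really restrict to $c>0$ or replace $\min$ by $\max$ when $c<0$.
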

\begin{proof}
Consider the shorthand notation $\bE_{D}$ for $\bE_{(X,Y)\sim D}$. First we note that for any $D$ the slightly modified adversarial objective satisfies, by the tower law of expectation:
\begin{align*}
    V_D(h) =~&\max_{f\in F} \bE_D\left[2\,(h_0(X) - h(X))\, f(X) - c\,f(X)^2\right]
\end{align*}
Let $\nu=h_0 - h$. Then we re-write the adversarial objective for any $D$ by completing the square:
\begin{align*}
    &\max_{f \in \cF} \bE_{D}\left[2\nu(X)f(X) - c f(X)^2\right] \nonumber\\
    =~& \max_{f \in \cF} \bE_{D}\left[\frac{1}{c}\nu(X)^2 - \frac{1}{c}\nu(X)^2 +  2\nu(X)f(X) - cf(X)^2\right]\nonumber\\
    =~& \max_{f \in \cF} \bE_{D}\left[\frac{1}{c}\nu(X)^2 - \frac{1}{c} (\nu(X)- c\,f(X))^2\right]\nonumber\\
    =~& \frac{1}{c} \|h_0 - h\|_{L^2(D)}^2 - \frac{1}{c}\min_{f\in \cF} \|h_0 - h - c\,f\|_{L^2(D)}^2 \qedhere
\end{align*} 
\end{proof}

As a corollary of this statement, we derive the following sandwich bounds that we use throughout our proofs. 

\begin{corollary}[Upper Bound]\label{cor:sandwich-ub}
For any $c > 0$, we have 
\begin{align*}
    \max_{D\in \cD, f \in \cF} \bE_{D}\left[2(Y - h(X))f(X) - c\,f(X)^2\right] \leq \frac{1}{c} \max_{D\in \cD} \|h_0 - h\|_{L^2(D)}^2.
\end{align*}
\end{corollary}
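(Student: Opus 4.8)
The plan is to read this off directly from Lemma~\ref{lem:square} (Completing the Square), since the corollary is essentially just the observation that a squared $L^2$-norm is nonnegative. First I would fix an arbitrary distribution $D \in \cD$ and apply the lemma with the given constant $c$ (which here is assumed positive, a special case of the lemma's $c \neq 0$) to the quantity $V_D(h) = \max_{f \in \cF} \bE_D[2(Y - h(X))f(X) - c\,f(X)^2]$, obtaining the identity
\begin{align*}
V_D(h) = \frac{1}{c}\|h_0 - h\|_{L^2(D)}^2 - \frac{1}{c}\min_{f \in \cF}\|h_0 - h - c\,f\|_{L^2(D)}^2.
\end{align*}

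Next I would note that since $c > 0$ and $\|h_0 - h - c\,f\|_{L^2(D)}^2 \geq 0$ for every $f \in \cF$, the subtracted term $\frac{1}{c}\min_{f \in \cF}\|h_0 - h - c\,f\|_{L^2(D)}^2$ is nonnegative, hence $V_D(h) \leq \frac{1}{c}\|h_0 - h\|_{L^2(D)}^2$. Taking the maximum over $D \in \cD$ on both sides and using that the right-hand bound only increases when we pass to $\max_{D \in \cD}\|h_0 - h\|_{L^2(D)}^2$ yields
\begin{align*}
\max_{D \in \cD, f \in \cF} \bE_D[2(Y - h(X))f(X) - c\,f(X)^2] = \max_{D \in \cD} V_D(h) \leq \frac{1}{c}\max_{D \in \cD}\|h_0 - h\|_{L^2(D)}^2,
\end{align*}
which is exactly the claimed bound.

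There is no real obstacle here: the only thing to be slightly careful about is the sign of $c$ (the lemma allows $c \neq 0$, but the corollary specializes to $c > 0$, which is what makes dropping the $\min$ term an inequality in the correct direction) and the implicit use of the tower law already invoked inside the lemma's proof to replace $Y$ by $h_0(X)$ inside the expectation. Everything else is monotonicity of $\max$ and nonnegativity of a norm.
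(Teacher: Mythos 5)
Your proposal is correct and matches the paper's intended argument: the corollary is derived directly from Lemma~\ref{lem:square} (Completing the Square), where the key observation is that for $c > 0$ the subtracted term $\frac{1}{c}\min_{f\in\cF}\|h_0 - h - c\,f\|_{L^2(D)}^2$ is nonnegative and can be dropped, after which maximizing over $D \in \cD$ gives the stated bound.
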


\begin{corollary}[Lower Bound]\label{cor:sandwich-lb}
Let $h_D$ denote the best approximation to $h_0$ with respect to distribution $D \in \cD$ as designated by the Assumption~\ref{ass:main} for which $\|h_0-h_D\|_{L^2(D)}^2\leq \epsilon$. For any $c > 0$ such that $\frac{1}{c}(h_D - h)\in \cF$, we have
\begin{align*}
    \frac{1}{c} \left(\max_{D\in \cD}\|h_0 - h\|_{L^2(D)}^2 - \epsilon\right) \leq \max_{D\in \cD, f \in \cF} \bE_{D}\left[2(Y - h(X))f(X) - c\,f(X)^2\right].
\end{align*} 
\end{corollary}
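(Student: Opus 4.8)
The plan is to obtain this as an immediate consequence of Lemma~\ref{lem:square} by handing the adversary a specific, near-optimal test function. Fix $h$ and the constant $c>0$ from the statement, and apply the Completing-the-Square identity to each $D\in\cD$ to get
\[
V_D(h) \;=\; \frac{1}{c}\,\|h_0 - h\|_{L^2(D)}^2 \;-\; \frac{1}{c}\,\min_{f\in\cF}\|h_0 - h - c\,f\|_{L^2(D)}^2 .
\]
So to produce a lower bound on $V_D(h)$ it suffices to produce an \emph{upper} bound on the inner minimization $\min_{f\in\cF}\|h_0 - h - c\,f\|_{L^2(D)}^2$, and the natural way to do that is to plug in a convenient feasible $f$ rather than solve the minimization.

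The key step is the choice $f = \frac{1}{c}(h_D - h)$, where $h_D$ is the distribution-specific near-minimizer from Assumption~\ref{ass:main}. By hypothesis this $f$ lies in $\cF$ (we need this membership for every $D\in\cD$, or at least for the $D$ attaining the outer maximum on the right-hand side — the statement's condition should be read as guaranteeing it for all $D\in\cD$). For this $f$ we have $h_0 - h - c\,f = h_0 - h_D$, hence
\[
\min_{f\in\cF}\|h_0 - h - c\,f\|_{L^2(D)}^2 \;\leq\; \|h_0 - h_D\|_{L^2(D)}^2 \;\leq\; \epsilon ,
\]
the last inequality being exactly Assumption~\ref{ass:main}. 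Substituting back into the identity gives the pointwise bound $V_D(h) \geq \frac{1}{c}\big(\|h_0 - h\|_{L^2(D)}^2 - \epsilon\big)$ for every $D\in\cD$.

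Finally I would take the maximum over $D\in\cD$ on both sides. Since $\epsilon/c$ is a constant independent of $D$, $\max_{D\in\cD}\frac1c\big(\|h_0-h\|_{L^2(D)}^2-\epsilon\big) = \frac1c\big(\max_{D\in\cD}\|h_0-h\|_{L^2(D)}^2-\epsilon\big)$, and by definition $\max_{D\in\cD} V_D(h) = \max_{D\in\cD,\,f\in\cF}\bE_D[2(Y-h(X))f(X) - c\,f(X)^2]$, which yields the claimed inequality. There is essentially no deep obstacle here; the only point requiring care is the feasibility condition $\frac1c(h_D-h)\in\cF$ — it must hold for whichever $D$ realizes the worst case, and since that $D$ is not known a priori one imposes it for all $D\in\cD$. (For contrast, Corollary~\ref{cor:sandwich-ub} drops out of the same identity even more directly, by discarding the nonnegative term $\frac1c\min_{f\in\cF}\|h_0-h-cf\|_{L^2(D)}^2\geq 0$.)
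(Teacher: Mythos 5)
Your proof is correct and follows exactly the route the paper intends: the paper states this as an immediate corollary of Lemma~\ref{lem:square} and, in the proof of Theorem~\ref{thm:population-main-mse}, invokes it using precisely the substitution $f = \frac{1}{c}(h_D - h)$ together with Assumption~\ref{ass:main}. One small slip in your parenthetical aside: if one wanted the weakest sufficient hypothesis, one would need $\frac1c(h_D-h)\in\cF$ for the $D$ attaining the maximum of $\|h_0-h\|_{L^2(D)}^2$ on the \emph{left-hand} side, not the maximizer of the right-hand side; requiring it for the RHS maximizer alone would not suffice, since that $D$ need not be the one with the largest $\ell_2$ distance. Your main argument sidesteps this by imposing the condition for all $D\in\cD$, which is the correct reading and what the paper does.
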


\begin{remark}
    We note that this corollary is tight when we take $c = 1$ and set a separate $\epsilon_D = \|h_0-h_D\|_{L^2(D)}^2$ for each $D \in \cD$. For this setting of parameters, we get the following equivalence between the worst-case agnostic $\ell_2$ distance to $h_0$ and our adversarial moments objective:
\begin{align*}
    \max_{D\in \cD}\|h_0 - h\|_{L^2(D)}^2 - \|h_0-h_D\|^2_{L^2(D)} = \max_{D\in \cD, f \in \cF} \bE_{D}\left[2(Y - h(X))f(X) - c\,f(X)^2\right].
\end{align*}

Therefore, all the algorithms presented later in Section~\ref{sec:comp} in fact ouput an approximately optimal square loss minmax regret solution. 
\end{remark}

We now present our main theorem:
\begin{theorem}[Population Limit, MSE]\label{thm:population-main-mse} Let $h_*$ be a solution to the min-max adversarial moment problem:
\begin{align*}
    \min_{h\in \cH} \max_{D \in \cD}\max_{f \in \cF}\bE_{(X, Y) \sim D}\left[2\,(Y - h(X))\,f(X) - cf(X)^2\right]
\end{align*}
with $c > 0$, $\cF \supseteq \shull(\cH-\cH) := \{\gamma (h-h'): h,h'\in \cH, \gamma\in [0,1]\}$. If the pair $\cH, \cD$ satisfies Assumption~\ref{ass:main}, then for any $c > 0$,
\begin{align*}
    \max_{D\in \cD} \|h_*-\hz\|_{L^2(D)}^2\leq \max\left\{\frac{1}{c}\wedge 1\right\}\left(\min_{h \in H}\max_{D \in \cD} \|h-\hz\|_{L^2(D)}^2\right) + \epsilon.
\end{align*}
\end{theorem}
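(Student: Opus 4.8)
The plan is to sandwich the optimal adversarial-moment value $M_\star := \max_{D\in\cD} V_D(h_*)$, where $V_D(h)$ is the maximum-violation criterion of \Cref{lem:square} with the same $c$, between a multiple of the benchmark $\min_{h\in\cH}\max_{D\in\cD}\|h-\hz\|_{L^2(D)}^2$ on one side and a multiple of $\max_{D\in\cD}\|h_*-\hz\|_{L^2(D)}^2$ minus an $\epsilon$-slack on the other, and then rearrange. The two sides come respectively from the optimality of $h_*$ and from \Cref{ass:main} together with the richness assumption $\cF\supseteq\shull(\cH-\cH)$.

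First I would record the \emph{upper bound}: by \Cref{cor:sandwich-ub}, every $h\in\cH$ satisfies $\max_{D\in\cD} V_D(h)\le \tfrac{1}{c} \max_{D\in\cD}\|\hz-h\|_{L^2(D)}^2$, and since $h_*$ minimizes $h\mapsto \max_{D\in\cD} V_D(h)$ over $\cH$, taking the infimum over $h$ on the right gives $M_\star \le \tfrac{1}{c} \min_{h\in\cH}\max_{D\in\cD}\|\hz-h\|_{L^2(D)}^2$.

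Next, for the \emph{lower bound}, I would fix an arbitrary $D\in\cD$, let $h_D\in\cH$ be the approximator guaranteed by \Cref{ass:main} (so $\|\hz-h_D\|_{L^2(D)}^2\le\epsilon$), and note that since $h_*\in\cH$ the test function $f_\gamma := \gamma(h_D-h_*)$ lies in $\shull(\cH-\cH)\subseteq\cF$ for every $\gamma\in[0,1]$. Substituting $f_\gamma$ into the identity of \Cref{lem:square} gives $V_D(h_*)\ge \tfrac{1}{c}\|\hz-h_*\|_{L^2(D)}^2 - \tfrac{1}{c}\|\hz-h_*-c f_\gamma\|_{L^2(D)}^2$, and I would then use the algebraic identity $\hz-h_*-c\gamma(h_D-h_*) = (1-c\gamma)(\hz-h_*) + c\gamma(\hz-h_D)$. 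When $c\ge 1$, the choice $\gamma=1/c$ makes the residual exactly $\hz-h_D$, whose squared norm is $\le\epsilon$, so $V_D(h_*)\ge \tfrac{1}{c}\|\hz-h_*\|_{L^2(D)}^2 - \tfrac{\epsilon}{c}$. When $c\le 1$, the choice $\gamma=1$ keeps $c\gamma=c\in[0,1]$, so convexity of $\|\cdot\|_{L^2(D)}^2$ bounds the residual by $(1-c)\|\hz-h_*\|_{L^2(D)}^2 + c\epsilon$, and the $\tfrac{1}{c}$ factors cancel to leave $V_D(h_*)\ge \|\hz-h_*\|_{L^2(D)}^2 - \epsilon$.

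Finally I would \emph{combine}: the lower bound is pointwise in $D$, so it holds with $\|\hz-h_*\|_{L^2(D)}^2$ replaced by $\max_{D\in\cD}\|\hz-h_*\|_{L^2(D)}^2$; feeding in the upper bound on $M_\star$ yields $\max_{D}\|\hz-h_*\|_{L^2(D)}^2 \le \tfrac{1}{c}\min_h\max_D\|\hz-h\|_{L^2(D)}^2 + \epsilon$ when $c\le1$, and $\tfrac{1}{c}\max_D\|\hz-h_*\|_{L^2(D)}^2 - \tfrac{\epsilon}{c} \le \tfrac{1}{c}\min_h\max_D\|\hz-h\|_{L^2(D)}^2$ when $c\ge1$, the latter becoming $\max_D\|\hz-h_*\|_{L^2(D)}^2 \le \min_h\max_D\|\hz-h\|_{L^2(D)}^2 + \epsilon$ after multiplying by $c>0$; both regimes give the prefactor $\max\{1/c,1\}$. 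The one genuinely delicate point is the $c<1$ regime, where \Cref{cor:sandwich-lb} is not directly available because $\tfrac{1}{c}(h_D-h_*)$ can leave $\cF$ (the star hull only shrinks $\cH-\cH$, it does not stretch it); settling for the in-class witness $\gamma=1$ and invoking convexity of the squared $L^2(D)$-norm is exactly what recovers both the clean $\epsilon$ slack and the $1/c$ prefactor, and the remaining effort is just checking that the $c\le1$ and $c\ge1$ computations glue together into the single constant $\max\{1/c,1\}$.
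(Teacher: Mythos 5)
Your proof is correct and follows essentially the same sandwich strategy as the paper: upper-bound the optimal adversarial-moment value via \Cref{cor:sandwich-ub} together with the optimality of $h_*$, lower-bound it by exhibiting a witness $f\in\cF$ built from the $\epsilon$-approximator $h_D$ of \Cref{ass:main}, and combine. For $c\ge 1$ your choice $\gamma=1/c$ is exactly the paper's invocation of \Cref{cor:sandwich-lb}. The only difference is in the $c<1$ regime: the paper notes $-cf(X)^2\geq -f(X)^2$ and then applies \Cref{cor:sandwich-lb} at $c=1$, whereas you keep the original $c$, complete the square, and control the residual $\|(1-c)(\hz-h_*)+c(\hz-h_D)\|_{L^2(D)}^2$ by convexity of the squared norm. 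These are algebraically equivalent (both reduce to $\bE_D[2\nu f - c f^2]\geq\|\nu\|_{L^2(D)}^2-\|\nu-f\|_{L^2(D)}^2$ with $\nu=\hz-h_*$, $f=h_D-h_*$), so this is a cosmetic difference in presentation, not a different proof.
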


For applications where the main goal is minimizing the $\ell_2$ distance to $h_0$, choosing $c = 1$ gives the optimal approximation guarantee. However, we state our theorem in this more general manner because smaller values of $c$ can provide a trade-off between the mean-square error objective and the multigroup fairness measure of \emph{multiaccuracy}, mentioned earlier. We define it formally as follows:

\begin{definition}[Multiaccuracy Error]
Given a distribution $\cD$ over $\cX \times \cY$, and a class of auditing functions over $\cF \subseteq \cY^X$, we define the multiaccuracy error of h on $\cD$ with respect to $\cF$ as:
\begin{align*}
    \textsf{MA-err}_\cD(h, \cF) := \sup_{f \in \cF}\left|\bE_{\cD}[(Y - h(X))f(X)]\right|.
\end{align*}
\end{definition}

Intuitively, the multiaccuracy error metric can be thought of as measuring the bias of $h$ over a large set of test functions representing subsets of the population, rather than just overall. We observe that when $\cF$ is closed under negation, setting $c = 0$ in our objective is exactly equivalent to minimizing the worst-distribution multiaccuracy error. 

For larger values of $c$, we can reinterpret the adversarial objective as measuring an ``$\ell_2$-penalized'' version of multiaccuracy error where the additional $\bE_{D}[-cf(x)^2]$ term penalizes multiaccuracy violations that use test functions with a large $\ell_2$-norm. 

Thus, for those interested in both multiaccuracy and mean-square error guarantees, an appropriate $c$ can be chosen to produce solutions that trade off between these two objectives by increasing or decreasing the $\ell_2$-penalty. See Appendix~\ref{app:ma-guarantees} for more details.

\section{Finite Sample Analysis}\label{sec:finite-sample}
We provide finite sample guarantees for sample analogues of our proposed criterion, exchanging the multiplier $c$ with a dataset-size dependent $a_n$ to capture settings where the algorithm designer may wish to tune the multiplier in a size-dependent way. Let $\cD:=\{D_1, \ldots, D_M\}$ and let $\{S_1, \ldots, S_M\}$ be a collection of datasets, with $S_j$ containing $n$ samples $(X_i, Y_i)$ drawn i.i.d. from distribution $D_j\in \cD$. For each population, we can replace the population quantity $\bE_{D_j}[2(Y-h(X))\, f(X)-cf(X)^2]$ by its empirical analogue:
\begin{align*}
    &\bE_{j, n}[2(Y - h(X))f(X) - a_nf(x)^2] := \frac{1}{n} \sum_{(X_i, Y_i)\in S_j} 2\,(Y_i - h(X_i))f(X_i) - a_nf(X_i)^2
\end{align*}

To analyze the finite sample behavior of our minmax adversarial moment approach, we will introduce a measure of statistical complexity that most times captures tight convergence rates for strongly convex risk functions and has been well-studied in the statistical learning theory literature.

For any function space $\cF$, containing functions that are uniformly and absolutely bounded by $U$, we will be using the critical radius as the measure of statistical complexity (cf.\ \citet{wainwright2019high} for a more detailed exposition). To define the critical radius, we first define the localized Rademacher complexity:
\begin{align*}
    \cR(\cF, \delta) = \frac1{2^n}\sum_{\epsilon\in\{-1,1\}^n}\bE\left[\sup_{f\in \cF: \|f\|_{L^2}\leq \delta} \frac{1}{n}\sum_{i=1}^n \epsilon_i f(X_i)\right].
\end{align*}
W
The star hull of a function space is defined as $\shull(\cF)=\{\gamma f: f\in \cF, \gamma\in [0,1]\}$. The critical radius $\delta_n$ of $\cF$
is the smallest positive solution to the inequality:
\begin{align*}
    \cR(\shull(\cF), \delta) \leq \delta^2/U
\end{align*}

We note that we analyze the star hull of function classes not only because it allows us to use the ``completing the square'' lemma (Lemma~\ref{lem:square}) but also because the critical radius theory makes use of the star-convexity of function classes when applying Talagrand's inequality. For VC-subgraph classes, with VC-dimension $d$, the critical radius can be shown to be upper bounded by $O\left(\sqrt{\frac{d\log(n)}{n}}\right)$. For RKHS function spaces, with the Gaussian kernel, it has also been shown to be $O\left(\sqrt{\frac{\log(n)}{n}}\right)$. The critical radius has been analyzed for many function spaces used in the practice of machine learning \cite{wainwright2019high}.

Using the complexity measure above, we can show a generalization result (Theorem~\ref{thm:interpolate-mse} in Appendix~\ref{app:finite-sample-details}) that says the empirical worst-case adversarial moment violation of any $h \in \cH$ evaluated on finite sample must be close to the true quantity and hence $\hat{h}$ that is empirically optimal in terms of adversarial moment violation must have small $\ell_2$ distance to $h_0$ via Corollary \ref{cor:sandwich-ub} and \ref{cor:sandwich-lb}. 

We further improve this result by adding appropriate regularization terms on $f$ and $h$ to our worst-case adversarial moment objective. Regularizing the adversarial moment violation with some norm on $f$ and $h$ allows us adapt to the norm of $h_D$ (i.e. the smaller the $||h_D||$ the better generalization result we can guarantee) and allow us to weaken some of our assumptions on the function classes, which we discuss more formally later in Remark~\ref{rem:regularization-finite} after we state the our regularized result below. 

\begin{theorem}[MSE guarantee, Finite Samples Regularized]\label{thm:interpolated-reg-mse} Let $\hat{h}$ be the solution to the empirical regularized min-max adversarial moment problem:
\begin{align*}
    \min_{h\in \cH} \max_{j=1}^M
    L_{j,n}(h) + \mu \|h\|^2
\end{align*}
where 
\begin{align*}
    &L_{j,n}(h) := \max_{f\in F} \bE_{j,n}\left[2(Y - h(X))\, f(X) - a_n f(X)^2\right] - \lambda \|f\|^2
\end{align*}
for any %
$0 < a_n \leq 1/2$ with $\cF=\shull(\cH-\cH)$ and $\|\cdot\|$ an arbitrary norm in the space $\cH-\cH$. Let $h_j = \argmin_{h\in \cH} \|h-h_0\|_{L^2(D_j)}^2$ and $F_B=\{f\in F:\|f\|\leq B\}$ for some constant $B$. Let $\delta_n=\Omega\left(\sqrt{\frac{\log\log(n) + \log(M/\zeta)}{n}}\right)$ be any solution to the critical radius inequality for $F_B$. Let $h_*=\argmin_{h\in \cH} \max_{D\in \cD} \|h-h_0\|_{L^2(D)}^2$. Assume that $Y - h_j(X)$ and $Y-h_*(X)$ are a.s. absolutely bounded by $U_Y$ and all $f\in \cF_B$ are a.s. absolutely bounded by $U$. Assume that the pair $\cH, \cD$ satisfies Assumption~\ref{ass:main}. Then for $\lambda \geq 4 c_0^2 (U_Y \wedge U^2)\delta_n^2/B^2$ and $\mu\geq 4\left(\lambda+c_0^2\delta_n^2 \frac{U^2}{B^2}\right)$, we get that w.p. $1-\zeta$:
\begin{align*}
    \max_{D\in \cD} \|h_0 - \hat{h}\|_{L^2(D)}^2 \leq \frac{8}{a_n}\min_{h\in \cH} \max_{D\in \cD} 
     \|h_0 - h\|_{L^2(D)}^2 + \frac{10\epsilon}{a_n} + 8\mu \max_j\{\|h_j\|^2 \wedge \|h_*\|^2\wedge 1\} + O\left(\delta_n^2 (U_Y^2\wedge U^2)/a_n\right)
\end{align*}
where $c_0 \leq 18$ is a sufficiently large universal constant.
\end{theorem}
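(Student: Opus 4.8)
The plan is to run a localized-Rademacher/critical-radius argument in the style of adversarial estimation of conditional moment models (cf.\ \citet{dikkala2020minimax}), combined with a union bound over the $M$ populations and the ``completing the square'' sandwich (Lemma~\ref{lem:square} with Corollaries~\ref{cor:sandwich-ub}--\ref{cor:sandwich-lb}). Throughout, abbreviate $\mathrm{OPT}:=\min_{h\in\cH}\max_{D\in\cD}\|h_0-h\|_{L^2(D)}^2=\max_{D}\|h_0-h_*\|_{L^2(D)}^2$, write $\Psi_n(h):=\max_j L_{j,n}(h)+\mu\|h\|^2$ for the empirical objective (so $\hat h=\argmin_h\Psi_n(h)$), and for a fixed $h$ let $j^\star(h)$ index the population maximizing $\|h_0-h\|_{L^2(D_j)}^2$. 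One preliminary remark used repeatedly: the label noise is automatically a.s.\ bounded, $|Y-h_0(X)|\le 2U_Y$, since $|h_0(X)-h_j(X)|=|\bE[Y-h_j(X)\mid X]|\le U_Y$; this makes the summands $2(Y-h(X))f(X)-a_nf(X)^2$ a.s.\ bounded (with constant determined by $U_Y,U$) on the bounded sub-classes we work with, as Talagrand's inequality requires.

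Step 1, the technical core: establish a uniform, \emph{self-bounding} empirical-process bound. On an event of probability $1-\zeta$, simultaneously over $j\in[M]$, over $h\in\cH$ in the norm ball that $\hat h$ will be shown to inhabit, and over all $f\in\cF$,
\[
\big|(\bE_{j,n}-\bE_{D_j})[\,2(Y-h(X))f(X)-a_n f(X)^2\,]\big|\;\lesssim\;\delta_n\|f\|_{L^2(D_j)}+\delta_n^2(U_Y^2\wedge U^2)+\tfrac{\delta_n^2U^2}{B^2}\|f\|^2 .
\]
The point of the $-a_n f^2$ term (strong concavity in $f$) together with the penalty $-\lambda\|f\|^2$ is that the empirical and population maximizers of $f\mapsto\bE[2(Y-h)f-a_nf^2]-\lambda\|f\|^2$ lie in an $L^2(D_j)$-ball whose radius is self-bounded by the objective value, so Talagrand's inequality applied to $\cF$ (already a star hull) intersected with the $\|\cdot\|\le B$ ball --- where $\delta_n$ is the critical radius --- yields the localized rate; the $\log\log n+\log(M/\zeta)$ inside $\delta_n$ pays for the scale-peeling and the union over the $M$ populations. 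Before invoking this at $h=\hat h$, derive a crude a.s.\ norm bound on $\hat h$ from $\mu\|\hat h\|^2\le\Psi_n(\hat h)\le\Psi_n(h_*)\le\tfrac1{a_n}U_Y^2+\mu\|h_*\|^2$ (the last step by the empirical form of the completing-the-square identity and $|Y-h_*|\le U_Y$).

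Step 2, sandwich and algebra. Upper side: optimality of $\hat h$ gives $\Psi_n(\hat h)\le\Psi_n(h_*)$; transferring $L_{j,n}(h_*)$ to the population, discarding $-\lambda\|f\|^2$ and the min-term, Corollary~\ref{cor:sandwich-ub} yields $L_j(h_*)\le\tfrac1{a_n}\|h_0-h_*\|_{L^2(D_j)}^2\le\tfrac1{a_n}\mathrm{OPT}$, hence $\Psi_n(h_*)\le\tfrac1{a_n}\mathrm{OPT}+\mu\|h_*\|^2+O(\delta_n^2(U_Y^2\wedge U^2)/a_n)$. Lower side: at $j^\star=j^\star(\hat h)$ apply Lemma~\ref{lem:square} with the \emph{feasible} adversary $f=h_{j^\star}-\hat h\in\shull(\cH-\cH)$ (or $f=h_*-\hat h$, whichever has smaller $\|\cdot\|$-norm --- the origin of the $\|h_j\|^2\wedge\|h_*\|^2$), using $\|h_0-\hat h-a_nf\|_{L^2(D_{j^\star})}^2\le(1-a_n)\|h_0-\hat h\|_{L^2(D_{j^\star})}^2+a_n\|h_0-(\text{comparator})\|_{L^2(D_{j^\star})}^2$ by convexity; this gives $L_{j^\star}(\hat h)\ge\|h_0-\hat h\|_{L^2(D_{j^\star})}^2-(\epsilon\vee\mathrm{OPT})-\lambda\|\text{comparator}-\hat h\|^2$. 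Bounding $\lambda\|\text{comparator}-\hat h\|^2\le2\lambda\|\text{comparator}\|^2+2\lambda\|\hat h\|^2$ and using $\mu\ge4\lambda$ lets the $\|\hat h\|^2$ piece be reabsorbed into $+\mu\|\hat h\|^2$, so $\Psi_n(\hat h)\ge\|h_0-\hat h\|_{L^2(D_{j^\star})}^2-O(\epsilon+\mathrm{OPT})-O(\mu\|\text{comparator}\|^2)-(\text{deviation})$. Chaining the two sides and moving the self-bounding deviation $\delta_n\|h_0-\hat h\|_{L^2(D_{j^\star})}$ to the left by AM--GM --- costing a constant factor (the source of the $8$'s) and an additive $O(\delta_n^2(U_Y^2\wedge U^2)/a_n)$ --- yields the stated inequality. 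The hypotheses $\lambda\ge4c_0^2(U_Y\wedge U^2)\delta_n^2/B^2$ and $\mu\ge4(\lambda+c_0^2\delta_n^2U^2/B^2)$ are precisely what make the $\|f\|^2$- and $\|h\|^2$-proportional deviation pieces absorbable into $-\lambda\|f\|^2$ and $+\mu\|h\|^2$, and the $\wedge1$ in the norm term is the harmless fallback to the trivial bound once a comparator norm exceeds the $O(1)$ scale at which the critical radius for $\cF_B$ was computed.

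The main obstacle is Step 1: obtaining the deviation in the localized, self-bounding form $\delta_n\|f\|_{L^2}+\delta_n^2+\tfrac{\delta_n^2U^2}{B^2}\|f\|^2$ rather than a crude $\delta_n$-only bound, and then the bookkeeping that makes the three ``strong-convexity'' devices --- $-a_nf^2$ inside the expectation, $-\lambda\|f\|^2$, and $+\mu\|h\|^2$ --- match up with the three deviation pieces and get reabsorbed with exactly the claimed constants. A secondary difficulty is that $\hat h$ is only $\|\cdot\|$-bounded, and only after the fact, unlike $h_j$ and $h_*$; the concentration therefore has to be carried out on the correct bounded sub-class, which is why the crude norm bound on $\hat h$ must be extracted before Step 1 is invoked at $h=\hat h$.
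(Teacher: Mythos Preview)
Your high-level strategy (localized concentration plus the completing-the-square sandwich) is the same as the paper's, but the concentration step as you describe it has a genuine gap, and the paper organizes the argument differently precisely to avoid it.

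\textbf{The gap.} Your Step~1 asks for a bound of the form
\[
\bigl|(\bE_{j,n}-\bE_{D_j})[\,2(Y-h(X))f(X)-a_nf(X)^2\,]\bigr|
\;\lesssim\;
\delta_n\|f\|_{L^2(D_j)}+\delta_n^2(U_Y^2\wedge U^2)+\tfrac{\delta_n^2U^2}{B^2}\|f\|^2
\]
\emph{uniformly over $h$ in the ball that $\hat h$ inhabits}. But the theorem only gives you the critical radius of $\cF_B$; you have no critical-radius control over any $\cH$-ball, nor over the product class $\{(y-h(x))f(x)\}$. The crude bound $\mu\|\hat h\|^2\le\Psi_n(h_*)$ does not rescue this: with $\mu\asymp\delta_n^2$ it only gives $\|\hat h\|=O(1/\delta_n)$, so even a rescaling trick on the $h$-side would blow up. In short, the step where you ``invoke Step~1 at $h=\hat h$'' is exactly the step that cannot be justified under the stated hypotheses.

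\textbf{How the paper avoids it.} The paper never concentrates uniformly over $h$. Instead it (i) writes $\|h_0-\hat h\|_{L^2(D_j)}^2\le 2\epsilon+2\|h_j-\hat h\|_{L^2(D_j)}^2$; (ii) applies concentration to the \emph{single} function $(h_j-\hat h)^2$, which after rescaling is a square of an element of $\cF_B$, so $\delta_n$ for $\cF_B$ suffices (this is where the $\tfrac{U^2}{B^2}\|h-h_j\|^2$ term appears); (iii) recognizes $\bE_{j,n}[(h_j-\hat h)^2]$ as a specific adversarial moment and uses the key symmetry step
\[
\sup_{f}\Bigl\{\bE_{j,n}[\,2(h_j-\hat h)f-2a_nf^2\,]-2\lambda\|f\|^2\Bigr\}\;\le\;L_{j,n}(h_j)+L_{j,n}(\hat h),
\]
which decouples $\hat h$ from the concentration; (iv) swaps $L_{j,n}(\hat h)$ for $L_{j,n}(h_*)$ by optimality, picking up $\mu(\|h_*\|^2-\|\hat h\|^2)$; and (v) bounds $L_{j,n}(h_j)$ and $L_{j,n}(h_*)$ by concentration at the \emph{fixed} centers $h_j$ and $h_*$, where $Y-h_j$ and $Y-h_*$ are bounded by $U_Y$ by assumption. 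Step (iii) --- the symmetry split --- is the device your proposal is missing; it replaces your crude norm bound on $\hat h$ entirely. Relatedly, your account of where the $8$'s come from is off: in the paper they arise as $2\times 2\times 2$ from the triangle inequality in (i), the concentration doubling in (ii), and the $\tfrac{2}{a_n}$ in (v), not from AM--GM on a self-bounding deviation.
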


We defer the proof as well as alternative versions of this theorem, including the unregularized version, to Appendix~\ref{app:finite-sample-details}.
\begin{remark}[Advantages of Regularization]\label{rem:regularization-finite}
The regularized version (Theorem~\ref{thm:interpolated-reg-mse}) above requires an upper-bound on the critical radius of $\cF_B$ whereas the unregularized version (Theorem~\ref{thm:interpolate-mse}) requires an upper-bound on the critical radius of the entire $\cF$, not $\cF_B$, and an upper-bound on the critical radius of the cross-product of $\cF$ and $\cH$. Moreover, the regularized version requires an absolute upper bound $U$ only on each $f \in \cF_B$, but the unregularized version requires one on each $f \in \cF$. Finally, note how the regularized version adapts to the norm of the $h_j$ and $h_*$.
\end{remark}
\begin{remark}
    In the above case, choosing $a_n=\frac{1}{2}$ yields the best result in terms of square loss. A smaller $a_n$ that decays with $n$ implies better multi-accuracy guarantees, which we show in Appendix~\ref{app:ma-guarantees}. Hence, $a_n$ serves as a tradeoff between these two criteria. 
\end{remark}

\subsection{Sample Complexity for Linear Function Classes} 
We consider the special case where the hypothesis space consists of norm-constrained linear functions for a known feature map $\phi(\cdot)\in \R^d$, i.e., $\cH = \{\alpha^\top\phi(\cdot): \alpha \in \R^d, \|\alpha\|_2 \leq A \}$.
By known results in localized complexities \cite{wainwright2019high}, 
the critical radius $\delta_n$ of a linear function class  whose functions are uniformly and absolutely bounded by $U$ can be upper bounded by any solution to the inequality $c_0 \delta \sqrt{d/n} \leq \delta^2 / U$, which can be taken to be of the order 
$\delta_n= O \left ( \max \left \{ \sqrt{\tfrac{\log\log(n) + \log(M/\zeta)}{n}},  U\sqrt{\tfrac{d}{n}} \right \} \right )$, when $\phi(\cdot)\in \R^d$. If we also assume that $\|\phi(X)\|_2\leq V$, a.s., then we also know that the $\ell_2$ norm constrained function space, with $\|\alpha\|_2\leq 1/V$ contains functions that are uniformly and absolutely bounded by $1$. 

Assuming that $|Y-\alpha_j^\top\phi(X)|\leq U_Y$ and $|Y-\alpha_*^\top\phi(X)|\leq U_Y$ (for some $U_Y\geq 1$), we can apply Theorem~\ref{thm:interpolated-reg-mse}, with $B=1/V$ and $U=1$ and $\lambda = 4 c_0^2\delta_n^2 U_Y V^2$ and $\mu= 4\left(\lambda + c_0^2\delta_n^2 V^2\right)$, we can get the statistical rate: with probability $1-\zeta$,
\begin{align*}
    \max_{D\in \cD} \|h_0 - \hat{h}\|_{L^2(D)}^2 
     \leq 16\min_{h\in \cH} \max_{D\in \cD} \|h_0 - h\|_{L^2(D)}^2 + 20\epsilon 
     + O\left(\frac{\tilde{d}}{n} U_Y V^2 \max_{j=1}^M\{\|\alpha_j\|_2^2 \wedge \|\alpha_*\|_2^2\wedge 1\} + \frac{\tilde{d}}{n} U_Y^2\right)
\end{align*}
where $\tilde{d} = \max \{ d, \log \log (n) + \log (M / \zeta) \}$.

Similar results can be obtained for the norm-constrained un-regularized variant, i.e. where $\cH_B=\{\alpha^\top\phi(\cdot): \alpha \in \R^d, \|\alpha\|_2\leq B\}$, for some constant $B$ (see Appendix~\ref{app:lin-samp-cxty}). 

\section{Efficient Computation} \label{sec:comp}
We also show how to compute approximate solutions to the empirical minmax adversarial moment problem for a number of practical choices of hypothesis spaces. For simplicity, we set the parameter $a_n = 1$, so that the moment violation criterion is
\begin{align*}
    L_{j,n}(h, f) = \bE_{j, n}\left [ 2 (Y - h(X))f(X) - f(X)^2 \right ].
\end{align*}
We obtain approximate solutions by invoking the classic result of \cite{freund1999adaptive}, which frames the problem as a zero-sum game between two players respectively attempting to minimize and maximize the moment violation criterion $L_{j, n}(h, f)$. In the settings we consider, each player can deploy \textit{no-regret} algorithms to obtain a sequence of solutions whose difference from the optimum is bounded by some term that converges to zero as the number of rounds in the game increases. 

In the most general case, when $\cH$ is a convex hypothesis space, our oracle-efficient algorithm must employ a separate adversary for each distribution, meaning that $O(M)$ oracle calls must be made during each iteration, and thus not significantly improving over the MRO approach, which makes $O(M)$ initial oracle calls to compute an ERM over each distribution and then $O(M)$ calls per iteration.

However, the benefits of our approach are highlighted in two natural special cases of the more general convex setting: linear spaces and their generalization, reproducing kernel Hilbert spaces. In this special case, we show that our adversarial moments objective simplifies to a closed-form solution for the optimal adversarial weights, eliminating the need for $M$ separate adversaries that must be updated at each iteration (see Lemma~\ref{lem:rewrite-linear}). This means that during each iteration, we only need to compute a single costly matrix inverse operation to calculate the learner's best response. 

Our theoretical results still show that the runtime of each iteration is at least $O(M)$ due to additional matrix multiplication operations that are necessary, however we are able to show empirical computational savings for this approach because it reduces the number of costly matrix inverse operations to $O(1)$ per-iteration. We empirically demonstrate the computational savings of our approach for the case of reproducing kernel Hilbert spaces in Section~\ref{subsec:exp-rkhs}, and additionally show through experiments that a similar closed-form property can be applied in the case of neural networks by extending the adversarial model's feature vector to contain group-membership indicators and allowing us to simultaneously train the $M$ adversaries with a single model (Section~\ref{subsec:exp-neural}).

\subsection{Convex Spaces}
We begin with our most general result for convex hypothesis spaces. We consider the case that $\cH$ is a convex hypothesis space and $\cF = \shull(\cH - \cH)$. Note that $\shull(\cH - \cH)$ is also a convex hypothesis space. 

For this general setting, we obtain oracle-efficient algorithms for obtaining approximate solutions to the unregularized version of the problem (see Appendix \ref{app:convex-spaces} for more details).

\begin{theorem}[Approximate Solution for Convex Spaces]
\label{thm:approx-sol-convex}
Let $\cH$ be a compact, convex hypothesis space and suppose that $\cF = \shull(\cH - \cH)$. 
There is an oracle-efficient algorithm  that runs for $T$ iterations and returns a $\gamma_T$-additive approximate solution to the empirical min-max adversarial moment problem, 
where 
$
\gamma_T = O ( \log (T)/T +  \sqrt{ \log(M) / T }  )
$ and $O(M)$ oracle calls are made during each iteration. 

\end{theorem}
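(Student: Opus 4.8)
The plan is to cast the empirical min-max adversarial moment problem as a two-player zero-sum game and appeal to the Freund--Schapire machinery \cite{freund1999adaptive}: if the learner and the adversary each run a no-regret algorithm against the bilinear-ish payoff $L_{j,n}(h,f)$ (aggregated over $j$), the average of their plays converges to the value of the game at a rate governed by the sum of the two regret terms. So the theorem reduces to exhibiting concrete no-regret strategies for both players and bookkeeping the resulting rate.

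First I would fix the payoff. The learner picks $h \in \cH$; the adversary picks a distribution-indexed tuple together with the weighting over the $M$ groups, i.e. $(f_1,\dots,f_M)\in \cF^M$ and a point $w$ in the $M$-simplex, and the payoff is $\sum_{j=1}^M w_j\, \bE_{j,n}[2(Y-h(X))f_j(X) - f_j(X)^2]$. By Lemma~\ref{lem:square} (completing the square with $c=1$), for each fixed $h$ the inner maximization over $f_j$ is a concave problem, and the max over $w$ just selects the worst group; since $\cH$ is compact and convex and $\cF=\shull(\cH-\cH)$ is compact and convex, a minimax theorem applies and the game has a well-defined value equal to the optimum of the empirical objective. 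Then I would assign algorithms: the $w$-player runs multiplicative weights / Hedge over the $M$ experts, contributing the $O(\sqrt{\log(M)/T})$ term; the $f$-player, for each fixed $j$, is solving an online concave maximization over the convex set $\cF$ and can run an online gradient / Follow-the-Regularized-Leader step via the optimization oracle, and likewise the $h$-player runs an online (convex) procedure against the linear-in-$h$ losses — these contribute the $O(\log(T)/T)$ term (Hedge-style or FTL-style logarithmic regret on the strongly structured side; at worst $O(1/\sqrt T)$, but the paper claims $\log(T)/T$, so I would use the fact that, after completing the square, the relevant per-step subproblems are strongly concave/convex or that best-response dynamics à la \cite{freund1999adaptive} with one side playing best response gives the $\log(T)/T$ rate). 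Each iteration requires: one Hedge update on $w$ (cheap), one oracle call per group to update/best-respond each $f_j$, and one oracle call to update $h$ — hence $O(M)$ oracle calls per round, matching the statement.

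The key steps in order: (1) write the game and verify compactness/convexity so the value exists and equals the target objective; (2) invoke the no-regret-dynamics-converge-to-value theorem, so that the time-averaged learner play $\bar h_T$ satisfies $\max_{D,f} L(\bar h_T, f) \le \mathrm{val} + R^{\text{learner}}_T/T + R^{\text{adv}}_T/T$; (3) plug in Hedge regret $R_T = O(\sqrt{T\log M})$ for the group-weight player and logarithmic regret $O(\log T)$ for the structured players to get $\gamma_T = O(\log(T)/T + \sqrt{\log(M)/T})$; (4) count oracle calls per iteration as $O(M)$. I would present (1)--(2) at the level of a citation to \cite{freund1999adaptive} plus a remark on why completing the square (Lemma~\ref{lem:square}) legitimizes the concavity needed, and do (3)--(4) as routine accounting.

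The main obstacle I anticipate is justifying the $\log(T)/T$ piece rather than the generic $1/\sqrt T$: this needs either (a) one player to play exact best response each round (a genuine oracle that solves $\max_{f\in\cF}$ or $\min_{h\in\cH}$ of the current linear objective in closed form or via the assumed oracle), so that the classical boosting-style analysis of \cite{freund1999adaptive} yields logarithmic regret for that side, or (b) exploiting strong concavity introduced by the $-f(X)^2$ term to get logarithmic-regret online algorithms. I would go with (a): let the adversary's $f$-updates and the learner's $h$-update be best responses (exact oracle minimization/maximization of the current averaged linear functional), keep only the $w$-player adaptive via Hedge, and then the regret decomposition gives exactly $O(\log(T)/T)$ from the best-responding sides and $O(\sqrt{\log(M)/T})$ from Hedge. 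A secondary concern is making sure the aggregate best-response over $(f_1,\dots,f_M)$ decomposes into $M$ independent per-group oracle calls — which it does, because the payoff is separable across $j$ given $w$ and $h$ — and that the oracle for $\cF=\shull(\cH-\cH)$ reduces to a constant number of calls to the $\cH$-oracle.
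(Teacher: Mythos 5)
Your overall plan matches the paper's actual proof (Appendix~\ref{app:convex-spaces}): reparametrize via the function values $\alpha_{ij}=h(X_{ij})$, $\beta_{ij}=f(X_{ij})$ to land in a finite-dimensional convex–concave problem, run Hedge on the group weights $w$, run an oracle-based update on the $M$ test-function blocks, have the learner best respond, and invoke Freund--Schapire to translate the two regret bounds into a $\gamma_T$-approximate equilibrium. The accounting of $O(M)$ oracle calls ($M$ calls to the $\cF$-oracle, one per group, plus one call to the $\cH$-oracle) is also exactly what the paper does.

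The one place you should tighten is your discussion of the $\log(T)/T$ term. You write that you would ``go with (a)'' — a best-responding side yielding logarithmic regret via the classical boosting-style analysis. That is not what happens: when one player plays an exact best response to the \emph{current} play of a no-regret opponent, the Freund--Schapire decomposition gives that side regret $\le 0$, not $O(\log T)$. In the paper, the Learner is the one that plays a genuine best response (Lemma~\ref{lem:best-response-convex}, regret $\le 0$); the $O(\log T)$ term comes from the Adversary's $\beta$-update, which is Follow-the-Leader against the running average $\overline{\alpha}_t$, and the logarithmic regret is precisely your option (b): the $-\|\beta_j\|_2^2$ term makes the per-round loss strongly concave in $\beta_j$, so FTL has $O(\log T)$ regret (the paper cites the Hazan et al.\ / Shalev-Shwartz analysis). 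It is true that FTL is ``best response to the averaged history,'' which may be what you had in mind; but it is worth being precise, since simultaneous current-play best-response for both $h$ and $f$ would not fall under any standard no-regret-dynamics guarantee and would break the chain of inequalities in Lemma~\ref{lem:convex-regret-appendix}. With that correction — learner best responds (regret $0$), $\beta$-player runs FTL (regret $O(\log T)$ from strong concavity), $w$-player runs Hedge (regret $O(\sqrt{T\log M})$) — your argument is the paper's.
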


\subsection{Linear Spaces}\label{subsec:linear}

We now show that when $\cH$ is a linear hypothesis space, we can significantly simplify the adversarial moments objective to get a closed form-solution for the adversary's best response without the need for additional oracle calls to each adversary.

We consider the case where the hypothesis space consists of norm-constrained linear functions for a known feature map $\phi(\cdot)\in \R^d$, i.e., $\cH = \{\alpha^\top\phi(\cdot): \alpha \in \R^d, \|\alpha\|_2 \leq A \}$. Note that $\cH$ is a compact space. Suppose that we also have $\cF = \cH$. Since $\cH-\cH$ is the same space of linear functions, we have $\cF\supseteq \shull(\cH-\cH)$.

We observe that the empirical regularized min-max adversarial moment problem can be rewritten in a form that can be approximately solved via no-regret dynamics.
\begin{lemma}[Adversarial Moment Problem, Restated for Linear Spaces]\label{lem:rewrite-linear}
Let $X_{ij}, Y_{ij}$ denote the $i$-th sample from the $j$-th dataset. Furthermore, let  $y_j = [Y_{1j}; \dots; Y_{nj}] \in \mathbb R^{n}$ and $\Phi_j = [\phi(X_{1j})^\top; \dots; \phi(X_{nj})^\top] \in \mathbb R^{n \times d}$. Then
\begin{align*}
\nonumber
 \min_{h \in \cH} \max_{j \in [M]} \max_{f \in \cF} L_{j,n}(h, f) - \lambda \|f\|^2 + \mu \|h\|^2  =\min_{\substack{\alpha \in \mathbb R^d \\ \|\alpha\|_2 \leq A}} \max_{w \in \Delta(M)}  \frac{1}{n} \sum_{j=1}^M w_j \left (\kappa_j - 2 \nu_j^\top \alpha + \alpha^\top \Sigma_j \alpha \right ) 
\end{align*}
where
\begin{align*}
\kappa_j :=~& y_j^\top Q_j y_j  &
\nu_j :=~& \Phi_j^\top Q_j^\top y_j\\
\Sigma_j :=~& \Phi_j^\top Q_j \Phi_j + \mu n I &
Q_j :=~& \Phi_j (\Phi_j^\top \Phi_j + n \lambda I)^{-1} \Phi_j^\top
\end{align*}
\end{lemma}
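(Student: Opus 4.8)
The plan is to reduce the inner maximisation over $f\in\cF=\cH$ to a closed-form expression by exploiting the fact that, for a linear adversary $f = \beta^\top\phi(\cdot)$, the objective $L_{j,n}(h,f) - \lambda\|f\|^2$ is a concave quadratic in $\beta$, so its maximum can be computed explicitly. Concretely, I would write $h = \alpha^\top\phi(\cdot)$ and $f = \beta^\top\phi(\cdot)$ and expand the empirical moment $\bE_{j,n}[2(Y-h(X))f(X) - f(X)^2] - \lambda\|\beta\|^2$ in terms of the data matrices $\Phi_j, y_j$. This gives something of the form $\tfrac{2}{n}\beta^\top\Phi_j^\top(y_j - \Phi_j\alpha) - \tfrac{1}{n}\beta^\top\Phi_j^\top\Phi_j\beta - \lambda\|\beta\|^2$, a concave quadratic in $\beta$ whose unconstrained maximiser is $\beta_j^* = (\Phi_j^\top\Phi_j + n\lambda I)^{-1}\Phi_j^\top(y_j - \Phi_j\alpha)$ and whose optimal value is $\tfrac1n (y_j-\Phi_j\alpha)^\top \Phi_j(\Phi_j^\top\Phi_j + n\lambda I)^{-1}\Phi_j^\top (y_j - \Phi_j\alpha) = \tfrac1n (y_j-\Phi_j\alpha)^\top Q_j (y_j-\Phi_j\alpha)$ with $Q_j$ as defined in the statement.

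Next I would expand the quadratic form $(y_j - \Phi_j\alpha)^\top Q_j (y_j-\Phi_j\alpha) = y_j^\top Q_j y_j - 2\alpha^\top\Phi_j^\top Q_j y_j + \alpha^\top\Phi_j^\top Q_j\Phi_j\alpha$, and then fold the learner's regularisation term $\mu\|h\|^2 = \mu\|\alpha\|^2$ into the $\alpha$-quadratic part; since this term doesn't depend on $j$, multiplying by $w_j$ and summing over $j\in[M]$ with $w\in\Delta(M)$ leaves it unchanged, which lets me absorb it as $\mu\|\alpha\|^2 = \tfrac1n \alpha^\top(\mu n I)\alpha$ into each $\Sigma_j := \Phi_j^\top Q_j\Phi_j + \mu n I$. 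Matching coefficients then yields $\kappa_j = y_j^\top Q_j y_j$, $\nu_j = \Phi_j^\top Q_j y_j$ (using that $Q_j$ is symmetric, so $\Phi_j^\top Q_j^\top y_j = \Phi_j^\top Q_j y_j$), and the claimed identity. The last move is to replace $\max_{j\in[M]}$ of the resulting per-group objectives by $\max_{w\in\Delta(M)}\sum_j w_j(\cdots)$, which is valid because the maximum of finitely many numbers equals the maximum of their convex combinations.

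A couple of points need care rather than real difficulty. First, the adversary is norm-constrained ($\|\beta\|_2\le A$), so strictly the unconstrained maximiser $\beta_j^*$ is only the true maximiser when it lies in the ball; the clean closed form in the statement implicitly treats the constraint as inactive (or takes $A$ large enough), and I would either note this or argue that at the relevant scales the constraint does not bind. Second, I should double-check the symmetry and idempotency-flavoured properties of $Q_j = \Phi_j(\Phi_j^\top\Phi_j + n\lambda I)^{-1}\Phi_j^\top$ — it is symmetric and positive semidefinite, which is what makes the optimal value have the stated quadratic form and makes $\nu_j$ well-defined regardless of the transpose placement.

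The main obstacle, such as it is, is bookkeeping: keeping the factors of $n$, the roles of $\lambda$ (adversary regularisation, appearing inside the inverse) and $\mu$ (learner regularisation, appearing as $\mu n I$ in $\Sigma_j$) straight through the completion of the square, and correctly handling the interchange of the inner $\max_f$ with everything else (justified because for each fixed $\alpha$ and $j$ the maximisation over $f$ is an independent concave problem). There is no deep idea beyond ``the adversary's problem is an unconstrained ridge regression, solve it in closed form'' — Lemma~\ref{lem:square} already tells us the inner value is a difference of $L^2$ norms, and here we are just instantiating that for the linear/RKHS case with an explicit basis.
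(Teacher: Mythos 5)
Your proof follows the same route as the paper: parametrize $h=\alpha^\top\phi(\cdot)$ and $f=\beta^\top\phi(\cdot)$, solve the inner maximization over $\beta$ in closed form from the first-order condition of the concave quadratic, substitute back to get the $Q_j$-quadratic form in $\alpha$, fold in $\mu\|\alpha\|^2$, and replace $\max_j$ with $\max_{w\in\Delta(M)}\sum_j w_j(\cdot)$. The caveat you flag, namely that the adversary's norm constraint $\|\beta\|_2\le A$ is implicitly treated as inactive when the unconstrained maximizer is used, is also elided in the paper's proof, so you have in fact noticed a small gap that the paper itself does not address.
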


In the related zero-sum game, the two players take turns playing $w_t \in \Delta(M)$ and $\alpha_t \in \mathbb R^d$ respectively, where $w_t$ is updated according to the Multiplicative Weights Algorithm and $\alpha_t$ is the best response to $w_t$ (see Appendix \ref{app:linear-spaces} for more details). This yields an $O(\sqrt{\log(M)/T} )$-approximate solution to our problem.  
\begin{theorem}[Approximate Solution for Linear Space]\label{thm:approx-sol-linear}
Let $\cH = \{\alpha^\top\phi(\cdot): \alpha \in \R^d, \|\alpha\|_2 \leq A \}$ be a hypothesis space and suppose that $\cF = \cH$.
There is an algorithm that runs for $T$ iterations and returns a $\gamma_T$ additive approximate solution to the regularized empirical min-max adversarial moment problem, where $\gamma_T = O(\sqrt{ \log (M) / T})$ and the runtime for each iteration is $O(d^3 + Md^2)$.
\end{theorem}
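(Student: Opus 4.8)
The plan is to read off the zero-sum game furnished by Lemma~\ref{lem:rewrite-linear}, run standard no-regret dynamics on it, and convert the regret guarantees into an additive approximation guarantee via the classical reduction of \citet{freund1999adaptive}.

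First I would record the structural properties of the (quadratic-in-$\alpha$, linear-in-$w$) payoff $g(\alpha,w) := \frac1n\sum_{j=1}^M w_j\,(\kappa_j - 2\nu_j^\top\alpha + \alpha^\top\Sigma_j\alpha)$. Because $\lambda>0$, the matrix $\Phi_j^\top\Phi_j + n\lambda I$ is positive definite, so $Q_j = \Phi_j(\Phi_j^\top\Phi_j+n\lambda I)^{-1}\Phi_j^\top\succeq 0$ and hence $\Sigma_j = \Phi_j^\top Q_j\Phi_j + \mu n I\succeq \mu n I\succ 0$. Consequently $g(\cdot,w)$ is $\mu n$-strongly convex on the compact convex ball $\{\alpha:\|\alpha\|_2\le A\}$ for every $w\in\Delta(M)$, while $g(\alpha,\cdot)$ is linear on the compact convex simplex $\Delta(M)$. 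Sion's minimax theorem then applies, so the game has a well-defined value $V$, and by Lemma~\ref{lem:rewrite-linear} this $V$ equals the optimum of the regularized empirical min-max adversarial moment problem. I would then set up the dynamics exactly as in the text: at round $t$ the maximizing $w$-player plays $w_t$ by Multiplicative Weights/Hedge on the gains $g(\alpha_\tau,\cdot)$, $\tau<t$, and the minimizing $\alpha$-player plays the (unique, by strong convexity) best response $\alpha_t = \argmin_{\|\alpha\|_2\le A} g(\alpha,w_t)$. Under the boundedness assumptions already in force (bounded feature map and labels, $\|\alpha\|_2\le A$), the per-coordinate gains $\frac1n(\kappa_j - 2\nu_j^\top\alpha + \alpha^\top\Sigma_j\alpha)$ lie in a range of width $O(1)$, so Hedge with the standard step size has regret $\mathrm{Reg}_T = O(\sqrt{T\log M})$.

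For the runtime, I would observe that the best response has a closed form. Writing $\bar\Sigma_t = \sum_j w_{t,j}\Sigma_j$ and $\bar\nu_t = \sum_j w_{t,j}\nu_j$, the unconstrained minimizer of $g(\cdot,w_t)$ is $\bar\Sigma_t^{-1}\bar\nu_t$; if this violates $\|\cdot\|_2\le A$, the constrained minimizer is $(\bar\Sigma_t + \eta_t I)^{-1}\bar\nu_t$ for the unique $\eta_t>0$ making the norm equal $A$ (a one-dimensional monotone root-find), so it is still obtained by $O(1)$ dense $d\times d$ linear solves. Assembling $\bar\Sigma_t$ from the precomputed $\Sigma_j$ costs $O(Md^2)$, each solve costs $O(d^3)$, and the Hedge bookkeeping---which needs the per-group gains $g(\alpha_t,\mathbf{e}_j)$, i.e.\ $\Sigma_j\alpha_t$ for each $j$---costs $O(Md^2)$; computing $\{\kappa_j,\nu_j,\Sigma_j\}$ is a one-time preprocessing cost outside the per-iteration budget. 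Hence each iteration runs in $O(d^3 + Md^2)$, as claimed.

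Finally I would carry out the regret-to-gap conversion. Returning $\hat h = \bar\alpha^\top\phi(\cdot)$ with $\bar\alpha = \frac1T\sum_{t=1}^T\alpha_t$, convexity of $g(\cdot,w)$ gives $\max_w g(\bar\alpha,w)\le \frac1T\max_w\sum_t g(\alpha_t,w) = \frac1T\sum_t g(\alpha_t,w_t) + \mathrm{Reg}_T/T$. Since $\alpha_t$ best-responds, $\sum_t g(\alpha_t,w_t) = \sum_t\min_\alpha g(\alpha,w_t)\le\min_\alpha\sum_t g(\alpha,w_t) = T\min_\alpha g(\alpha,\bar w)\le TV$, the last step being weak duality. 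Combining, $\max_w g(\bar\alpha,w)\le V + \mathrm{Reg}_T/T = V + O(\sqrt{\log M/T})$, the desired $\gamma_T$-additive bound. The conceptual content is all standard once Lemma~\ref{lem:rewrite-linear} is in hand; the main thing requiring care is the runtime accounting---in particular confirming that the constrained best response needs only $O(1)$ dense $d\times d$ solves (so that the norm projection does not blow up the per-round cost), and that it is the matrix-assembly and gain-evaluation terms, not the $O(d^3)$ inverse, that carry the factor $M$---together with making the $O(1)$ payoff range explicit from the boundedness assumptions, so that only $\log M$ (and no $\mathrm{poly}(n)$ factor) enters $\gamma_T$.
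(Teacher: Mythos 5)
Your proposal is correct and follows essentially the same route as the paper: invoke Lemma~\ref{lem:rewrite-linear}, run Multiplicative Weights for the $w$-player against exact best responses for the $\alpha$-player (Algorithm~\ref{alg:best-response-linear}), and convert the two regret bounds (Lemmas~\ref{lem:best-response-linear} and~\ref{lem:mw-linear}) into an additive equilibrium gap via Theorem~\ref{thm:approx-equilibrium}. You are in fact slightly more careful than the paper on the best-response step: Lemma~\ref{lem:best-response-linear} treats the minimization as unconstrained over $\R^d$ and uses a pseudo-inverse, while you correctly observe that when the unconstrained minimizer $\bar\Sigma_t^{-1}\bar\nu_t$ violates $\|\alpha\|_2\le A$, the constrained minimizer is $(\bar\Sigma_t+\eta_t I)^{-1}\bar\nu_t$ for an appropriate $\eta_t>0$, which keeps the per-iteration cost at $O(d^3+Md^2)$.
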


\subsection{Reproducing Kernel Hilbert Spaces}
Now, consider the case where the hypothesis class $\cH$ is a Reproducing Kernel Hilbert space (RKHS) and suppose that $\cF = \cH$. Since an RKHS is a closed linear space, the classes $\cH$ and $\cH - \cH$ refer to the same RKHS and thus $\cF\supseteq \shull(\cH-\cH)$.

As with linear spaces, we are able to efficiently obtain an $O(\sqrt{\log(M)/T} )$-approximate solution for the empirical regularized min-max adversarial moment problem.

\begin{theorem}[Approximate Solution for RKHS]
    \label{thm:approx-sol-rkhs}
Let $\cH$ be a compact RKHS and suppose that $\cF = \cH$.
There is an algorithm that runs for $T$ iterations and returns a $\gamma_T$-additive approximate solution to the empirical (regularized) min-max adversarial moment problem %
where  $\gamma_T = O(\sqrt{ \log(M) / T } )$, and the runtime of each iteration is $O(n^3M^4)$.
\end{theorem}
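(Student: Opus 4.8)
The strategy is to collapse the infinite-dimensional RKHS problem onto a finite-dimensional one via the representer theorem and then reuse, essentially verbatim, the no-regret game dynamics developed for linear spaces in Lemma~\ref{lem:rewrite-linear} and Theorem~\ref{thm:approx-sol-linear}, with the feature map $\phi$ replaced by the canonical kernel sections and the role of the ambient dimension $d$ played by the total sample count $Mn$. First I would observe that for any fixed $h\in\cH$, the inner problem $\max_{f\in\cF}\bE_{j,n}[2(Y-h(X))f(X)-f(X)^2]-\lambda\|f\|^2$ depends on $f$ only through its values $(f(X_{1j}),\dots,f(X_{nj}))$ on the $j$-th dataset and through $\|f\|$, so by the representer theorem an optimizer exists in $\mathrm{span}\{k(X_{ij},\cdot):i\in[n]\}$. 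Completing the square as in Lemma~\ref{lem:square} (with $c=a_n=1$), or equivalently differentiating the resulting concave quadratic in the coefficient vector, gives a closed form for this value as a quadratic form $\tfrac1n r_j^\top Q_j r_j$ in the residual $r_j = y_j - K^{(j)}\beta$, where $K_j, K^{(j)}$ are the within-group and cross Gram matrices, $Q_j = K_j(K_j+n\lambda I)^{-1}$, and $\beta$ parametrizes $h=\sum_{j,i}\beta_{ij}k(X_{ij},\cdot)$ (for $\lambda=0$ one uses the Moore--Penrose pseudoinverse; the value is unchanged since $K_j r_j$ lies in the range of $K_j$).

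Since the plugged-in value $V_j(h)$ then depends on $h$ only through its values at the $Mn$ training points, the full objective $\max_j V_j(h)+\mu\|h\|^2$ depends on $h$ only through those $Mn$ values and $\|h\|^2=\beta^\top\mathbf K\beta$, so its minimizer can be taken in the $Mn$-dimensional span of all kernel sections. The problem becomes
\[
\min_{\beta:\ \beta^\top\mathbf K\beta\le A^2}\ \max_{w\in\Delta(M)}\ \sum_{j=1}^M w_j\,\tfrac1n\bigl(\kappa_j - 2\nu_j^\top\beta + \beta^\top\Sigma_j\beta\bigr),
\]
which is exactly the linear-space form of Lemma~\ref{lem:rewrite-linear} with $d\mapsto Mn$, with $\kappa_j,\nu_j,\Sigma_j$ now assembled from Gram matrices, and with the norm ball on the learner replaced by the ellipsoidal constraint $\beta^\top\mathbf K\beta\le A^2$.

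Given this reformulation I would run the same zero-sum game: the $w$-player uses Multiplicative Weights on $\Delta(M)$, and the $\beta$-player best-responds by solving the convex quadratic program with the ellipsoidal constraint, which is a trust-region-type subproblem solvable by introducing a Lagrange multiplier for $\beta^\top\mathbf K\beta\le A^2$ and bisecting over it. Because the payoff is linear in $w$ and convex in $\beta$ over compact sets, the classical analysis of \citet{freund1999adaptive} shows the average iterates form a $\gamma_T$-approximate saddle point with $\gamma_T = O(\sqrt{\log(M)/T})$, the logarithm coming from the $O(\sqrt{T\log M})$ regret of Multiplicative Weights over $M$ experts. For the per-iteration cost, the dominant operations are the kernel-algebra ones: the $M$ matrix inversions associated with the adversaries' closed forms (each up to $O((Mn)^3)$ when the adversaries are represented uniformly over all $Mn$ sections), forming and solving the $Mn\times Mn$ system for the learner's best response, and the $O(M)$ matrix products needed to assemble $\sum_j w_j\Sigma_j$ and evaluate the $M$ quadratic forms $\kappa_j-2\nu_j^\top\beta+\beta^\top\Sigma_j\beta$; together these give the stated $O(n^3M^4)$ bound.

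The main obstacle is not the game-dynamics step, which is a near black-box invocation of \citet{freund1999adaptive}, but making the finite-dimensional reduction fully rigorous and book-keeping the resulting linear algebra: one must verify that the representer-theorem reduction is valid at the level of the value function $V_j(\cdot)$ (so that substituting the closed-form adversary response does not change the outer $\min$ over $h$), that the reduced problem is genuinely convex-concave over compact domains — in particular that the ellipsoidal constraint behaves well even when $\mathbf K$ is singular, and that each $\Sigma_j\succeq 0$ so the best-response QP is tractable — and that the $\lambda=0$ edge case with pseudoinverses goes through. These are routine but must be checked to pin down both the approximation rate and the runtime.
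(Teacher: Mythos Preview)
Your proposal is correct and follows essentially the same route as the paper: apply the representer theorem twice (first to the inner adversary, then to the outer hypothesis) to reduce to a finite-dimensional convex--concave problem of exactly the form in Lemma~\ref{lem:rewrite-linear} with $d$ replaced by $Mn$, and then invoke the same Multiplicative-Weights-plus-best-response dynamics and the Freund--Schapire analysis to get the $O(\sqrt{\log(M)/T})$ rate. The only minor discrepancy is in your runtime accounting: the $M$ matrices $Q_j$ (and hence $\kappa_j,\nu_j,\Sigma_j$) depend only on the data and are precomputed once, not per iteration; the $O(n^3M^4)$ per-iteration cost in the paper comes from the single $(Mn)\times(Mn)$ inversion in the learner's best response together with the $O(M)$ dense $(Mn)\times(Mn)$ matrix products needed to form $\sum_j w_j\Sigma_j$ and evaluate the $M$ quadratic forms.
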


The approximate solution is obtained in a manner similar to the one for linear spaces. We reduce the problem to one that resembles the restated objective in Lemma \ref{lem:rewrite-linear} and solve via no-regret dynamics. In the zero-sum game, the players take turns playing $w_t \in \Delta(M)$ and $\alpha_t \in A \subseteq \mathbb R^{Mn}$, updated in a similar fashion (see Appendix \ref{app:rkhs-spaces} for more details).

\begin{remark}[Finite Dimensional Kernels and Kernel Approximations]\label{rem:poor-scaling-rkhs}
    Kernel methods have the drawback of scaling poorly with sample size due to the large matrix multiplications that are required. However, if the kernel is finite dimensional, or equivalently if one uses a finite dimensional approximation to the kernel matrices (e.g. via the Nystr\"om approximation), then the computation can be considerably alleviated, as suggested in Section \ref{subsec:linear}. 
\end{remark}

\begin{remark}
We separate the operations per iteration into matrix multiplications and inverses. Appealing to the Algorithms \ref{alg:best-response-linear} and \ref{alg:no-regret-rkhs}, we see that a single inversion is computed during the learner's best response and $O(M)$ multiplications are performed during the adversary's play. Note that the matrix operations are no longer for $d  \times d $ matrices, but $(Mn) \times (Mn)$ matrices. However, we can use a Nystr\"om approximation to reduce the dimension of the involved matrices, as noted in Remark \ref{rem:poor-scaling-rkhs}.
\end{remark}

\section{Experiments}
\label{sec:exp}
We compare our adversarial moment approach (Adv-moment) to groupDRO (DRO) and MRO on synthetically generated data where we fit a kernel ridge regression function and a real-world data set (CelebA) where we fit a neural network. We demonstrate the usefulness of our approach by showing that (1) the performance of our approach is better than or comparable to the baselines in terms of average and worst group accuracy (2) the runtime of our approach preserves better runtime as the number of groups increases compared to MRO. 

Code to replicate our experiments can be found at \url{https://github.com/chrisjung/moment_DRO}.

\subsection{Synthetic Data and RKHS}
\label{subsec:exp-rkhs}
To highlight the computational benefits of using our approach, we consider the task of achieving robustness for a synthetic dataset when the hypothesis space is a RKHS with a radial basis function (RBF) kernel: 
\[
\kappa(x, x') = \exp(-\gamma \|x - x\|^2).
\]

As noted in Remark \ref{rem:poor-scaling-rkhs}, kernel methods become costly as the number of samples increases. We circumvent this issue with the Nystr\"om method, approximating the full kernel matrix $K$ with a low rank matrix $\hat{K_r}$ obtained from the following procedure. Let $\hat{x}_1, \dots, \hat{x}_m$ be $m$ samples taken from the entire dataset. Construct the matrix
\[
\hat K_r = K_b \hat K^+ K_b^\top
\]
where $\hat{K} = [ \kappa(\hat{x}_i, \hat{x}_j)]_{m \times m}$ is the kernel matrix obtained from the $m$ samples, $K^+$ is the pseudo-inverse of $\hat{K}$, and $K_b = [ \kappa(x_i, \hat{x}_j)]_{N\times m}$. If we let $\{ (\hat{\lambda}_i, \hat{v}_i), i \in [m] \}$ denote the eigenpairs of $\hat{K}$, $\hat{V_r} = (\hat{v}_1, \dots \hat{v}_r)$ and $\hat{D} = \mathrm{diag}(\hat{\lambda}_1, \dots, \hat{\lambda}_r)$, then we can write the representation of a variable $x$ as $\hat{D}_r^{-1/2} \hat{V}_r^\top (\kappa(x, \hat{x}_1), \dots, \kappa(x, \hat{x}_m))^\top$ (cf.\ \cite{yang2012nystrom}).
This effectively reduces our problem to the linear setting.

\textbf{Data Generation Approach} We choose a setting that showcases the performance gains of MRO and our method in comparison to DRO. We imagine a setting with $2k \geq 2$ groups where half of the groups are defined by having an expected value of $y$ given $x$ as $x^2$, while the other half satisfy $\bE[y|x] = x^2 + 1$. Each half consists of k groups that evenly partition the interval $[-1, 1]$ into $k$ intervals of size $2/k$, with x-values uniformly distributed in each of these intervals. 

We add high levels of gaussian noise with variance drawn uniformly from the interval $[1, 2]$ for the groups with a true function of $x^2$, and add low levels of gaussian noise with variance drawn uniformly from $[0, 0.1]$ (See details of our data generation approach in Appendix~\ref{app:synthetic-details}. Thus, the minmax loss approach of DRO concentrates on high-noise groups and tends toward the lower parabola, while MRO and our moment method choose the parabola situated halfway between both true functions (Figure~\ref{fig:methods-performance}). 

\begin{figure}[ht]
    \centering
    \includegraphics[scale=0.4]{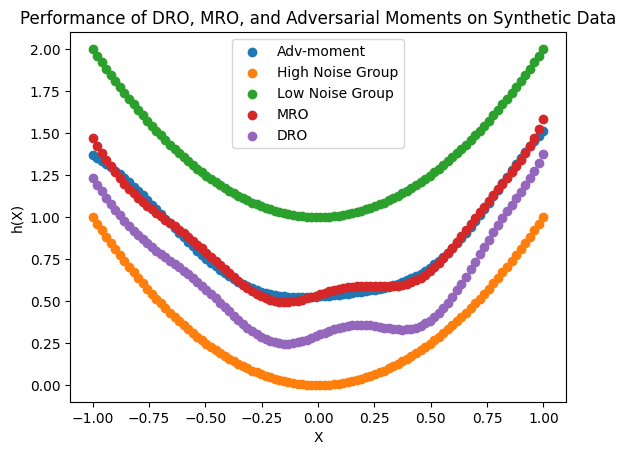}
    \caption{\small Performance of MRO, DRO, and our method (Adv-moment) on synthetic data generated with 50 groups each of size 100. We see that DRO favors the high-noise group while MRO and Adv-moments choose the regret-minimizing parabola halfway between the groups' true functions.}
    \label{fig:methods-performance}
\end{figure}

\textbf{Performance Comparison} In order to showcase the runtime savings of our group, we timed training MRO (square loss), DRO (square loss), and adversarial-moments models while increasing the number of groups (each of size 100) from $2$ (k = 1) to $50$ (k = 25). We use a Nystr\"om approximation of the kernel matrix with 100 components for each method. We see that MRO suffers as the number of groups grows, while our method retains runtime similar to that of DRO (Figure~\ref{fig:methods-runtime}). See Appendix~\ref{app:synthetic-details} for more details. 

\begin{remark}
    We note that runtime comparisons were computed using naive implementations of the MRO and DRO methods as presented in the original papers without taking advantage of any improvements that might be offered by the special choice of RKHS hypothesis spaces. We suspect that the runtime of both methods can be improved upon by leveraging the properties of the matrix multiplications performed at each step, but leave this improvement to future work and present these RKHS experiments as evidence toward the comparable performance of our method.  
\end{remark}

\begin{figure}[ht]
    \centering
    \includegraphics[scale=0.45]{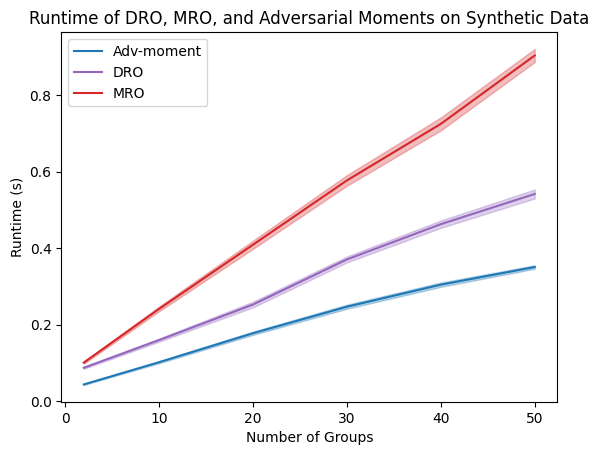}
    \caption{\small Runtime of MRO, DRO, and our method (Adv-moment) on synthetic data generated with increasing numbers of groups. Error bars represent standard error. We see that MRO suffers increasing slowdowns as the number of groups increases.}
    \label{fig:methods-runtime}
\end{figure}

\subsection{CelebA and Neural Network}
\label{subsec:exp-neural}

\begin{table*}
\caption{Performance metrics of DRO, MRO, and our method (Adv-moment) on standard (left) and noisy (right) versions of the CelebA dataset}
\label{tab:celebA}
\vskip 0.15in
\centering
\begin{tabular}{|l|c|c|c|c|c|c|c|c|}
\hline
       & \multicolumn{2}{c|}{Best Epoch} & \multicolumn{2}{c|}{Avg. Acc.} & \multicolumn{2}{c|}{\makecell{Group Acc. Regrets \\(Dark Hair F, Dark Hair M, Blond F, Blond M)}} & \multicolumn{2}{c|}{\makecell{\textbf{Worst Group} \\
       \textbf{Acc. Regret}}}                                            \\ \hline \hline
DRO    & \hphantom{<}5\hphantom{<}  & \hphantom{<}2 & 92.44 & 73.22                 & (8.07, 7.10, 6.77, 10.00) & (3.63, 2.8, 14.56, 28.89)  & 10.00 & 28.89 \\ \hline
MRO    & \hphantom{<}5\hphantom{<} & \hphantom{<}8 & 92.53 & 73.89                 & (8.01, 6.95, 6.77, 10.00) & (3.63, 4.01, 6.77, 10.00) & 10.00 & 10.00 \\ \hline
Moment & \hphantom{<}7\hphantom{<} & \hphantom{<}10 & 89.33 & 71.30                  & (7.42, 13.28, 15.28, 13.33) & (4.66, 8.59, 9.80, 10.00) & 13.33 & 10.00 \\ \hline
\end{tabular}
\vskip -0.1in
\end{table*}

\begin{table*}
\centering
\caption{Runtime of DRO, MRO, and our method (Adv-moment) on the standard CelebA dataset}
\label{tab:celebA-runtime}
\vskip 0.15in%
\begin{tabular}{|l|l|}
\hline
           & Runtime                             \\ \hline
DRO        & 4:49:27                             \\ \hline
MRO        & 5:02:12 (DRO) + 7:27:46 (4 ERM calls) \\ \hline
Adv-Moment & 7:27:43                             \\ \hline
\end{tabular}
\end{table*}

\textbf{Optimization \& Architecture}
The computational approach for the convex hypothesis space implies a heuristic stochastic gradient descent method for neural networks.  

In the convex space case, we maintain an adversary for each distribution $j \in [M]$. In the neural network case, we maintain a single neural network test function $f(X, j; \beta_{t})$ that takes in the features $X$ \emph{and} $j \in [M]$ that denotes which distribution $j \in [M]$ the point is sampled from. Due to the expressivity of neural networks, allowing the model to depend on group features in a non-linear way should result in different outcomes depending on the group feature $j$, roughly translating to training the $M$ separate adversaries we need for each group without actually needing to train $M$ separate networks.  We see empirically that this approach seems to preserve accuracy while reducing runtime compared to MRO.

More specifically, for the adversary, we use resnet50 \citep{he2016deep} to reduce each image to 10 features. These features are then concatenated with the group feature $j \in [M]$. These 11 features go through a two-layer network with a ReLU activation. For the learner, we use a vanilla resnet50 architecture.

We note that due to the adversarial nature of our approach, we must perform both stochastic descent and ascent on each minibatch to train the learner and adversary whereas MRO needs to only perform stochastic descent for each minibatch. Thus, our per-minibatch processing time is roughly doubled compared to MRO, but this trades off with eliminating the need to run ERM on $M$ different distributions and results in overall computational savings that scale better with the number of distributions. 

Now, let $h(\cdot; \alpha_t)$ denote a neural network parameterization of the candidate model. Then we could optimize these networks, as well as the weights $w$ over the populations, using no-regret learning, where we use online gradient ascent for each of the $\beta$ parameters, online gradient descent for the $\alpha$ parameters and exponential weights for $w \in \Delta(M)$. Letting $f^{(\beta)}$ denote the gradient of $f$ with respect to parameters $\beta$ and similarly $h^{(\alpha)}$, the gradient of $h$ with respect to parameters $\alpha$.
\begin{align*}
    U_{t,j,i} :=~& (y_j - h(X_{ij};\alpha_{t}) - f(X_{ij}, j; \beta_{t})) f^{(\beta)}(X_{ij};\beta_{t,j})\\
    V_{t,j,i} :=~& f(X_{ij}, j;\beta_{t})\,  h^{(\alpha)}(X_{ij};\alpha_t)\\
    \beta_{t+1} =~& \beta_{t} + \eta \, \frac{1}{M } \sum_{j \in [M]}\frac{1}{n_j} \sum_{i=1}^{n_j} U_{t,j,i}\\
    w_{t+1,j} \propto~& w_{t,j}\, \exp\left\{\eta_w\, \frac{1}{n_j} \sum_{i=1}^{n_j} U_{t,j,i}\right\}\\
    \alpha_{t+1} =~& \alpha_t - \eta_\alpha\, \sum_{j=1}^M w_{t,j} \frac{1}{n_j} \sum_{i=1}^{n_j} V_{t,j,i}.
\end{align*}
These gradient updates can also be replaced by stochastic versions (See Appendix~\ref{app:exp-neural} for more details). Furthermore, we employ a heuristic for our optimization; if $\cF$ is sufficiently expressive, $f$ that maximizes the adversarial moment violation with respect to a fixed model $h$ would be set to be $f = h - y$. We use this fact as a way to guide our computation of the optimal $f$ at each step by constraining the adversary to output a function of the form $f - h$. The gradient computation is changed as a result of this heuristic, but this is taken care of automatically with the use of automatic gradient computation of PyTorch.

\textbf{Settings: Dataset, Noise-Injection, Architecture }
We focus on the CelebA celebrity dataset \cite{liu2018large}. We follow the same set up as in \citet{sagawa2019distributionally}; the target label is the hair color (blond or dark hair) and the spurious feature is the gender, so there are four groups (blond male, blond female, dark hair male, dark hair female). See \cite{sagawa2019distributionally} for more specific details about the dataset. 

Because the groups of CelebA are defined in terms of their target label, it is not necessarily a good setting to showcase the benefits of minimax regret vs loss because the levels of label noise in each group are identically zero. In order to show the regret savings of our methods in the presence of differential group noise, we perform an experiment on an altered version of the dataset where we inject synthetic noise into a subset of group distributions. For the two majority groups, that is the dark-hair male and female, we change the label of half of their points to be determined by a random coin flip whose probability of a head is 0.5. 

We implemented our approach and MRO by modifying the original groupDRO codebase. We fixed some memory issues and turned off unnecessary gradient computations so that we can measure the runtime of all approaches more faithfully. 

For the learner, we use resnet50 \cite{he2016deep}, and for the adversary, we also use renset50 but modify the last layers. See the appendix for more details. For every approach, we report the test performance metrics on the epoch where the best worst-case group accuracy was achieved on the validation dataset in Table~\ref{tab:celebA}. Also, we include the runtime of each approach. We ran the experiments on an university cluster with access to GPU's.

\subsection{Discussion}
Our RKHS experiment on synthetic data demonstrates that, as expected, our method matches the square-loss regret performance of MRO while improving computation time as the number of groups increases. 
We additionally observe that when used in a deep learning setting to make predictions about the CelebA dataset, our method slightly under-performs in minimizing the group regret compared to the two other methods in the non-noisy version, but improves upon the groupDRO approach and performs comparably to MRO in a setting with differential group noise. Our method also provides a nearly two times faster runtime over the MRO approach due to MRO's need for additional ERM calls to compute the minimum group losses.

\section{Broader Impact}
Distributionally robust learning goals can be seen as a strong guarantee that a model not only does well \emph{overall} on a population, but also does as well as possible on the different subpopulations that make up this larger population. Moreover, our work focuses on minimizing minmax regret rather than minmax loss, which, as noted by \cite{agarwal2022minimax}, ensures that groups are not unduly favored due to high noise when optimizing for a distributionally robust outcome. 

Although our work aims to fix this unfairness issue that may arise as result of heterogeneous levels of noise across different subpopulations, other unfairness issues (e.g. biased dataset) may still need to be addressed.  

\clearpage
\bibliography{ref}
\bibliographystyle{plainnat}

\appendix
\onecolumn

\section{Proofs from Section~\ref{sec:moment-thms}}
\subsection{Theorem~\ref{thm:population-main-mse}}
We restate the theorem for readability:
\begin{theorem}[Population Limit, MSE] Let $h_*$ be a solution to the min-max adversarial moment problem:
\begin{align}
    \min_{h\in H} \max_{D \in \cD}\max_{f \in \cF}\bE_{(X, Y) \sim D}\left[2\,(Y - h(X))\,f(X) - cf(X)^2\right]
\end{align}
with $c > 0$, $\cF \supseteq \shull(H-H) := \{\gamma (h-h'): h,h'\in H, \gamma\in [0,1]\}$. If the pair $H, \cD$ satisfies Assumption~\ref{ass:main}, then for any $c > 0$,
\begin{align*}
    &\max_{D\in \cD} \|h_*-\hz\|_{L^2(D)}^2\\
    &\leq \max \left \{\frac{1}{c}\wedge 1 \right \}\left(\min_{h \in H}\max_{D \in \cD} \|h-\hz\|_{L^2(D)}^2\right) + \epsilon.
\end{align*}
\end{theorem}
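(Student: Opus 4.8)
Write $V(h):=\max_{D\in\cD}\max_{f\in\cF}\bE_{(X,Y)\sim D}[2(Y-h(X))f(X)-cf(X)^2]=\max_{D\in\cD}V_D(h)$ in the notation of Lemma~\ref{lem:square}, and let $\bar h\in\argmin_{h\in\cH}\max_{D\in\cD}\|h-\hz\|_{L^2(D)}^2$, with target value $\mathrm{OPT}:=\max_{D\in\cD}\|\bar h-\hz\|_{L^2(D)}^2$. The plan is to sandwich $V$ between the two quantities appearing in the theorem. On one side, the defining optimality of $h_*$ for the min-max adversarial moment problem gives immediately $V(h_*)\le V(\bar h)$. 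On the other side, Lemma~\ref{lem:square} (equivalently Corollary~\ref{cor:sandwich-ub}) yields $V(\bar h)=\max_{D}\big(\tfrac1c\|\hz-\bar h\|_{L^2(D)}^2-\tfrac1c\min_{f\in\cF}\|\hz-\bar h-cf\|_{L^2(D)}^2\big)\le\tfrac1c\,\mathrm{OPT}$, simply by discarding the nonnegative minimum term. So the remaining work is to lower bound $V(h_*)$ in terms of $\max_{D}\|\hz-h_*\|_{L^2(D)}^2$.

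For that lower bound, fix $D^\dagger$ attaining (or, if the sup is not attained, nearly attaining) $\max_D\|\hz-h_*\|_{L^2(D)}^2$, set $a^2:=\|\hz-h_*\|_{L^2(D^\dagger)}^2$, and let $h_{D^\dagger}\in\cH$ be the approximator from Assumption~\ref{ass:main}, so $\|\hz-h_{D^\dagger}\|_{L^2(D^\dagger)}^2\le\epsilon$. Choose the scalar $\gamma:=\tfrac1c\wedge 1\in[0,1]$, so that the test function $f:=\gamma\,(h_{D^\dagger}-h_*)$ lies in $\shull(\cH-\cH)\subseteq\cF$. Feeding this particular $f$ into the identity of Lemma~\ref{lem:square}, writing $\hz-h_*-c\gamma(h_{D^\dagger}-h_*)=(1-c\gamma)(\hz-h_*)+c\gamma(\hz-h_{D^\dagger})$, and applying the triangle inequality in $L^2(D^\dagger)$ (legitimate since $1-c\gamma\ge 0$ by the choice of $\gamma$), one obtains
\[
V(h_*)\ \ge\ V_{D^\dagger}(h_*)\ \ge\ \tfrac1c a^2-\tfrac1c\big((1-c\gamma)a+c\gamma\sqrt\epsilon\big)^2 .
\]
If $c\ge 1$ then $\gamma=1/c$, $1-c\gamma=0$, the right-hand side is $\tfrac1c(a^2-\epsilon)$, and combining with $V(h_*)\le V(\bar h)\le\tfrac1c\mathrm{OPT}$ gives $a^2\le\mathrm{OPT}+\epsilon$. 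If $c<1$ then $\gamma=1$; expanding gives $\tfrac1c a^2-\tfrac1c((1-c)a+c\sqrt\epsilon)^2=(2-c)a^2-2(1-c)a\sqrt\epsilon-c\epsilon$, and combining with $V(h_*)\le\tfrac1c\mathrm{OPT}$ and absorbing the cross term via $2(1-c)a\sqrt\epsilon\le(1-c)a^2+(1-c)\epsilon$ collapses the inequality to $a^2\le\tfrac1c\mathrm{OPT}+\epsilon$. Since $a^2=\max_D\|\hz-h_*\|_{L^2(D)}^2$, the two regimes together give the stated bound with the coefficient $\tfrac1c$ for $c\le 1$ and $1$ for $c\ge 1$.

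\textbf{Expected main obstacle.} The delicate regime is $c<1$. There the ``ideal'' adversary response would be $\tfrac1c(\hz-h_*)$ (which converts the quadratic into an exact $\ell_2$ distance), but that requires a scaling exceeding what $\shull(\cH-\cH)$ allows, so Corollary~\ref{cor:sandwich-lb} does not apply off the shelf; one is forced to the $\gamma=1$ test function, which injects the cross term $a\sqrt\epsilon$. The care is in verifying that the extra slack $(2-c)a^2\ge a^2$ available precisely when $c<1$ is exactly enough to swallow this cross term via AM-GM \emph{without} leaving behind a spurious $\sqrt\epsilon$ term or inflating the coefficient on $\mathrm{OPT}$. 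The rest is bookkeeping with Lemma~\ref{lem:square} and the optimality of $h_*$. (One should also note the mild point that if $\cD$ is infinite the ``$\max$'' over $D$ in the statement and in the definition of $D^\dagger$ should be read as a supremum and handled with a vanishing slack, which does not affect the argument.)
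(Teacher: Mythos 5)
Your proof is correct. For $c \ge 1$ you take essentially the same route as the paper: the upper bound $V(h_*)\le V(\bar h)\le \tfrac1c\,\mathrm{OPT}$ via \cref{cor:sandwich-ub}, and the lower bound from plugging $\tfrac1c(h_{D^\dagger}-h_*)\in\cF$ into the completing-the-square identity. For $0<c<1$ you diverge from the paper. The paper exploits monotonicity of the quadratic penalty in $c$: since $-cf(X)^2\ge -f(X)^2$, the $c$-objective dominates the $c=1$-objective pointwise, and so \cref{cor:sandwich-lb} (with parameter $1$, where $h_{D}-h\in\cF$ is automatic) applies directly to give $\max_D\|h_0-h_*\|^2-\epsilon\le V(h_*)$ with no further computation. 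You instead substitute the test function $f=h_{D^\dagger}-h_*$ explicitly, decompose $h_0-h_*-cf=(1-c)(h_0-h_*)+c(h_0-h_{D^\dagger})$, apply the triangle inequality, expand, and absorb the cross term $2(1-c)a\sqrt\epsilon\le(1-c)(a^2+\epsilon)$ via AM-GM. Your algebra checks out and yields exactly $a^2-\epsilon\le V_{D^\dagger}(h_*)$, the same intermediate bound the paper reaches, so the two arguments are equivalent in strength; the paper's monotonicity observation just lets it skip the expansion and the AM-GM step. Your "expected obstacle" remark is accurate: the naive ideal test function $\tfrac1c(h_0-h_*)$ is indeed not available in $\shull(\cH-\cH)$ when $c<1$, and that is precisely the friction the paper's monotonicity trick sidesteps.
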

\begin{proof}
We begin with the case where $c \geq 1$. Let $h_D$ denote the best approximation to $h_0$ for each distribution $D \in \cD$ as designated by the Assumption~\ref{ass:main}, for which $\|h_0-h_D\|_{L^2(D)}^2\leq \epsilon$. In this case, since, by assumption, $\frac{1}{c}(h_D - h)\in \cF$, we derive, by Corollary~\ref{cor:sandwich-lb}, that for any $h \in H$:
\begin{align}
   &\frac{1}{c}\left(\max_{D\in \cD} \|h_0 - h\|_{L^2(D)}^2 - \epsilon\right)\\
   &\leq \max_{D\in \cD, f \in \cF} \bE_{D}\left[2(Y - h(X))f(X) - cf(X)^2\right].
\end{align}

Thus the minimizer $h_*$ of the minimax adversarial moment satisfies
\begin{align}
   \frac{1}{c}\left(\max_{D\in \cD} \|h_0 - h_*\|_{L^2(D)}^2 - \epsilon\right) &\leq \max_{D\in \cD, f \in \cF} \bE_{D}\left[2(Y - h_*(X))f(X) - cf(X)^2\right] \\
   &\leq \min_{h \in H}\max_{D\in \cD, f \in \cF} \bE_{D}\left[2(Y - h(X))f(X) - cf(X)^2\right] \\
   &\leq \frac{1}{c} \min_{h \in H} \max_{D\in \cD} \|h_0 - h\|_{L^2(D)}^2
\end{align}

Where the last step is an application of Corollary~\ref{cor:sandwich-ub}. Multiplying by $c$ on both sides and adding $\epsilon$ gives the theorem statement (because $c \geq 1$ and thus $\max\{1/c \wedge 1\} = 1$): 

\begin{align}
   \max_{D\in \cD} \|h_0 - h_*\|_{L^2(D)}^2 \leq \min_{h \in H} \max_{D\in \cD} \|h_0 - h\|_{L^2(D)}^2 + \epsilon. 
\end{align}

We now consider the case $0 < c < 1$. By the same upper bound argument, we have that the minimizer $h_*$ of the minimax adversarial moment satisfies
\begin{align}
    &\max_{D\in \cD, f \in \cF} \bE_{D}\left[2(Y - h_*(X))f(X) - cf(X)^2\right] \\
   &\leq \min_{h \in H}\max_{D\in \cD, f \in \cF} \bE_{D}\left[2(Y - h(X))f(X) - cf(X)^2\right] \\
   &\leq \frac{1}{c} \min_{h \in H} \max_{D\in \cD} \|h_0 - h\|_{L^2(D)}^2
\end{align}

However, the lower bound argument does not automatically apply because $1/c > 1$, and thus we are not guaranteed that $\frac{1}{c}(h_D - h) \in \cF$. Instead, we note that because $0 < c < 1$, we have 

\begin{align}
    &\max_{D\in \cD, f \in \cF} \bE_{D}\left[2(Y - h_*(X))f(X) - cf(X)^2\right]\\
    &\geq \max_{D\in \cD, f \in \cF} \bE_{D}\left[2(Y - h_*(X))f(X) - f(X)^2\right]
\end{align}

Here, we can now apply Corollary~\ref{cor:sandwich-lb} to the right-hand-side because $h_D - h \in \cF$ by assumption, giving us 
\begin{align}
   \max_{D\in \cD} \|h_0 - h_*\|_{L^2(D)}^2 - \epsilon
   & \leq \max_{D\in \cD, f \in \cF} \bE_{D}\left[2(Y - h_*(X))f(X) - f(X)^2\right]\\
   &\leq \max_{D\in \cD, f \in \cF} \bE_{D}\left[2(Y - h_*(X))f(X) - cf(X)^2\right] \\
   &\leq \min_{h \in H}\max_{D\in \cD, f \in \cF} \bE_{D}\left[2(Y - h(X))f(X) - cf(X)^2\right] \\
   &\leq \frac{1}{c} \min_{h \in H} \max_{D\in \cD} \|h_0 - h\|_{L^2(D)}^2
\end{align}

Thus adding $\epsilon$ to both sides we recover the theorem statement for $c < 1$ (note that in this case $\max\{1/c \wedge 1\} = 1/c$): 
\begin{align}
   \max_{D\in \cD} \|h_0 - h_*\|_{L^2(D)}^2 
   \leq \frac{1}{c} \min_{h \in H} \max_{D\in \cD} \|h_0 - h\|_{L^2(D)}^2 + \epsilon.
\end{align}
This completes the proof of the theorem.
\end{proof}

\section{Proofs and Additional Statements from Section~\ref{sec:finite-sample}}\label{app:finite-sample-details}
\subsection{Preliminary Lemmas}\label{app:prel}
We will present here a concentration inequality lemma that is used throughout the proofs. See \citep{foster2019orthogonal} for a proof of this theorem:
\begin{lemma}[Localized Concentration, \citep{foster2019orthogonal}]\label{lem:concentration}
For any $h\in \Hcal := \prod_{i=1}^d \Hcal_i$ be a multi-valued outcome function, that is almost surely absolutely bounded by a constant. Let $\ell(Z; h(X))\in \R$ be a loss function that is $L$-Lipschitz in $h(X)$, with respect to the $\ell_2$ norm. Fix any $h_*\in \Hcal$ and let $\delta_n=\Omega\left(\sqrt{\frac{d\,\log\log(n) + \log(1/\zeta)}{n}}\right)$ be an upper bound on the critical radius of $\shull(\Hcal_i-h_{i,*})$ for $i\in [d]$. Then w.p. $1-\zeta$: 
\begin{align}
    \forall h\in \Hcal: \left|(\bE_n - \bE)\left[\ell(Z; h(X)) - \ell(Z; h_*(X))\right]\right| = 18\,L\,\left(d\, \delta_n \sum_{i=1}^d \|h_i - h_{i,*}\|_{L_2} + d\,\delta_n^2\right)
\end{align}
If the loss is linear in $h(X)$, i.e. $\ell(Z; h(X) + h'(X)) = \ell(Z; h(X)) + \ell(Z; h'(X))$ and $\ell(Z;\alpha h(X)) = \alpha \ell(Z;h(X))$ for any scalar $\alpha$, then it suffices that we take $\delta_n=\Omega\left(\sqrt{\frac{\log(1/\zeta)}{n}}\right)$ that upper bounds the critical radius of $\shull(\Hcal_i-h_{i,*})$ for $i\in [d]$.
\end{lemma}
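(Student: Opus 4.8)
The plan is to treat this as a textbook localized-complexity (critical-radius) uniform deviation bound and to assemble it from three ingredients: a Lipschitz reduction from the $d$-valued loss to scalar quantities, the one-dimensional localized deviation bound for star-shaped function classes, and a union bound over the $d$ coordinates. The statement is essentially a restatement of results in \citet{foster2019orthogonal} and \citet{wainwright2019high}, so the proof should follow their template.

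First I would linearize the centered loss $g_h(Z) := \ell(Z;h(X)) - \ell(Z;h_*(X))$ in the coordinates of $h$. Since $\ell(Z;\cdot)$ is $L$-Lipschitz in its $h(X)$ argument with respect to $\|\cdot\|_2$, we get pointwise $|g_h(Z)| \le L\|h(X)-h_*(X)\|_2 \le L\sum_{i=1}^d |h_i(X)-h_{i,*}(X)|$, hence $\|g_h\|_{L^2} \le L\sum_{i=1}^d \|h_i - h_{i,*}\|_{L^2}$ by the triangle inequality in $L^2$. The same Lipschitz property, via Maurer's vector contraction inequality (or, alternatively, a telescoping decomposition over coordinates with each summand Lipschitz in a single coordinate), bounds the localized Rademacher complexity of the centered-loss class $\cG := \{g_h : h\in\Hcal\}$ by $O(L)\sum_{i=1}^d \cR(\shull(\Hcal_i - h_{i,*}), \cdot)$. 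Because each $\shull(\Hcal_i - h_{i,*})$ is star-shaped, the map $\delta \mapsto \cR(\shull(\Hcal_i - h_{i,*}),\delta)/\delta$ is non-increasing, so a single critical-radius parameter --- the stated $\delta_n$, which dominates all $d$ per-coordinate critical radii --- controls all of them simultaneously.

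Second, and this is the step I expect to be the main obstacle, I would invoke the one-dimensional localized deviation bound for star-shaped classes: by Talagrand's (Bousquet's) concentration inequality for suprema of empirical processes combined with a peeling argument over $O(\log n)$ dyadic scales of $\|f\|_{L^2}$, one obtains that with probability $1-\zeta$, simultaneously for all $f$ in a star-shaped class with critical radius $\delta_n$, $|(\bE_n - \bE)f| \le c\,(\delta_n\|f\|_{L^2} + \delta_n^2)$. Applying this coordinate by coordinate to $\shull(\Hcal_i - h_{i,*})$ (after the contraction reduction), summing over $i$, and union-bounding over the $d$ coordinates yields, with probability $1-\zeta$, for all $h\in\Hcal$, $|(\bE_n-\bE)g_h| \le c_0 L(d\,\delta_n\sum_{i=1}^d \|h_i - h_{i,*}\|_{L^2} + d\,\delta_n^2)$; the factor $d$ and the $d\log\log(n)$ term inside $\delta_n$ are exactly the cost of the $d$-fold union bound over coordinates (each peeling argument over $O(\log n)$ scales contributing a $\log\log n$), and $c_0 \le 18$ after tracking the absolute constants through Talagrand's inequality and the contraction step. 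The $\log\log n$ (rather than $\log n$) dependence is precisely what peeling buys once the variance has been localized.

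Finally, for the linear case, I would use the exact identity $g_h(Z) = \ell(Z;(h-h_*)(X))$, valid when $\ell(Z;\cdot)$ is linear, which makes $\cG$ a linear image of $\prod_i(\Hcal_i - h_{i,*})$ and removes the need for the contraction inequality and its accompanying boundedness requirement on the loss class. A direct (Bernstein / self-bounding--type) localization argument then applies without the coordinate-wise peeling overhead, so the requirement on $\delta_n$ weakens from $\Omega(\sqrt{(d\log\log n + \log(1/\zeta))/n})$ to $\Omega(\sqrt{\log(1/\zeta)/n})$ while still only needing $\delta_n$ to dominate the per-coordinate critical radii. The remaining work is bookkeeping: verifying the critical-radius inequality $\cR(\shull(\Hcal_i - h_{i,*}),\delta_n) \le \delta_n^2/U$ for the envelope $U$ supplied by the a.s.\ boundedness of $h$, and chaining constants to reach $c_0 \le 18$.
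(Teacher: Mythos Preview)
The paper does not prove this lemma at all: it is stated in Appendix~\ref{app:prel} as a black-box result imported from \citet{foster2019orthogonal}, with the sentence ``See \citep{foster2019orthogonal} for a proof of this theorem'' and nothing further. So there is no paper-side argument to compare against; your proposal is simply \emph{more} than what the paper provides.

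As for the substance of your sketch, it is a reasonable outline of how such a result is actually established in \citet{foster2019orthogonal} and \citet{wainwright2019high}: Lipschitz contraction to reduce to the coordinate classes, the star-shaped localized deviation bound via Talagrand/Bousquet plus peeling, and a union bound over the $d$ coordinates. One caution: your account of the linear case is slightly optimistic. Linearity of $\ell(Z;\cdot)$ removes the need for the vector contraction step (and hence the extra boundedness and the $d\log\log n$ overhead from coordinate-wise peeling), but you still need a localization/peeling argument on the resulting scalar class to get the $\delta_n\|f\|_{L^2}+\delta_n^2$ form; the improvement to $\Omega(\sqrt{\log(1/\zeta)/n})$ comes from avoiding the $d$-fold union bound and the per-coordinate peeling, not from eliminating localization altogether. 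Also, the claim that the constant can be tracked to exactly $c_0\le 18$ is asserted rather than derived; in practice this comes out of the specific constants in Bousquet's inequality and the contraction lemma and would need to be checked line by line.
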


\subsection{Proof of Theorem~\ref{thm:interpolated-reg-mse}}
We restate the theorem for readability:
\begin{theorem} Let $\hat{h}$ be the solution to the empirical regularized min-max adversarial moment problem:
\begin{align}
    \min_{h\in H} \max_{j=1}^M
    L_{j,n}(h) + \mu \|h\|^2
\end{align}
where 
\begin{align}
    L_{j,n}(h) :=~& \max_{f\in F} \bE_{j,n}\left[2(Y - h(X))\, f(X) - a_n f(X)^2\right] - \lambda \|f\|^2
\end{align}
for some $0 < a_n \leq 1/2$ with $\cF=\shull(H-H)$ and $\|\cdot\|$ an arbitrary norm in the space $H-H$. Let $h_j = \argmin_{h\in H} \|h-h_0\|_{L^2(D_j)}^2$ and $F_B=\{f\in F:\|f\|\leq B\}$ for some constant $B$. Let $\delta_n=\Omega\left(\sqrt{\frac{\log\log(n) + \log(M/\delta)}{n}}\right)$ be any solution to the critical radius inequality for $F_B$. Let $h_*=\argmin_{h\in H} \max_{D\in \cD} \|h-h_0\|_{L^2(D)}^2$. Assume that $Y - h_j(X)$ and $Y-h_*(X)$ are a.s. absolutely bounded by $U_Y$ and all $f\in \cF_B$ are a.s. absolutely bounded by $U$. Assume that the pair $H, \cD$ satisfies Assumption~\ref{ass:main}. Then for $\lambda \geq 4 c_0^2 (U_Y \wedge U^2)\delta_n^2/B^2$ and $\mu\geq 4\left(\lambda+c_0^2\delta_n^2 \frac{U^2}{B^2}\right)$, we get that w.p. $1-\delta$:

\begin{align}
    \max_{D\in \cD} \|h_0 - \hat{h}\|_{L^2(D)}^2 
    \leq \frac{8}{a_n}\min_{h\in H} \max_{D\in \cD} \|h_0 - h\|_{L^2(D)}^2 + \frac{10\epsilon}{a_n} + 8\mu \max_j\{\|h_j\|^2 \wedge \|h_*\|^2\wedge 1\}  + O\left(\delta_n^2 (U_Y^2\wedge U^2)/a_n\right)
\end{align}
where $c_0 \leq 18$ is a sufficiently large universal constant.
\end{theorem}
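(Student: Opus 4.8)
The plan is to prove the theorem by a two–sided localized uniform–convergence argument: upper–bound the optimal value of the empirical regularized objective evaluated at $\hat h$, then convert that bound into a population $L^2$ guarantee. Throughout, the penalties $\lambda\|f\|^2$ and $\mu\|h\|^2$ play two roles: they keep the relevant function classes \emph{localized}, so that only the critical radius $\delta_n$ of the norm ball $\cF_B$ enters rather than that of all of $\cF=\shull(\cH-\cH)$, and they make the final bound adapt to $\|h_j\|$ and $\|h_*\|$. The structural facts that drive the conversion are Lemma~\ref{lem:square} (completing the square with $c=a_n$), which identifies the inner adversarial value with $\tfrac1{a_n}\|h_0-h\|_{L^2(D_j)}^2$ minus a nonnegative projection term, and the sandwich Corollaries~\ref{cor:sandwich-ub}--\ref{cor:sandwich-lb}, whose finite–sample analogues I re-derive with explicit error terms via the localized concentration bound of Lemma~\ref{lem:concentration}.

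For the \textbf{upper bound}, optimality of $\hat h$ gives $\max_j L_{j,n}(\hat h)+\mu\|\hat h\|^2 \le \max_j L_{j,n}(\bar h)+\mu\|\bar h\|^2$ for any comparator $\bar h$; choosing $\bar h\in\{h_*\}\cup\{h_j\}_j$ and taking whichever is best is what produces the $\max_j\{\|h_j\|^2\wedge\|h_*\|^2\wedge1\}$ term (the $\wedge 1$ arising from a crude boundedness bound on the value). For fixed $\bar h$, the map $f\mapsto \bE_{j,n}[2(Y-\bar h(X))f(X)-a_nf(X)^2]$ is linear-plus-quadratic in $f$, hence Lipschitz on bounded sets, so Lemma~\ref{lem:concentration} applies; combined with the strong concavity supplied by $-\lambda\|f\|^2$, this forces the empirical maximizer to lie in $\cF_B$ once $\lambda\gtrsim \delta_n^2/B^2$, which is precisely the stated lower bound on $\lambda$. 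Hence $L_{j,n}(\bar h)$ is controlled by its population value plus $O(\delta_n^2(U_Y^2\wedge U^2))$, and the population value is at most $\tfrac1{a_n}\|h_0-\bar h\|_{L^2(D_j)}^2$ by Lemma~\ref{lem:square}. Taking $\bar h=h_*$ and maximizing over $j$ yields the leading term $\tfrac1{a_n}\min_{h}\max_{D}\|h_0-h\|_{L^2(D)}^2$, with $\tfrac{\epsilon}{a_n}$-scale slack appearing when Assumption~\ref{ass:main} is used to pass between $h_*$ and the $h_j$.

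For the \textbf{lower bound}, fix $j^\star$ attaining $\max_j\|h_0-\hat h\|_{L^2(D_j)}^2$ and feed the test function $f^\star=\tfrac1{a_n}(h_{j^\star}-\hat h)$ into $L_{j^\star,n}(\hat h)$ — the finite–sample version of Corollary~\ref{cor:sandwich-lb}. Admissibility $f^\star\in\cF_B$ must be enforced here, which is where $\|h_{j^\star}\|$ enters, and one checks that $\lambda\|f^\star\|^2$ is negligible at the chosen scale of $\lambda$. Converting the empirical inner expectation back to the population now requires \emph{uniform} control over $h\in\cH$ since $\hat h$ is data–dependent, so Lemma~\ref{lem:concentration} is invoked a second time, this time over the star hull $\cF=\shull(\cH-\cH)$ localized by $\mu\|h\|^2$; this gives $L_{j^\star,n}(\hat h)\ge \tfrac1{a_n}\|h_0-\hat h\|_{L^2(D_{j^\star})}^2 - \tfrac{\epsilon}{a_n} - (\text{first-order terms in }\delta_n\|\hat h-h_{j^\star}\|_{L^2})$.

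Finally, I would \textbf{combine} the two estimates and absorb cross terms. The first–order terms $\delta_n\|\hat h-h_{j^\star}\|_{L^2}$ from concentration are handled by the self–bounding step $\delta_n\|\hat h-h_{j^\star}\|_{L^2}\le \tfrac{a_n}{t}\|\hat h-h_{j^\star}\|_{L^2}^2+\tfrac{t}{a_n}\delta_n^2$ for a suitable constant $t$, after which $\|\hat h-h_{j^\star}\|_{L^2(D_{j^\star})}^2\le 2\|\hat h-h_0\|_{L^2(D_{j^\star})}^2+2\epsilon$ lets a small multiple of the target $\ell_2$ error be moved to the left-hand side, at the cost of the displayed constants ($8$, $10$, and the universal $c_0\le18$); the requirement $\mu\ge 4(\lambda+c_0^2\delta_n^2U^2/B^2)$ is exactly what makes the retained $\mu\|\hat h\|^2$ term dominate the leftover $\delta_n^2$-scale noise from both concentration applications. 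I expect the main obstacle to be precisely this bookkeeping around the adversarial inner maximization: one must establish simultaneously that in the upper-bound direction the empirical argmax $f$ is automatically norm-bounded so that $\cF_B$-complexity suffices, and that in the lower-bound direction the hand-picked $f^\star$ is admissible with negligible penalty, with all constants mutually compatible — which is what pins down the stated ranges for $\lambda$ and $\mu$ and the factors in the final bound.
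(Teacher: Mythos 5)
Your high-level plan shares the correct ingredients with the paper (localized concentration via $\cF_B$, empirical optimality of $\hat h$, completing-the-square, absorbing slack via AM--GM and the regularizers), but the lower-bound step as written has a genuine gap and does not match the route the paper actually takes.

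You propose to lower-bound $L_{j^\star,n}(\hat h)$ by plugging in $f^\star = \tfrac{1}{a_n}(h_{j^\star}-\hat h)$. But $\cF = \shull(\cH-\cH) = \{\gamma(h-h'): h,h'\in\cH,\ \gamma\in[0,1]\}$ only contains scalings by $\gamma\le 1$, and here $a_n\le 1/2$, so $1/a_n\ge 2$ and $f^\star\notin\cF$ in general. You note that ``admissibility must be enforced'' but offer no fix, and there is none at this scale: this is exactly why even the population lower bound (Corollary~\ref{cor:sandwich-lb}) carries the hypothesis $\tfrac1c(h_D-h)\in\cF$. The paper avoids this entirely by never scaling the test function. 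It starts from the triangle inequality $\|h_0-\hat h\|_{L^2(D_j)}^2 \le 2\epsilon + 2\|h_j-\hat h\|_{L^2(D_j)}^2$, uses localized concentration to pass to $\bE_{j,n}[(h_j-\hat h)^2]$, rewrites this exactly as $\bE_{j,n}[2(h_j-\hat h)f-f^2]$ at the \emph{admissible} choice $f=h_j-\hat h\in\cF$ ($\gamma=1$), relaxes $-f^2\le -2a_nf^2$ using $a_n\le 1/2$, and then applies the ``midpoint'' identity $h_j-\hat h=(h_j-Y)+(Y-\hat h)$ together with closure of $\cF$ under negation to split the empirical adversarial value as $L_{j,n}(h_j)+L_{j,n}(\hat h)$. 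The $1/a_n$ factor then enters only later, when $L_{j,n}(h_j)$ and $L_{j,n}(h_*)$ are converted back to population $L^2$ quantities via the $c=a_n/2$ upper completing-the-square, which is why the final constants are $8/a_n$ and $10\epsilon/a_n$ rather than the $1/a_n$ you would hope for from a direct plug-in.

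Two smaller mismatches worth flagging: (i) the claim that strong concavity from $-\lambda\|f\|^2$ ``forces the empirical maximizer to lie in $\cF_B$'' is plausible but is not what the paper proves; instead, the concentration bound for $f\notin\cF_B$ is obtained by applying the $\cF_B$ bound to $fB/\|f\|$ and rescaling, producing terms proportional to $\|f\|/B$ and $\|f\|^2/B^2$ that the $\lambda\|f\|^2$ penalty then dominates for $\lambda\gtrsim\delta_n^2/B^2$ (Lemma~\ref{lem:unbdd-concentration}); (ii) the comparator in the empirical-optimality step is $h_*$ alone, not a best-of over $\{h_*\}\cup\{h_j\}_j$; the $h_j$ enter only through the triangle-inequality decomposition of the error, and the $\max_j\{\|h_j\|^2\wedge\|h_*\|^2\wedge 1\}$ term arises when the $\mu\|\hat h\|^2$ term (retained from optimality) is used to absorb the $\|\hat h-h_j\|^2$ regularization residuals, not from choosing among comparators.
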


\begin{proof}
Let $h_j=\min_{h\in H} \|h-h_0\|_{L^2(D_j)}^2$. Note that:
\begin{align}
\|h_0 - h\|_{L^2(D_j)}^2 \leq 2 \|h_0 - h_j\|_{L^2(D_j)}^2 + 2 \|h_j-h\|_{L^2(D_j)}^2
\leq 2\epsilon + 2\|h_j-h\|_{L^2(D_j)}^2
\end{align}
By a concentration inequality, for any $h$, such that $\|h_j-h\|\leq B$, we have that if $h_j-h$ is a.s. absolutely bounded by $U$ (wlog $U\geq 1$), when $\|h_j-h\|\leq B$, for a universal constant $1\leq c\leq 18$:
\begin{align}
    \left|(\bE_j-\bE_{j,n})\left[(h_j(X) - h(X))^2\right]\right|\leq c\,U\left(\delta_n \|h_j-h\|_{L^2(D_j)} + \delta_n^2\right)
\end{align}
Applying the latter at any re-scaled $h_D-h$, scaled by $\frac{B}{\|h_D-h\|}$ and scaling back we get that for any $h\in H$:
\begin{align}
    \left|(\bE_j-\bE_{j,n})\left[(h_j(X) - h(X))^2\right]\right|\leq~& c\,U\left(\delta_n \|h_j-h\|_{L^2(D_j)} \frac{\|h-h_j\|}{B} +  \delta_n^2 \frac{\|h-h_j\|_{H}^2}{B^2}\right)\\
    \leq~& \frac{1}{2} \|h_j-h\|_{L^2(D_j)}^2 + 2 c^2 \delta_n^2 \frac{U^2}{B^2}\, \|h-h_j\|^2
\end{align}
This implies that:
\begin{align}
    \|h_j-h\|_{L^2(D_j)}^2\leq~& 2 \bE_{j,n}\left[(h_j(X) - h(X))^2\right] + 4c^2\delta_n^2 \frac{U^2}{B^2}\|h-h_j\|^2\\
    \leq~& 2 \bE_{j,n}\left[2(h_j(X) - h(X))(h_j(X)-h(X)) - (h_j(X)-h(X))^2\right] + 4c^2\delta_n^2 \frac{U^2}{B^2}\|h-h_j\|^2\\
    \leq~& 2 \left(\max_{f\in \cF} \bE_{j,n}\left[2(h_j(X) - h(X))f(X) - f(X)^2\right] - 2\lambda \|f\|^2\right) + 4\left(\lambda + c^2\delta_n^2\frac{U^2}{B^2}\right) \|h-h_j\|^2\\
    \leq~& 2 \left(\max_{f\in \cF} \bE_{j,n}\left[2(h_j(X) - h(X))f(X) - 2a_nf(X)^2\right] - 2\lambda \|f\|^2\right) + 4\left(\lambda + c^2\delta_n^2\frac{U^2}{B^2}\right) \|h-h_j\|^2
\end{align}
(using the assumption that $0 < a_n \leq 1/2$). 
Recall that
\begin{align}
L^{(f)}_{j,n}(h) :=~& \max_{f\in F} \bE_{j,n}\left[2(Y - h(X))\, f(X) - a_n f(X)^2\right] - \lambda \|f\|^2
\end{align}
and note that by the symmetry of the test functions:
\begin{align}
    \sup_{f\in F} \bE_{j,n}\left[2(h_j(X) - h(X))\, f(X) - 2a_nf(X)^2\right] - 2\lambda \|f\|^2 \leq~& \sup_{f\in F} \bE_{j,n}\left[2(h_j(X) - Y)\, f(X) - a_nf(X)^2\right] - \lambda \|f\|^2 \\
    ~&~ + \sup_{f\in F} \bE_{j,n}[2(Y - h(X))\, f(X) - a_n f(X)^2] - \lambda \|f\|^2 \\
    =~& L_{j,n}(h_j) + L_{j,n}(h)
\end{align}
Let $h_*=\argmin_{h\in H}\max_{D\in \cD} \|h-h_0\|_{L^2(D)}^2$. By the empirical optimality of $\hat h$, we are guaranteed that

\[\max_{j} L_{j, n}(\hat h) + \mu\|\hat h \|^2 \leq \max_{j} L_{j, n}(h_*) + \mu\|\hat h \|^2\]
and rearranging terms implies
\[\max_{j} L_{j, n}(\hat h) \leq \max_{j} L_{j, n}(h_*) + \mu(\|\hat h \|^2 - \|\hat h \|^2)\]

Substituting in this upper bound on $L_{j, n}(\hat h)$, we get:
\begin{align}\label{eqn:step2-adv}
    \max_{j}\sup_{f\in F} \bE_{j,n}[2(h_j(X) - \hat{h}(X))\, f(Z) - 2a_nf(Z)^2] - 2\lambda \|f\|^2 \leq \max_{j}L_{j,n}(h_j) + \max_{j} L_{j,n}(h_*)+ \mu (\|h_*\|^2 - \|\hat{h}\|^2)
\end{align}
Thus overall we already have that:

\begin{align}
     \max_j \|h_0 - \hat{h}\|_{L^2(D_j)}^2 \leq 2\epsilon + 4\max_j \left\{L_{j,n}(h_{j}) +  L_{j,n}(h_*)\right\} + 8\left(\lambda+c^2\delta_n^2 \frac{U^2}{B^2}\right)\max_j\{\|\hat{h}-h_j\|^2 \wedge 1\} + 4\mu (\|h_*\|^2 - \|\hat{h}\|^2)
\end{align}

\begin{lemma}\label{lem:unbdd-concentration}
    Let $F_B=\{f\in F: \|f\|\leq B\}$. Let $\delta_n=\Omega\left(\sqrt{\frac{\log\log(n) + \log(M/\delta)}{n}}\right)$ be any solution to the critical radius inequality for $F_B$. Assume $2(Y-h_j(X))$ is a.s. absolutely bounded by $U_Y$ and $f\in F_B$ is a.s. absolutely bounded by $U$. Then for the universal constant $c$ defined in Lemma~\ref{lem:concentration}, w.p. $1-\delta$, for all $f\in F$ and $j\in [M]$:
    \begin{align}
        \left| (\bE_{j,n} - \bE_j)\left[2(Y - h_j(X))\, f(X) - a_nf(X)^2\right]\right| \leq~&  \frac{a_n}{2} \|f\|_{L^2(D_j)}  + c^2 U_Y^2 \delta_n^2/a_n + 3 c^2 (U_Y \wedge U^2) \delta_n^2 \max\{1, \|f\|^2/B^2\}
    \end{align}
\end{lemma}

\begin{proof}
First, note that:
\begin{align}
    \left| (\bE_{j,n} - \bE_j)[2(Y - h_j(X))\, f(X) - a_nf(X)^2]\right| \leq  \left| (\bE_{j,n} - \bE_j)[2(Y - h_j(X))\, f(X)]\right| + \left|(\bE_{j,n} - \bE_j)\left[a_n f(X)^2\right]\right|
\end{align}
By the critical radius inequality, and since $Y-h_j(X)$ is uniformly bounded by $U_Y$, for any $f\in F_B$, we have:
\begin{align}
\left| (\bE_{j,n} - \bE_j)[2(Y - h_j(X))\, f(X)]\right| \leq~& c U_Y (\delta_n \|f\|_{L^2(D_j)} + \delta_n^2)\\
\left|(\bE_{j,n} - \bE_j)\left[a_n f(X)^2\right]\right|\leq~& c a_n U (\delta_n \|f\|_{L^2(D_j)} + \delta_n^2)
\end{align}
For a universal constant $c \geq 1$. 
For any function $f\in \cF$ with $\|f\|\geq B$, applying the latter to $f B / \|f\|$ and scaling back, we get, for any $f\in \cF$:
\begin{align}
\left| (\bE_{j,n} - \bE_j)[2(Y - h_j(X))\, f(X)]\right| \leq~& c U_Y \left(\delta_n \|f\|_{L^2(D_j)} + \delta_n^2 \|f\|/B\right)\leq \frac{a_n}{4} \|f\|_{L^2(D_j)}^2 + c^2 U_Y^2 \delta_n^2/a_n + c U_Y \delta_n^2 \|f\|/B\\
\left|(\bE_{j,n} - \bE_j)\left[a_nf(X)^2\right]\right|\leq~& c a_n U \left(\delta_n \|f\|_{L^2(D_j)} \|f\|/B + \delta_n^2 \|f\|^2/B^2\right)
\leq~ \frac{a_n}{4} \|f\|_{L^2(D_j)}^2 + 2c^2 U^2 \delta_n^2 \|f\|^2/B^2
\end{align}

Thus overall we have that for any $f\in \cF$:
\begin{align}
   \left| (\bE_{j,n} - \bE_j)\left[2(Y - h_j(X))\, f(X) - a_nf(X)^2\right]\right| \leq~&  \frac{a_n}{2} \|f\|_{L^2(D_j)}  + c^2 U_Y^2 \delta_n^2/a_n + 3 c^2 (U_Y \wedge U^2) \delta_n^2 \max\{1, \|f\|^2/B^2\}
\end{align}
\end{proof}

Thus we have that for $c_2=c^2U_Y^2$ and $c_3=3c^2 (U_Y \wedge U^2)$:
\begin{align}
    L_{j,n}(h_j) \leq~& \max_{f} \bE_j[2(Y - h_j(X))\, f(X) - a_n f(X)^2] + \frac{a_n}{2} \|f\|_{L^2(D_j)}^2 + c_2 \delta_n^2/a_n + c_3\, \delta_n^2 \max\{1, \|f\|^2/B^2\} - \lambda \|f\|^2\\
    \leq~& \max_{f} \bE_j[2(h_0(X) - h_j(X))\, f(X) - \frac{a_n}{2}f(X)^2] + c_2 \delta_n^2/a_n + c_3\, \delta_n^2 \max\{1, \|f\|^2/B^2\} - \lambda \|f\|^2\\
    \leq~& \frac{2}{a_n}\bE_j[(h_0(X) - h_j(X))^2] + c_2 \delta_n^2/a_n + c_3\, \delta_n^2 \max\{1, \|f\|^2/B^2\} - \lambda \|f\|^2\\
    \leq~& 2\epsilon/a_n + c_2 \delta_n^2/a_n  + c_3\, \delta_n^2  \max\{1, \|f\|^2/B^2\} - \lambda \|f\|^2\\
    \leq~& 2\epsilon/a_n + c_2 \delta_n^2/a_n + c_3\, \delta_n^2  \tag{$\lambda \geq c_3\, \delta_n^2/B^2$}
\end{align}
Similarly, we have
\begin{align}
    L_{j,n}(h_*) \leq~& \max_{f} \bE_j[2(Y - h_*(X))\, f(X) - a_nf(X)^2] + \frac{a_n}{2} \|f\|_{L^2(D_j)}^2 + c_2 \delta_n^2/a_n + c_3\, \delta_n^2 \max\{1, \|f\|^2/B^2\} - \lambda \|f\|^2\\
    \leq~& \max_{f} \bE_j[2(h_0(X) - h_*(X))\, f(X) - \frac{a_n}{2}f(X)^2] + c_2 \delta_n^2/a_n +c_3\, \delta_n^2 \max\{1, \|f\|^2/B^2\} - \lambda \|f\|^2\\
    \leq~& \frac{2}{a_n}\bE_j[(h_0(X) - h_*(X))^2] + c_2 \delta_n^2/a_n +c_3\, \delta_n^2 \max\{1, \|f\|^2/B^2\} - \lambda \|f\|^2\\
    \leq~& \frac{2}{a_n}\bE_j[(h_0(X) - h_*(X))^2] + c_2 \delta_n^2/a_n +c_3\, \delta_n^2  \tag{$\lambda \geq c_3\, \delta_n^2/B^2$}
\end{align}

Thus we conclude that, with $c_4=c_2+c_3$:
\begin{align}
&\max_j \|h_0 - \hat{h}\|_{L^2(D_j)}^2 \\&\leq \frac{8}{a_n}\max_{j} \|h_0 - h_*\|_{L^2(D_j)}^2 + \frac{10\epsilon}{a_n} + 8\left(\lambda+c^2\delta_n^2 \frac{U^2}{B^2}\right)\max_{j}\{\|\hat{h}-h_j\|^2 \wedge 1\} + 4\mu (\|h_*\|^2 - \|\hat{h}\|^2) + \frac{8 c_4 \delta_n^2}{a_n}
\end{align}
Moreover, we have that:
\begin{align}
    8\left(\lambda+c^2\delta_n^2 \frac{U^2}{B^2}\right)\{\|\hat h-h_j\|^2 \wedge 1\} - 4\mu \|\hat{h}\|^2 \leq~& 8\left(\lambda+c^2\delta_n^2 \frac{U^2}{B^2}\right)\{(2\|\hat{h}\|^2 + 2\|h_j\|^2) \wedge 1\} - 4\mu \|\hat{h}\|^2\\
    \leq~& 16\left(\lambda+c^2\delta_n^2 \frac{U^2}{B^2}\right)\{\|h_j\|^2 \wedge 1\} \tag{$\mu\geq 4\left(\lambda+c^2\delta_n^2 \frac{U^2}{B^2}\right)$}
\end{align}

We can then conclude that:

\begin{align}
    \max_j \|h_0 - \hat{h}\|_{L^2(D_j)}^2 
    \leq~& \frac{8}{a_n}\max_{j} \|h_0 - h_*\|_{L^2(D_j)}^2 + \frac{10\epsilon}{a_n} + 8 \mu \max_j\{\|h_j\|^2 \wedge \|h_*\|^2\wedge 1\} + \frac{8 c_4 \delta_n^2}{a_n} \\
    =~& \frac{8}{a_n}\min_{h\in H} \max_{D\in \cD} \|h_0 - h\|_{L^2(D)}^2 + \frac{10\epsilon}{a_n} + 8\mu \max_j\{\|h_j\|^2 \wedge \|h_*\|^2\wedge 1\}  + \frac{8c_4 \delta_n^2}{a_n}
\end{align}
\end{proof}

\subsection{Alternative Formulation for Bounded Function Spaces}
We present an alternative to Theorem~\ref{thm:interpolated-reg-mse} that assumes $Y$, $\cH$, and $\cF$ are uniformly bounded, but allows for a better multiplicative dependence on the best-in-class error for appropriate choices of parameters. Note that we require an assumption on the critical radius of an additional class $\cG$. We present three versions of the bound, each with varying dependence on $a_n$.

\begin{theorem}\label{thm:interpolate-mse}
Let $\hat{h}$ be the solution to the empirical min-max adversarial moment problem:
\begin{align}
    \min_{h\in H} \max_{j=1}^M \max_{f \in \cF} \frac{1}{n}\sum_{(X_i, Y_i) \in S_j}2(Y_i - h(X_i))f(X_i) - a_nf(X_i)^2
\end{align}
with a function space $\cF$ that satisfies:
    \begin{align}
        \cF \supseteq \shull(H-H) := \{\gamma (h-h'): h,h'\in H, \gamma\in [0,1]\}.
    \end{align}
    Suppose that $Y, h(X), f(X)$ are a.s. absolutely bounded by $U$ (with $U\geq 1$), uniformly over $h\in H, f\in \cF$. Consider the function space:
    \begin{align}
        G = \{x \to \frac{h(x)}{U}\, f(x): h\in H, f\in \cF\}.
    \end{align}
    Let $\delta_n=\Omega\left(\sqrt{\frac{\log\log(n) + \log(M/\zeta)}{n}}\right)$ be an upper bound on the critical radius of $G$ and $F$. Suppose that the pair $H, \cD$ satisfies Assumption~\ref{ass:main}. Then the estimate $\hat{h}$ satisfies w.p. $1-\zeta$ any of the following three inequalities:
    \[\max_{D\in \cD}\|h_0 - \hat{h}\|_{L^2(D)}^2 \leq \frac{2}{a_n}(1 + \eta_2)\min_{h_* \in H}\max_{D\in \cD} \|h_0 - h_*\|_{L^2(D)}^2 + \epsilon + c_5\delta_n^2/\eta_2 \quad\text{(for $a_n \leq 2/3$, $0 < \eta_2 \leq a_n/2$.)},\]
    \[\max_{D\in \cD}\|h_0 - \hat{h}\|_{L^2(D)}^2 \leq 4U\min_{h^* \in H} \max_{D\in \cD} \|h_0 - h_*\|_{L^1(D)} + \epsilon + O(\delta_n^2/a_n),\]
    or
    \[\max_{D\in \cD}\|h_0 - \hat{h}\|_{L^2(D)}^2 \leq (1 + \frac{2(\eta_1 + \eta_2)}{a_n})\min_{h^* \in H} \max_{D\in \cD} \|h_0 - h_*\|_{L^2(D)}^2 + \epsilon + (\frac{1}{\eta_1} + \frac{1}{\eta_2})c_3\delta_n^2\quad \]
for constants $c_4, c_5 = O(U^2).$ The last statement holds for any choice of $0 < \eta_2 \leq a_n/2$, $1 - a_n \leq \eta_1 \leq 1$, $0 < a_n \leq 1$. 
\end{theorem}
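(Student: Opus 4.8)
I would follow the template of the population bound (Theorem~\ref{thm:population-main-mse}), adding a layer of localized concentration. Write $\mathrm{mom}_j(h,f):=\bE_{j}[2(Y-h(X))f(X)-a_nf(X)^2]$ with empirical analogue $\widehat{\mathrm{mom}}_j(h,f)$ on $S_j$, and set $L_{j,n}(h):=\max_{f\in\cF}\widehat{\mathrm{mom}}_j(h,f)$. The first step is a uniform deviation bound: I would split the integrand as $2Yf-2hf-a_nf^2$, apply Lemma~\ref{lem:concentration} to the term linear in $f$ (outcome class $\cF$), to the bilinear term $hf=U\cdot(\tfrac hU f)$ (outcome class $G$, whose elements are bounded by $U$), and to $f^2$ (which is $2U$-Lipschitz in $f$, outcome class $\cF$), then union-bound over $j\in[M]$ (absorbed in the $\log M$ inside $\delta_n$). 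Since $\|\tfrac hU f\|_{L^2(D_j)}\le\|f\|_{L^2(D_j)}$, I expect this to give, w.p.\ $1-\zeta$, uniformly over $h\in H$, $f\in\cF$, $j\in[M]$,
\[
\bigl|\widehat{\mathrm{mom}}_j(h,f)-\mathrm{mom}_j(h,f)\bigr|\;\le\;c_0\,U\,\delta_n\bigl(\|f\|_{L^2(D_j)}+\delta_n\bigr),
\]
after which Young's inequality turns the $\|f\|_{L^2(D_j)}$ term into $\eta\|f\|_{L^2(D_j)}^2+O(U^2\delta_n^2/\eta)$ for whichever $\eta>0$ is convenient --- this is where the free parameters $\eta_1,\eta_2$ originate.

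\textbf{Upper side.} By empirical optimality of $\hat h$, $\max_j L_{j,n}(\hat h)\le\max_j L_{j,n}(h_*)$. To control $L_{j,n}(h_*)$ I would pass to $\mathrm{mom}_j(h_*,\cdot)$ via the deviation bound, fold the $\eta_2\|f\|^2$ slack into the $-a_nf^2$ penalty (leaving effective coefficient $a_n-\eta_2\ge a_n/2$), and invoke the ``complete the square'' identity (Lemma~\ref{lem:square}) / Corollary~\ref{cor:sandwich-ub}, producing the main term $\tfrac1{a_n}\|h_0-h_*\|_{L^2(D_j)}^2+O(U^2\delta_n^2/\eta_2)$ that drives the first and third bounds. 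For the second ($L^1$) bound I would instead \emph{not} complete the square: bound $2(h_0-h_*)f\le 2U|h_0-h_*|$ using $\|f\|_\infty\le U$ and discard $-a_nf^2\le0$, giving $L_{j,n}(h_*)\le 2U\|h_0-h_*\|_{L^1(D_j)}+O(U^2\delta_n^2/a_n)$.

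\textbf{Lower side.} Let $j^\star=\argmax_j\|h_0-\hat h\|_{L^2(D_j)}^2$ and let $h_{j^\star}$ be the good hypothesis from Assumption~\ref{ass:main}. I would plug the feasible witness $f=h_{j^\star}-\hat h\in\shull(H-H)\subseteq\cF$ (star-hull coefficient $1$) into $L_{j^\star,n}(\hat h)$; since $a_n\le1$, $-a_nf^2\ge-f^2$, reproducing the $0<c<1$ case of the population proof. Using the deviation bound (Young parameter $\eta_1$ on the $hf$ piece), the decomposition $h_0-\hat h=(h_0-h_{j^\star})+(h_{j^\star}-\hat h)$, the bound $\|h_0-h_{j^\star}\|_{L^2(D_{j^\star})}^2\le\epsilon$, and Cauchy--Schwarz on the cross terms, one lower-bounds $\max_j L_{j,n}(\hat h)$ by $\|h_0-\hat h\|_{L^2(D_{j^\star})}^2$ (times a constant depending on how the cross terms are split) minus $O(\epsilon)$ and $O(U^2\delta_n^2/\eta_1)$. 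The three variants differ exactly in this split: the crude estimate $\|h_{j^\star}-\hat h\|^2\le 2\|h_0-\hat h\|^2+2\epsilon$ loses a factor $2$ and yields the $\tfrac2{a_n}(1+\eta_2)$ statement; keeping the cross terms and balancing them against $\eta_1,\eta_2$ gives the sharper $1+\tfrac{2(\eta_1+\eta_2)}{a_n}$ multiplier; the $L^1$ statement pairs this lower bound with the $L^1$ upper bound. Chaining the three inequalities, taking $\max_j$, and using $\max_j\|h_0-h_*\|_{L^2(D_j)}^2=\min_{h\in H}\max_{D\in\cD}\|h_0-h\|_{L^2(D)}^2$ finishes each case, with all $O(U^2\delta_n^2/\eta)$ contributions collected into constants $c_4,c_5=O(U^2)$.

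\textbf{Main obstacle.} The delicate part is the localization bookkeeping: every unit of concentration slack is proportional to $\|f\|_{L^2(D_j)}$ and must be re-absorbed into the $-a_nf(X)^2$ penalty via Young's inequality without driving the residual coefficient on $f^2$ --- or, after the witness is substituted, on $\|h_0-\hat h\|^2$ --- to zero. Getting every sign right is precisely what pins down the admissible ranges $0<\eta_2\le a_n/2$, $1-a_n\le\eta_1\le1$, $0<a_n\le1$ (and $a_n\le2/3$ for the first variant). The secondary subtlety is that when $a_n<1$ the ``ideal'' witness $\tfrac1{a_n}(h_{j^\star}-\hat h)$ may fall outside the star hull, forcing the $-a_nf^2\ge-f^2$ detour; this is why the final bounds carry $\tfrac1{a_n}$ (first/second variants) or $\tfrac1{a_n}$-scaled slack (third) rather than a clean constant.
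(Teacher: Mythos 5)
Your proposal follows the same skeleton as the paper's proof: the uniform deviation bound via Lemma~\ref{lem:concentration} applied to the classes $\cF$ and $G$, the Young/AM--GM conversion of $\delta_n\|f\|_{L^2}$ slack into an $\eta\|f\|_{L^2}^2$ term that shifts the effective $f^2$-penalty, the sandwich via Lemma~\ref{lem:square} and Corollaries~\ref{cor:sandwich-ub}--\ref{cor:sandwich-lb}, and empirical optimality of $\hat h$. Your upper-side argument (including the $L^1$ variant) matches the paper's exactly. The only real divergence is the lower side: you plug the unscaled witness $f=h_{j^\star}-\hat h$ into the empirical objective and use $-a_nf^2\ge -f^2$ (the $c<1$ detour from Theorem~\ref{thm:population-main-mse}) before passing to the population, so the effective penalty coefficient becomes $1+\eta_1$ and the applicable lower-bound factor is $\tfrac{1}{1+\eta_1}$. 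The paper instead passes directly from the empirical objective (coefficient $a_n$) to the population objective with coefficient $a_n+\eta_1$ and then applies Corollary~\ref{cor:sandwich-lb} with the scaled witness $\tfrac{1}{a_n+\eta_1}(h_{j^\star}-\hat h)$, which is exactly why the theorem carries the constraint $1-a_n\le\eta_1\le 1$: it guarantees $a_n+\eta_1\ge1$ so the scaled witness stays in $\shull(H-H)$. Because $a_n+\eta_1\le 1+\eta_1$, the paper's lower-bound factor $\tfrac{1}{a_n+\eta_1}$ is larger, so the final multiplier $\tfrac{a_n+\eta_1}{a_n-\eta_2}$ is tighter than the $\tfrac{1+\eta_1}{a_n-\eta_2}$ your route would produce, and this is what makes the clean form $1+\tfrac{2(\eta_1+\eta_2)}{a_n}$ of the third bound possible. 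A secondary slackness: your ``decomposition plus Cauchy--Schwarz on the cross terms'' step would degrade the $\epsilon$ coefficient to a constant larger than $1$, whereas the paper gets exactly $+\epsilon$ by letting Corollary~\ref{cor:sandwich-lb} pick the optimal scaled witness $\tfrac{1}{a_n+\eta_1}(h_{j^\star}-\hat h)$, under which the completing-the-square residual is precisely $\|h_0-h_{j^\star}\|_{L^2(D_{j^\star})}^2\le\epsilon$. Both of these are easily repaired by replacing your unscaled witness with the appropriately scaled one and identifying the $-a_nf^2\ge-f^2$ step as unnecessary once $\eta_1\ge 1-a_n$; as written your argument is correct but would yield somewhat looser constants and a different admissible $\eta_1$-range than the statement.
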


\begin{proof}[Proof of Theorem~\ref{thm:interpolate-mse}] 
    Let $\bE_{j}[\cdot] = \bE_{D_j}[\cdot]$ and $\bE_{j,n}[\cdot]=\frac{1}{n}\sum_{(X_i,Y_i)\in S_j}[\cdot]$ denote the empirical average over the samples from $D_j$. 
    
    Since $\delta_n$ upper bounds the critical radius of $H$ and $\cF$ and random variables and functions are bounded, we have
    by a critical radius concentration inequality (see Lemma~\ref{lem:concentration} in Appendix~\ref{app:prel}) and since the loss $2(y-h(X))f(X) - a_nf(X)^2$ is $8U$-Lipschitz in the vector of functions $\frac{h(X)}{U}f(X)$ and $f(X)$, when $Y,h, f$ are uniformly absolutely bounded by $U$, we have that for some universal constant $c_0 \geq 18$ w.p. $1-\delta$ for all $j\in [M]$ and $h\in H, f\in \cF$:
    \begin{align}
        \left| (\bE_{j}-\bE_{j,n})\left[2\,(Y - h(X))\, f(X) - a_nf(X)^2\right] \right| 
        \leq~&  c_0 U \left(\delta_n \left(\sqrt{\bE_{j}\left[\frac{h(X)^2}{U^2} f(X)^2\right]} + \sqrt{\bE_{j}[f(X)^2]}\right) + \delta_n^2\right)\\
        \leq~& c_0 U\left(2 \delta_n \sqrt{\bE_{j}[f(X)^2]} + \delta_n^2\right)\\
        \leq~& \eta_1 \bE_{j}[f(X)^2] + \frac{1}{\eta_1} c_1 \delta_n^2 \tag{AM-GM inequality}
    \end{align}
    for any choice of $\eta_1 \in (0,1]$, with $c_1 = O(U^2)$.
    Thus, for any $0 < \eta_1 \leq 1$, we get, w.p. $1-\delta$:
    \begin{align}
        \max_{j\in [M]} \bE_j\left[2\,(Y - h(X))\, f(X) - (a_n+\eta_1)\,f(X)^2\right]
        \leq \max_{j\in [M]} \bE_{j,n}\left[2\,(Y - h(X))\, f(X) - a_nf(X)^2\right] + \frac{1}{\eta_1}c_1 \delta_n^2
    \end{align}
    Applying the latter at $h\to \hat{h}$ and invoking the optimality of $\hat{h}$ and the fact that $\frac{1}{a_n +\eta_1}(h_D-h)\in \cF$ for any $1 - a_n \leq \eta_1 \leq 1$, we have that for any $h_*\in H$:
    \begin{align}
        \frac{1}{a_n+\eta_1} \left(\max_{j \in [M]}\|h_0 - \hat{h}\|_{L^2(D_j)}^2 - \epsilon\right)
        \leq~& \max_{j\in [M]}\max_{f \in \cF}\bE_{j,n}\left[2\,(Y - \hat{h}(X))\, f(X) - a_nf(X)^2\right] + \frac{1}{\eta_1} c_1 \delta_n^2
        \\
        \leq~& \max_{j\in [M]}\max_{f \in \cF}\bE_{j,n}\left[2\,(Y - h_*(X))\, f(X) - a_nf(X)^2\right] + \frac{1}{\eta_1} c_1 \delta_n^2
    \end{align}
    where the first inequality is an application of Corollary~\ref{cor:sandwich-lb}.
    By the same critical radius concentration inequality (Lemma~\ref{lem:concentration}, Appendix~\ref{app:prel}), for any $\eta_2 \in (0, 1]$, $c_2 = O(U^2)$, w.p. $1-\delta$, for all $j\in [M]$ and $f \in \cF$:
    \begin{align}
        \bE_{j,n}\left[2\,(Y - h_*(X))\, f(X) - a_nf(X)^2\right]
        \le~& \bE_j\left[2\,(Y-h_*(X))f(X) - (a_n -\eta_2) f(X)^2\right] + \frac{1}{\eta_2} c_2 \delta_n^2 \label{eqn:moment_to_l1_ub}\\
        \leq~& \frac{1}{a_n -\eta_2} \|h_0 - h_*\|_{L^2(D_j)}^2 + \frac{1}{\eta_2} c_2\delta_n^2
    \end{align}
    where the last step is an application of Corollary~\ref{cor:sandwich-ub} and holds for any $0 < \eta_2 < a_n$. 
    
    Thus we conclude that for any $h_*\in H$, and taking $\eta_2 \leq a_n/2$:
    \begin{align}
    \max_{D\in \cD}\|h_0 - \hat{h}\|_{L^2(D)}^2
    \leq~& \frac{a_n +\eta_1}{a_n -\eta_2} \max_{D\in \cD} \|h_0 - h_*\|_{L^2(D)}^2 + \epsilon + (a_n + \eta_1)(\frac{1}{\eta_1} c_1\delta_n^2 + \frac{1}{\eta_2} c_2\delta_n^2) \\
    \leq~& \left(1 + \frac{2(\eta_1 + \eta_2)}{a_n}\right) \max_{D\in \cD} \|h_0 - h_*\|_{L^2(D)}^2 + \epsilon + (\frac{1}{\eta_1}  + \frac{1}{\eta_2})c_4\delta_n^2 \\
    \end{align}
    for $c_4 = O(U^2)$, and any choice of $1 - a_n \leq \eta_1 \leq 1$, $0 < \eta_2 \leq a_n/2$.
    
    Since $h_*$ was arbitrarily chosen, we get:
    \begin{align}
    \max_{D\in \cD}\|h_0 - \hat{h}\|_{L^2(D)}^2 
    \leq \left(1 + \frac{2(\eta_1 + \eta_2)}{a_n}\right) \min_{h_* \in H}\max_{D\in \cD} \|h_0 - h_*\|_{L^2(D)}^2 + \epsilon + (\frac{1}{\eta_1}  + \frac{1}{\eta_2})c_4\delta_n^2.
    \end{align}
    
    We note that for $a \leq 2/3$ and taking $\eta_1 \leq 1 - a_n/2$, we can get a simplified bound of 
    \begin{align}
    \max_{D\in \cD}\|h_0 - \hat{h}\|_{L^2(D)}^2 
    \leq \frac{2}{a_n}(1 + \eta_2)\min_{h_* \in H}\max_{D\in \cD} \|h_0 - h_*\|_{L^2(D)}^2 + \epsilon + c_5\delta_n^2/\eta_2.
    \end{align}
    
    with $c_5 = O(U^2)$, for any choice of $a_n \leq 2/3$ and $0 < \eta_2 \leq a_n/2$.  
    
    To recover a bound in terms of the best-in-class $\ell_1$ error, we return to equation~\ref{eqn:moment_to_l1_ub} and note that rather than bounding the mean square error, we can give an alternative upper bound of 
    
    \begin{align}
        \bE_j\left[2\,(h_0(X)-h_*(X))f(X) - (a_n -\eta_2) f(X)^2\right] + \frac{1}{\eta_2} c_2 \delta_n^2 \leq~& \bE_j\left[2\,(h_0(X)-h_*(X))f(X)\right] + \frac{1}{\eta_2} c_2 \delta_n^2 \\
        \leq~& 2U||h_0 - h_*||_{L^1(D_j)} + \frac{1}{\eta_2} c_2 \delta_n^2\\
        \leq~& 2U||h_0 - h_*||_{L^1(D_j)} + \frac{1}{a_n} c_6 \delta_n^2
    \end{align}
    
    For $c_6 = O(U^2)$. This results in a bound in terms of the best-in-class hypothesis' $\ell_1$ error taking $\eta_1 = 1$:
    \begin{align}
    \max_{D\in \cD}\|h_0 - \hat{h}\|_{L^2(D)}^2 
    \leq~& (a_n + \eta_1)2U\min_{h^* \in H} \max_{D\in \cD} \|h_0 - h_*\|_{L^1(D)} + \epsilon + (a_n + \eta_1)(\frac{1}{\eta_1}c_1\delta_n^2 + \frac{1}{a_n}c_6\delta_n^2)\\
    \leq~& 4U\min_{h^* \in H} \max_{D\in \cD} \|h_0 - h_*\|_{L^1(D)} + \epsilon + O(\delta_n^2/a_n)\\
    \end{align}
    for $a_n \leq 1$.
    \end{proof}

\subsection{\texorpdfstring{$\ell_2$}{TEXT}-penalized Multiaccuracy Guarantees}\label{app:ma-guarantees}
We can also derive finite sample guarantees for the $\ell_2$-penalized multiaccuracy objective. Note that due to our population-limit upper bound on the moment objective (Corollary~\ref{cor:sandwich-ub}), we can use either Theorem~\ref{thm:interpolated-reg-mse} or Theorem~\ref{thm:interpolate-mse} to derive an upper bound to the $\ell_2$-penalized accuracy objective in terms of the best-possible square error. However, this bound is somewhat loose and incurs a multiplicative factor of $\frac{1}{a_n^2}$. 

We additionally present a tighter bound that directly bounds the multiaccuracy error when $\cY, \cH, $ and $\cF$ are uniformly bounded:
\begin{theorem}[Finite Sample Bounds:  multiaccuracy]\label{thm:interpolate-ma}
Let $\hat{h}$ be the solution to the empirical min-max adversarial moment problem:
\begin{align}
    \min_{h\in H} \max_{j=1}^M \max_{f \in \cF} \frac{1}{n}\sum_{(X_i, Y_i) \in S_j}2(Y_i - h(X_i))f(X_i) - a_nf(X_i)^2
\end{align}
with a function space $\cF$ that satisfies:
    \begin{align}
        \cF \supseteq \shull(H-H) := \{\gamma (h-h'): h,h'\in H, \gamma\in [0,1]\}.
    \end{align}
    Suppose that $Y, h(X), f(X)$ are a.s. absolutely bounded by $U$ (with $U\geq 1$), uniformly over $h\in H, f\in \cF$. Consider the function space:
    \begin{align}
        G = \{x \to \frac{h(x)}{U}\, f(x): h\in H, f\in \cF\}.
    \end{align}
    Let $\delta_n=\Omega\left(\sqrt{\frac{\log\log(n) + \log(M/\zeta)}{n}}\right)$ be an upper bound on the critical radius of $G$ and $F$. Suppose that the pair $H, \cD$ satisfies Assumption~\ref{ass:main}. Then the estimate $\hat{h}$ satisfies w.p. $1-\zeta$:

    \[\max_{f \in \cF}\max_{j\in [M]} \bE_j\left[(h_0(X) - \hat h(X))\, f(X)\right] \leq \min_{h_* \in \cH}\max_{f \in \cF}\max_{j\in [M]} \bE_j\left[(h_0(X)-h_*(X))f(X)\right] + O(a_n + \delta_n).\]
\end{theorem}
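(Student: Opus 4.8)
The plan is to sandwich the population worst-case moment violation at multiplier $c=a_n$ between twice the worst-case multiaccuracy error of a hypothesis (from above, with no slack) and twice that quantity minus an $O(a_n)$ term (from below), and then to close the loop using the empirical optimality of $\hat h$ together with a uniform critical-radius concentration bound. Introduce the shorthand $\mathrm{ma}_j(h):=\sup_{f\in\cF}\bE_j[(h_0(X)-h(X))f(X)]$ and $\mathrm{MA}(h):=\max_{j\in[M]}\mathrm{ma}_j(h)$; since $\bE_j[(Y-h(X))f(X)]=\bE_j[(h_0(X)-h(X))f(X)]$ by the tower rule (using $\bE_j[Y\mid X]=h_0(X)$), the claimed bound is exactly $\mathrm{MA}(\hat h)\le \min_{h_*\in\cH}\mathrm{MA}(h_*)+O(a_n+\delta_n)$. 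Fix $h_\dagger\in\argmin_{h_*\in\cH}\mathrm{MA}(h_*)$, write $\Gamma^*=\mathrm{MA}(h_\dagger)$, and let $V^{\mathrm{pop}}(h):=\max_{j\in[M]}\sup_{f\in\cF}\bE_j[2(Y-h(X))f(X)-a_nf(X)^2]$ with $V^{\mathrm{emp}}(h)$ its empirical analogue (so $\hat h$ minimizes $V^{\mathrm{emp}}$ over $\cH$).

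First I would prove the two elementary sandwich inequalities. For the lower bound: fix $h$, pick $j^\star$ attaining $\mathrm{MA}(h)$, and for $\xi>0$ pick a witness $f^+\in\cF$ with $\bE_{j^\star}[(h_0-h)f^+]\ge \mathrm{ma}_{j^\star}(h)-\xi$; since every $f\in\cF$ is a.s.\ absolutely bounded by $U$, plugging $f^+$ in gives $V^{\mathrm{pop}}(h)\ge \bE_{j^\star}[2(h_0-h)f^+-a_n(f^+)^2]\ge 2(\mathrm{MA}(h)-\xi)-a_nU^2$, so letting $\xi\to0$ yields $V^{\mathrm{pop}}(h)\ge 2\,\mathrm{MA}(h)-a_nU^2$. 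For the upper bound: for each $j$ and $f\in\cF$, $\bE_j[2(Y-h_\dagger)f-a_nf^2]=2\bE_j[(h_0-h_\dagger)f]-a_n\bE_j[f^2]\le 2\bE_j[(h_0-h_\dagger)f]$ because $-a_nf^2\le0$, and taking $\sup_f$ then $\max_j$ gives $V^{\mathrm{pop}}(h_\dagger)\le 2\Gamma^*$. Note that the lower bound uses only $\|f\|_\infty\le U$ and the upper bound only $-a_nf^2\le0$; in particular I deliberately avoid invoking Lemma~\ref{lem:square} or any star-shapedness or negation-closure of $\cF$ here.

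Next I would pass between the empirical and population levels. The integrand $2(y-h(x))f(x)-a_nf(x)^2$ is $O(U)$-Lipschitz in the vector $(\tfrac{h(x)}{U}f(x),f(x))$, so the critical-radius concentration bound already established in the proof of Theorem~\ref{thm:interpolate-mse} (an application of Lemma~\ref{lem:concentration} to the relevant function classes) gives, with probability $1-\zeta$, uniformly over $h\in\cH$, $f\in\cF$, $j\in[M]$, the deviation bound $\bigl|(\bE_j-\bE_{j,n})[2(Y-h)f-a_nf^2]\bigr|\le c_0U\bigl(\delta_n(\sqrt{\bE_j[h^2f^2/U^2]}+\sqrt{\bE_j[f^2]})+\delta_n^2\bigr)\le c_0U(2U\delta_n+\delta_n^2)=:E_n$, where I simply used $\sqrt{\bE_j[h^2f^2/U^2]}\le\sqrt{\bE_j[f^2]}\le U$; thus $E_n=O(\delta_n)$ with $U$-dependent constants. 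Taking $\sup_f$ and $\max_j$ of this pointwise deviation gives $V^{\mathrm{pop}}(h)\le V^{\mathrm{emp}}(h)+E_n$ and $V^{\mathrm{emp}}(h)\le V^{\mathrm{pop}}(h)+E_n$ for every $h\in\cH$. Chaining everything, $2\,\mathrm{MA}(\hat h)-a_nU^2\le V^{\mathrm{pop}}(\hat h)\le V^{\mathrm{emp}}(\hat h)+E_n\le V^{\mathrm{emp}}(h_\dagger)+E_n\le V^{\mathrm{pop}}(h_\dagger)+2E_n\le 2\Gamma^*+2E_n$, where the central step is the empirical optimality of $\hat h$ together with $h_\dagger\in\cH$. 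Dividing by $2$ and rearranging, $\mathrm{MA}(\hat h)\le \Gamma^*+\tfrac{a_nU^2}{2}+E_n=\min_{h_*\in\cH}\mathrm{MA}(h_*)+O(a_n+\delta_n)$, as desired.

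I do not expect a serious technical obstacle; the one point requiring care is the size of the error term. It is tempting to try to sharpen $E_n$ via an AM--GM step absorbing $c_0U\delta_n\sqrt{\bE_j[f^2]}$ into the penalty $a_n\bE_j[f^2]$, exactly as in the MSE analyses, but that would produce an $O(a_n+\delta_n^2/a_n)$ error, which is not comparable with the advertised $O(a_n+\delta_n)$; for the multiaccuracy metric --- which has no self-bounding variance term and hence only a slow $\delta_n$ rate --- the crude bound $\bE_j[f^2]\le U^2$ is the right move. Beyond that, one should just confirm that the finitely-many-$j$ maximum is attained (so $j^\star$ exists) and handle the possibly-unattained $\sup_f$ through the $\xi\to0$ limiting argument used above.
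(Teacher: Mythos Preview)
Your proposal is correct and follows essentially the same approach as the paper: apply the critical-radius concentration bound from Lemma~\ref{lem:concentration} uniformly over $h,f,j$, use the empirical optimality of $\hat h$ to pass to an arbitrary $h_*$, and handle the $a_n f(X)^2$ term via the crude bound $\bE_j[f^2]\le U^2$ so that it contributes the $O(a_n)$ slack. Your presentation is slightly more modular---you package the two directions as explicit sandwich inequalities $2\,\mathrm{MA}(h)-a_nU^2\le V^{\mathrm{pop}}(h)$ and $V^{\mathrm{pop}}(h_\dagger)\le 2\Gamma^*$ before invoking concentration, whereas the paper carries the $a_n\bE_j[f^2]$ and $c_1\delta_n\|f\|_{L^2(D_j)}$ terms through the chain and bounds them only at the end---but the underlying argument and the resulting $O(a_n+\delta_n)$ rate are identical.
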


\begin{proof}[Proof of Theorem~\ref{thm:interpolate-ma}]
Let $\bE_{j}[\cdot] = \bE_{D_j}[\cdot]$ and $\bE_{j,n}[\cdot]=\frac{1}{n}\sum_{(X_i,Y_i)\in S_j}[\cdot]$ denote the empirical average over the samples from $D_j$. Since $\delta_n$ upper bounds the critical radius of $H$ and $\cF$ and random variables and functions are bounded, we have
by a critical radius concentration inequality (see Lemma~\ref{lem:concentration} in Appendix~\ref{app:prel}) and since the loss $2(y-h(X))f(X) - a_nf(X)^2$ is $8U$-Lipschitz in the vector of functions $\frac{h(X)}{U}f(X)$ and $f(X)$, when $Y,h, f$ are uniformly absolutely bounded by $U$, we have that for some universal constant $c_0$ w.p. $1-\delta$ for all $j\in [M]$ and $h\in H, f\in \cF$:
\begin{align}
    \left| (\bE_{j}-\bE_{j,n})\left[2\,(Y - h(X))\, f(X) - a_nf(X)^2\right] \right| 
    \leq~&  c_0 U \left(\delta_n \left(\sqrt{\bE_{j}\left[\frac{h(X)^2}{U^2} f(X)^2\right]} + \sqrt{\bE_{j}[f(X)^2]}\right) + \delta_n^2\right)\\
    \leq~& c_0 U\left(2 \delta_n \sqrt{\bE_{j}[f(X)^2]} + \delta_n^2\right)\\
    \leq~& c_1\delta_n||f(X)||_{L_2(D_j)} + c_2\delta_n^2
\end{align}
with $c_1, c_2 = O(U)$. Thus, we get 
\begin{align}
    &\bE_j\left[2\,(Y - h(X))\, f(X) - a_n\,f(X)^2\right]\\
    &\leq \bE_{j,n}\left[2\,(Y - h(X))\, f(X) - a_nf(X)^2\right] + c_1\delta_n||f(X)||_{L_2(D_j)} + c_2\delta_n^2
\end{align}
For any $j \in [M]$, $h \in \cH$, $f \in \cF$, which allows us to rewrite as 
\begin{align}
    &\max_{j\in [M]} \bE_j\left[(Y - h(X))\, f(X)\right]\\
    &\leq \frac{1}{2}\max_{j\in [M]} \left(\bE_{j,n}\left[2\,(Y - h(X))\, f(X) - a_nf(X)^2\right] + a_n\mathbb{E}_j[f(X)^2] + c_1\delta_n||f(X)||_{L_2(D_j)} \right)+ c_2\frac{1}{2}\delta_n^2
\end{align}
Applying the latter at $h\to \hat{h}$ and invoking the optimality of $\hat{h}$, we have that for any $h_*\in H$:
\begin{align}
    \max_{j\in [M]}\max_{f \in \cF}\bE_{j,n}\left[2\,(Y - \hat{h}(X))\, f(X) - a_nf(X)^2\right] 
    \leq \max_{j\in [M]}\max_{f \in \cF}\bE_{j,n}\left[2\,(Y - h_*(X))\, f(X) - a_nf(X)^2\right]
\end{align}
By the same critical radius concentration inequality, w.p. $1-\delta$, for all $j\in [M]$ and $f \in \cF$:
\begin{align}
    \bE_{j,n}\left[2\,(Y - h_*(X))\, f(X) - a_nf(X)^2\right]
    \le~& \bE_j\left[2\,(Y-h_*(X))f(X) - a_nf(X)^2\right] + c_1\delta_n||f(X)||_{L_2(D)} + c_2\delta_n^2\\
    =~& \bE_j\left[2\,(h_0(X)-h_*(X))f(X) - a_n f(X)^2\right] + c_1\delta_n||f(X)||_{L_2(D)} + c_2\delta_n^2
\end{align}

Combining these inequalities gives us
\begin{align}
    &\max_{f \in \cF}\max_{j\in [M]} \bE_j\left[(h_0(X) - \hat h(X))\, f(X)\right]\\
    &\leq \min_{h_* \in \cH}\max_{f \in \cF}\max_{j\in [M]} \bE_j\left[(h_0(X)-h_*(X))f(X)\right] + \max_{j \in [M]}\max_{f \in \cF}\left(\frac{a_n}{2}\mathbb{E}_j[f(X)^2] + c_1\delta_n||f(X)||_{L_2(D_j)}\right) + c_2\delta_n^2\\
    &\leq \min_{h_* \in \cH}\max_{f \in \cF}\max_{j\in [M]} \bE_j\left[(h_0(X)-h_*(X))f(X)\right] + c_4a_n + c_3\delta_n + c_2\delta_n^2
\end{align}

with $c_3 = O(U), c_4 = O(U^2)$. Thus, we conclude that 

\[\max_{f \in \cF}\max_{j\in [M]} \bE_j\left[(h_0(X) - \hat h(X))\, f(X)\right] \leq \min_{h_* \in \cH}\max_{f \in \cF}\max_{j\in [M]} \bE_j\left[(h_0(X)-h_*(X))f(X)\right] + O(a_n + \delta_n).\]
\end{proof}

\subsection{Linear Function Spaces}\label{app:lin-samp-cxty}
Similar results can be obtained for the norm-contrained un-regularized variant of the finite sample bound for linear function spaces given in Section~\ref{sec:finite-sample}, i.e. where $\cH_B=\{\alpha^\top\phi(\cdot): \alpha \in \R^d, \|\alpha\|_2\leq B\}$, for some constant $B$. 

In this case, note that we can take $\cF=\cH_{2B}$. Moreover, by a volumetric argument (cf.\ Example 13.8 in \citet{wainwright2019high}), the critical radius of the function space $\cF$ is bounded as $O\left(\frac{d}{n} B V\right)$ and by a similar volumetric argument (see e.g.\ Appendix D in \cite{dikkala2020minimax}) the critical radius of the function space $G_B:=\{x\mapsto \frac{\theta^\top\phi(x)}{B\, V} \cdot \beta^\top\phi(x): \|\theta\|_2\leq B, \|\beta\|_2\leq 2B\}$ is bounded as $O\left(\frac{d^2}{n} B V\right)$. If we further assume that $|Y|\leq O(V B)$ and invoking Theorem~\ref{thm:interpolate-mse} we get a rate of:
\begin{align*}
    \max_{D\in \cD} \|h_0 - \hat{h}\|_{L^2(D)}^2 
    \leq &(1+4\eta)\min_{h\in \cH_B} \max_{D\in \cD} \|h_0 - h\|_{L^2(D)}^2 \\
    &+ \epsilon + O\left(\frac{1}{\eta}\frac{d^2}{n} V^3  B^3\right)
\end{align*}
Comparing the two bounds, we see that the first bound adapts to the $\ell_2$ norm of the minimizers, i.e. $\|\alpha_j\|_2^2$ and $\|\alpha_*\|_2^2$, instead of being dependent on a worst-case bound of $B$ on the norm of a good approximate solution in $\cH$. Moreover, the approximation error in the second theorem is with respect to the norm-constrained function space $\cH_B$, while in the first statement the approximation error $\epsilon$ and $\min_{h\in \cH}\max_{D\in \cD} \|h_0-h\|$ is computed over the un-constrained class $\cH$, and can be substantially smaller. Moreover, the second theorem statement has a slightly larger critical radius due to the dependence on the radius of the function space $G_B$. However, the second statement has a much smaller constant multiplying the min-max solution, which can be taken as close to $1$ as one would want.

\section{Missing Details from Section \ref{sec:comp}}
\subsection{Preliminary Lemma}
We make use of the following lemma in our proofs.
\begin{theorem}[Minimax Theorem, \cite{sion1958general}]\label{thm:sion-minmax}
Let $X$ and $Y$ be nonvoid convex and compact subsets of two linear topological spaces, and let $f: X \times Y \to \mathbb R$ be a function that is upper semicontinuous and quasi-concave in the first variable and lower semicontinuous and quasi-convex in the second variable. Then 
\begin{align*}
    \min_{y \in Y} \max_{x \in X} f(x, y) = \max_{x \in X} \min_{y \in Y} f (x, y)
\end{align*}
\end{theorem}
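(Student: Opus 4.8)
The plan is to establish the two inequalities separately. One direction, $\max_{x \in X} \min_{y \in Y} f(x,y) \le \min_{y \in Y} \max_{x \in X} f(x,y)$, is \emph{weak duality} and needs neither convexity nor continuity: for any $x_0 \in X$ and $y_0 \in Y$ we have $\min_{y} f(x_0,y) \le f(x_0,y_0) \le \max_{x} f(x,y_0)$, and taking the supremum over $x_0$ on the left and the infimum over $y_0$ on the right gives it (the suprema and infima are attained since $X,Y$ are compact and $f$ is semicontinuous in each variable, so ``$\min$'' and ``$\max$'' are legitimate).

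The substance is the reverse inequality $\min_{y} \max_{x} f(x,y) \le \max_{x} \min_{y} f(x,y)$. I would argue by contradiction. Suppose there is $c \in \R$ with $\max_{x} \min_{y} f < c < \min_{y} \max_{x} f$. The left inequality says that for every $x \in X$ some $y$ has $f(x,y) < c$, so $\{\,V_y := \{x : f(x,y) < c\}\,\}_{y \in Y}$ is an open cover of $X$ (open by upper semicontinuity of $f(\cdot,y)$); compactness of $X$ gives a finite subcover indexed by $y_1,\dots,y_m$. Symmetrically, the right inequality makes $\{\,W_x := \{y : f(x,y) > c\}\,\}_{x\in X}$ an open cover of $Y$ (open by lower semicontinuity of $f(x,\cdot)$), with a finite subcover indexed by $x_1,\dots,x_n$. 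Restricting $f$ to the polytopes $X_0 := \mathrm{conv}\{x_1,\dots,x_n\} \subseteq X$ and $Y_0 := \mathrm{conv}\{y_1,\dots,y_m\} \subseteq Y$ preserves all hypotheses, and the two finite subcovers persist on the restrictions: every $x \in X_0$ still has some $y_j$ with $f(x,y_j) < c$, and every $y \in Y_0$ still has some $x_i$ with $f(x_i,y) > c$.

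It thus suffices to reach a contradiction in the finite-dimensional setting. Here I would invoke a combinatorial-topological fixed point principle --- the Knaster--Kuratowski--Mazurkiewicz (KKM) lemma, equivalently Sperner's lemma or Brouwer's theorem --- to show that \emph{either} there is $x^* \in X_0$ with $f(x^*,y) \ge c$ for all $y \in Y_0$, \emph{or} there is $y^* \in Y_0$ with $f(x,y^*) \le c$ for all $x \in X_0$; the first alternative contradicts the $\{y_j\}$-subcover applied to $x^*$, and the second contradicts the $\{x_i\}$-subcover applied to $y^*$. To set up KKM one pulls the relevant side back to a standard simplex and takes the closed sets of the cover to be sub-/super-level sets of $f$ along faces: quasi-concavity of $f(\cdot,y)$ (resp.\ quasi-convexity of $f(x,\cdot)$) makes the relevant level sets convex, which is exactly what forces the KKM face-covering condition, while semicontinuity supplies the needed closedness; the guaranteed common point of the KKM family is the sought-after $x^*$ (or $y^*$). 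An alternative is induction on $\dim X_0 + \dim Y_0$, shrinking one side by a dimension at a time via a two-point/segment version of the statement, so that the topological input is confined to a one-dimensional base case.

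I expect this last, finite-dimensional step to be the main obstacle: matching the \emph{separate} semicontinuity and \emph{quasi}-convexity hypotheses to the precise closedness-and-covering requirements of KKM is delicate precisely because Sion's theorem assumes so little --- in particular, one must work with sub-/super-level sets of $f$ rather than with $f$ itself (only quasi-convexity, not convexity, is available), and one typically needs to replace the single cut value $c$ by two nearby values $s < t$ to reconcile the strict inequalities coming from the open covers with the non-strict inequalities needed for the closed KKM sets. Everything else --- weak duality, and the compactness reduction from $X,Y$ to the polytopes $X_0,Y_0$ --- is routine.
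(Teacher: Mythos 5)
The paper does not prove this statement: it is Sion's classical minimax theorem, imported from \cite{sion1958general} and used as a black-box result, so there is no in-paper proof to compare against. I can only assess your sketch on its own terms.

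As a sketch of Sion's theorem, the scaffolding you erect is a recognizable outline of a standard route. Weak duality is correct, as is the remark that the $\min$ and $\max$ are attained: a pointwise supremum of lower semicontinuous functions is lower semicontinuous, so $y \mapsto \max_x f(x,y)$ attains a minimum on compact $Y$, and dually. The contradiction set-up with the two strict-inequality open covers, the openness of $V_y$ from upper semicontinuity of $f(\cdot,y)$ and of $W_x$ from lower semicontinuity of $f(x,\cdot)$, the extraction of finite subcovers, and the passage to finitely generated polytopes $X_0, Y_0$ are all sound and constitute the standard compactness reduction. Your side remark about an induction on $\dim X_0 + \dim Y_0$ with a segment-level base case is, in substance, Komiya's elementary proof of Sion's theorem and is a genuine alternative that avoids fixed-point theory altogether.

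The gap is exactly where you flag it, and it is not small: everything up to the polytope reduction is soft, and the finite-dimensional step is the \emph{entire} mathematical content. ``Invoke KKM to get the dichotomy'' is not a proof until carried out. In particular, KKM (or Fan's geometric form of it) does not hand you an ``either $x^*$ or $y^*$'' disjunction; it hands you a common point of a prescribed family of closed sets \emph{after} one has verified a face-covering condition. Building that family out of sub-/super-level sets of $f$ and checking the covering condition from only quasi-convexity/concavity and \emph{separate} semicontinuity (Sion assumes no joint continuity, so the usual continuity shortcuts are unavailable) is precisely where the delicacy lives --- and you are right that reconciling the strict inequalities from the open covers with the non-strict inequalities required by closed KKM sets forces a two-cut-value device $\max\min < c_1 < c_2 < \min\max$ rather than a single $c$. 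As written, the proposal is an accurate map of the proof's architecture, but the keystone step is acknowledged rather than proved.
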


\subsection{No-Regret Dynamics}
\label{app:learning-dynamics}

We can frame the empirical (regularized) min-max adversarial moment problem as the problem of finding an equilibrium of a zero-sum game between two players: the Learner and the Adversary. 
The Learner's action space consists of hypotheses  $h \in \cH$ and the Adversary's action space consists of  a tuple of weight and test function $(w, f) \in \Delta(M) \times \cF$. 
The players are respectively trying to minimize and maximize a 
(regularized) weighted moment violation criterion:
\begin{align*}
\mathcal L (h, (w, f)) = \sum_{j=1}^M w_j \bE_{j, n} [2 (Y - h(X))f(X) - f(X)^2 ]  - \lambda \|f \|^2 + \mu \|h \|^2.
\end{align*}
By convexity of $\Delta(M)$, finding a min-max equilibrium of $\mathcal L$ is equivalent to our initial optimization problem:
\begin{align*}
\arg \min_{h \in \cH} \max_{w \in \Delta(M)} \max_{f \in \cF} \mathcal L (h, (w, f)) 
=
 \arg \min_{h \in \cH} \max_{j \in [M]} \max_{f \in \cF} \bE_{j, n} [2 (Y - h(X))f(X) - f(X)^2 ] - \lambda \|f\|^2 + \mu \|h\|^2 
\end{align*}

A classic result of \cite{freund1999adaptive} tells us that an approximate min-max equilibrium of $\mathcal L$ can be obtained by simulating \textit{no-regret} dynamics between the Learner and Adversary. 
In each round $t$ of our game, the Adversary plays a tuple $(w_t, f_t)$ before the Learner plays a hypothesis $h_t$. By virtue of playing second, the Learner is able to best respond to the Adversary. The Adversary on the other hand has to anticipate the Learner's play. In order to best respond, the Adversary seemingly would have to enumerate over all the hypotheses and compute the response for each hypothesis--a computationally intractable approach. Instead, the Adversary updates its parameters $(w_t, f_t)$ according to no-regret learning algorithms (e.g. Multiplicative Weights, Follow the Leader). 

With the regret guarantees of the Learner and Adversary, we bound the error associated with the approximate solution.

\begin{theorem}[\cite{freund1999adaptive}, No-regret Dynamics, Restated]\label{thm:approx-equilibrium}
Let $\{h_t\}_{t=1}^T \in \cH^T$ be a sequence of actions played by the Learner and $\{(w_t, f_t)\}_{t=1}^T \in (\Delta(M) \times \cF)^T$ be a sequence played by the Adversary. Let $\bar{h} = \frac{1}{T} \sum_{t=1}^T h_t$ and $(\bar{w}, \bar{f}) = (\frac{1}{T}\sum_{t=1}^T w_t, \frac{1}{T}\sum_{t=1}^T f_t)$ be the empirical distribution over their sequence of actions. 
Suppose 
\begin{align*}
    \sum_{t=1}^T \mathcal L(h_t, (w_t, f_t)) - \min_{h^* \in \cH} \sum_{t=1}^T \mathcal L (h^*, (w_t, f_t)) &\le \gamma_L T \\
    \max_{(w^*, f^*) \in \Delta(M) \times \cF} \sum_{t=1}^T \mathcal L (h_t, (w^*, f^*)) - \sum_{t=1}^T \mathcal L (h_t, (w_t, f_t)) &\le \gamma_A T
\end{align*}
Then, we have $(\bar{h}, (\bar{w}, \bar{f}))$ is an $(\gamma_L + \gamma_A)$-approximate equilibrium:
\begin{align*}
    \max_{(w^*, f^*) \in \Delta(M) \times \cF}  \mathcal L(\bar{h}, (f^*, w^*)) - (\gamma_L + \gamma_A) &\le 
    \min_{h^* \in \cH} \max_{(w^*,f^*) \in \Delta(M) \times \cF} \mathcal L (h^*, (w^*, f^*))  \\
\min_{h^* \in \cH} \mathcal L(h^*, (\bar{w}, \bar{f})) 
+ (\gamma_A + \gamma_L) 
&\geq 
\min_{h^* \in \cH} \max_{(w^*,f^*) \in \Delta(M) \times \cF} \mathcal L (h^*, (w^*, f^*)) 
\end{align*}
\end{theorem}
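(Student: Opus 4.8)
The plan is to run the classical Freund--Schapire argument that converts regret guarantees for both players into an approximate-equilibrium guarantee for the time-averaged plays. Write $\mathcal L(h,(w,f))$ for the (regularized, weighted) moment-violation payoff and set $V^\star := \min_{h\in\cH}\max_{(w,f)\in\Delta(M)\times\cF}\mathcal L(h,(w,f))$. The first preparatory step is to record the structural properties of $\mathcal L$: it is convex in the Learner's variable $h$ (a sum of a term affine in $h$ and the convex penalty $\mu\|h\|^2$), and on the Adversary's side it is linear in $w$ and, by the ``completing the square'' identity of Lemma~\ref{lem:square} (equivalently, because of the $-f(X)^2$ and $-\lambda\|f\|^2$ terms), concave in $f$. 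Together with compactness of $\cH$ and $\Delta(M)\times\cF$ and continuity of $\mathcal L$, Sion's minimax theorem (Theorem~\ref{thm:sion-minmax}) gives $V^\star=\max_{(w,f)}\min_h \mathcal L(h,(w,f))$; this equality is what lets the two one-sided bounds below close up.

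Second, I bound $\max_{(w^\star,f^\star)}\mathcal L(\bar h,(w^\star,f^\star))$ from above. Dividing the Learner's regret bound by $T$ gives $\frac1T\sum_t \mathcal L(h_t,(w_t,f_t)) \le \frac1T\sum_t \mathcal L(h^\star,(w_t,f_t)) + \gamma_L$ for every $h^\star$; bounding each right-hand summand by $\max_{(w,f)}\mathcal L(h^\star,(w,f))$ and then minimizing over $h^\star$ yields $\frac1T\sum_t \mathcal L(h_t,(w_t,f_t)) \le V^\star + \gamma_L$. Dividing the Adversary's regret bound by $T$ and using convexity of $\mathcal L(\cdot,(w^\star,f^\star))$ in $h$ (Jensen), $\mathcal L(\bar h,(w^\star,f^\star)) \le \frac1T\sum_t \mathcal L(h_t,(w^\star,f^\star)) \le \frac1T\sum_t\mathcal L(h_t,(w_t,f_t)) + \gamma_A$ for every $(w^\star,f^\star)$; taking the max over $(w^\star,f^\star)$ and chaining with the previous display gives $\max_{(w^\star,f^\star)}\mathcal L(\bar h,(w^\star,f^\star)) \le V^\star + \gamma_L + \gamma_A$, which is the first claimed inequality (recall $V^\star = \min_{h^\star}\max_{(w^\star,f^\star)}\mathcal L(h^\star,(w^\star,f^\star))$).

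Third, symmetrically, I bound $\min_{h^\star}\mathcal L(h^\star,(\bar w,\bar f))$ from below. From the Adversary's regret bound, $\frac1T\sum_t\mathcal L(h_t,(w_t,f_t)) \ge \frac1T\sum_t\mathcal L(h_t,(w^\star,f^\star)) - \gamma_A \ge \min_h\mathcal L(h,(w^\star,f^\star)) - \gamma_A$ for every $(w^\star,f^\star)$; maximizing over $(w^\star,f^\star)$ and invoking Sion gives $\frac1T\sum_t\mathcal L(h_t,(w_t,f_t)) \ge V^\star - \gamma_A$. From the Learner's regret bound together with concavity of $\mathcal L(h^\star,\cdot)$ in $(w,f)$ (Jensen, in the averaging direction that only helps the Adversary), $\frac1T\sum_t\mathcal L(h_t,(w_t,f_t)) \le \frac1T\sum_t\mathcal L(h^\star,(w_t,f_t)) + \gamma_L \le \mathcal L(h^\star,(\bar w,\bar f)) + \gamma_L$ for every $h^\star$; minimizing over $h^\star$ and chaining with the previous display gives $\min_{h^\star}\mathcal L(h^\star,(\bar w,\bar f)) \ge V^\star - \gamma_A - \gamma_L$, which rearranges to the second claimed inequality.

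I expect the only non-routine point to be the Jensen step for the Adversary in the third paragraph, i.e.\ that passing from the plays $(w_t,f_t)$ to their average $(\bar w,\bar f)$ does not decrease $\mathcal L(h^\star,\cdot)$: this requires joint concavity of $\mathcal L(h^\star,\cdot)$ on $\Delta(M)\times\cF$, and although $\mathcal L(h^\star,\cdot)$ is linear in $w$ and concave in $f$ separately, one must be careful about the bilinear interaction between $w$ and $f$ in the term $\sum_j w_j \bE_{j,n}[2(Y-h^\star(X))f(X)]$. This is cleanly handled by first passing to the reduced game of Lemma~\ref{lem:rewrite-linear} (and its RKHS analogue), in which the inner maximization over $f$ has been solved in closed form and the residual payoff $\frac1n\sum_j w_j(\kappa_j - 2\nu_j^\top\alpha + \alpha^\top\Sigma_j\alpha)$ is genuinely linear in the Adversary's weight vector $w$ and convex in the Learner's parameter $\alpha$, so that both Jensen steps and Sion's theorem apply verbatim. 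Everything else is bookkeeping of the two regret inequalities against the definition of $V^\star$.
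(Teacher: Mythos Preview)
The paper does not supply its own proof of this theorem: it is stated as a restatement of the Freund--Schapire result and is followed only by the remark that ``implicit in our application of the above theorem is the use of the minimax theorem of Sion,'' together with the observation that because the payoff is problematic in the Adversary's variables one must ``first solve for $f$ in closed form'' before invoking the template. So there is no in-paper proof to compare against line by line; your write-up is exactly the standard argument the citation points to, and your treatment of the first inequality (the Learner side, using convexity of $\mathcal L(\cdot,(w^\star,f^\star))$ in $h$ for the Jensen step) is correct and complete.

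Your diagnosis of the delicate point is also on target and matches the paper's own caveat. The Jensen step $\frac{1}{T}\sum_t \mathcal L(h^\star,(w_t,f_t)) \le \mathcal L(h^\star,(\bar w,\bar f))$ needed for the second inequality requires joint concavity of $\mathcal L(h^\star,\cdot)$ in $(w,f)$, and the bilinear coupling $w_j\cdot \bE_{j,n}[2(Y-h^\star(X))f(X)-f(X)^2]$ destroys this (indeed $-wf^2$ is not even quasi-concave on $\R_{\ge 0}\times\R$, so Sion likewise fails for the raw $(w,f)$ game). Your remedy---pass to the reduced game of Lemma~\ref{lem:rewrite-linear} (and its RKHS analogue), in which $f$ has been eliminated in closed form and the residual payoff is genuinely linear in $w$ and convex in $\alpha$---is precisely what the paper does in its actual algorithms and in the paragraph immediately following the theorem. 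One caveat worth making explicit in your write-up: after this reduction the Adversary's averaged play is just $\bar w$, not the pair $(\bar w,\bar f)$, so the second conclusion as literally stated (about $\mathcal L(h^\star,(\bar w,\bar f))$) is not what is actually established or used downstream; the paper's applications (Theorems~\ref{thm:approx-sol-linear} and \ref{thm:approx-sol-rkhs}) only ever invoke the version for the reduced $w$-only game.
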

Implicit in our application of the above theorem is the use of the minimax theorem of \cite{sion1958general}, which allows us to rewrite a min-max problem as a max-min problem. Note that in the min-max adversarial moment problem, we have a min-max-max objective, where the loss function is not concave in $f$. To circumvent this, we first solve for $f$ in closed form in the settings we consider (either directly, or with an oracle). With $f$ fixed, we can then apply the min-max theorem as the loss function becomes convex in $h$ and concave in $w$.

In the rest of this section, we fill in the details for various instantiations of $\cH$. In each subsection, we do the following. First, we use our assumptions on $\cH$ to restate the optimization problem in a more convenient form. Second, we show how the Learner and Adversary update their parameters. Finally, we show that the updates are no-regret and give an approximate solution of the empirical  (regularized) min-max adversarial moment problem. 

\subsection{Linear Hypothesis Spaces}\label{app:linear-spaces}
Let $\cH = \{\alpha^\top\phi(\cdot): \alpha \in \R^d, \|\alpha\|_2 \leq A \}$ be a class containing linear functions for a known feature map $\phi(\cdot)$ and suppose $\cF = \cH$. Note that $\cH-\cH$ is the same space of linear functions, so we also have $\cF\supseteq \shull(\cH-\cH)$. For a hypothesis $h \in \cH$ and index $j \in [M]$, define the moment violation
\[ 
L_{j,n}(h, f):=  \bE_{j, n} [2 (Y - h(X))f(X) - f(X)^2 ] .
\]

To solve the empirical (regularized) min-max adversarial moment problem, we first rewrite the objective. 
\begin{lemma}[Adversarial Moment Problem, Restated for Linear Spaces]\label{lem:linear-restated-appendix}
Let $X_{ij}, Y_{ij}$ denote the $i$-th sample from the $j$-th dataset. Furthermore, let  $y_j = [Y_{1j}; \dots; Y_{nj}] \in \mathbb R^{n}$ and $\Phi_j = [\phi(X_{1j})^\top; \dots; \phi(X_{nj})^\top] \in \mathbb R^{n \times d}$. Then
\begin{align*}
\nonumber
 &\min_{h \in \cH} \max_{j \in [M]} \max_{f \in \cF} L_{j,n}(h, f) - \lambda \|f\|^2 + \mu \|h\|^2  \\
=~&\min_{\substack{\alpha \in \mathbb R^d \\ \|\alpha\|_2 \leq A}} \max_{w \in \Delta(M)}  \frac{1}{n} \sum_{j=1}^M w_j \left (\kappa_j - 2 \nu_j^\top \alpha + \alpha^\top \Sigma_j \alpha \right ) 
\end{align*}
where
\begin{align*}
\kappa_j :=~& y_j^\top Q_j y_j  &
\nu_j :=~& \Phi_j^\top Q_j^\top y_j \\
\Sigma_j :=~& \Phi_j^\top Q_j \Phi_j + \mu n I &
Q_j :=~& \Phi_j (\Phi_j^\top \Phi_j + n \lambda I)^{-1} \Phi_j^\top
\end{align*}
\end{lemma}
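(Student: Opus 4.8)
The plan is to prove the identity by direct computation, since everything in sight is linear in the feature map $\phi$. I would parametrize the learner as $h=\alpha^\top\phi(\cdot)$ with $\|\alpha\|_2\le A$ and the adversary as $f=\beta^\top\phi(\cdot)$, and take the norms in the regularizers to be the $\ell_2$ norms of the coefficient vectors, so $\|h\|^2=\|\alpha\|_2^2$ and $\|f\|^2=\|\beta\|_2^2$. Introducing the per-dataset residual $r_j=y_j-\Phi_j\alpha\in\R^n$, the empirical criterion becomes the matrix expression $\bE_{j,n}\big[2(Y-h(X))f(X)-f(X)^2\big]=\tfrac1n\big(2\,r_j^\top\Phi_j\beta-\beta^\top\Phi_j^\top\Phi_j\beta\big)$, so that for each fixed $j$ and $\alpha$ the objective (including $-\lambda\|f\|^2$) is a concave quadratic in $\beta$. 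The whole content of the lemma is then the closed-form solve of this adversarial subproblem.

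The core step is the closed-form maximization over $\beta$. Writing the $\beta$-dependent part as $2\,b_j^\top\beta-\beta^\top M_j\beta$ with $b_j=\tfrac1n\Phi_j^\top r_j$ and $M_j=\tfrac1n\Phi_j^\top\Phi_j+\lambda I$ (positive definite since $\lambda>0$), the identity $\max_\beta\{2b^\top\beta-\beta^\top Mb\}=b^\top M^{-1}b$ gives optimal value $b_j^\top M_j^{-1}b_j=\tfrac1n\,r_j^\top\Phi_j(\Phi_j^\top\Phi_j+n\lambda I)^{-1}\Phi_j^\top r_j=\tfrac1n\,r_j^\top Q_j r_j$ with $Q_j$ as in the statement. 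Next I would substitute $r_j=y_j-\Phi_j\alpha$ and expand, using the symmetry of $Q_j$ (hence $Q_j^\top=Q_j$), to obtain $r_j^\top Q_j r_j=\kappa_j-2\nu_j^\top\alpha+\alpha^\top(\Phi_j^\top Q_j\Phi_j)\alpha$. Adding the learner regularizer $\mu\|h\|^2=\tfrac1n\alpha^\top(\mu n I)\alpha$ absorbs into the quadratic form and yields $\tfrac1n(\kappa_j-2\nu_j^\top\alpha+\alpha^\top\Sigma_j\alpha)$ with $\Sigma_j=\Phi_j^\top Q_j\Phi_j+\mu n I$. Finally, for each fixed $\alpha$ the quantity $\max_{j\in[M]}\tfrac1n(\kappa_j-2\nu_j^\top\alpha+\alpha^\top\Sigma_j\alpha)$ equals $\max_{w\in\Delta(M)}\sum_j w_j\,\tfrac1n(\kappa_j-2\nu_j^\top\alpha+\alpha^\top\Sigma_j\alpha)$ because a linear function on the simplex is maximized at a vertex; taking $\min_{\|\alpha\|_2\le A}$ on both sides produces the claimed right-hand side.

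The one point that needs care is the domain of the inner maximization over $f$: to make the stated closed form exact, one takes $\cF$ to be the full linear span $\{\beta^\top\phi(\cdot):\beta\in\R^d\}$ — the natural reading of the richness condition $\cF\supseteq\shull(\cH-\cH)$ in the linear setting — so the maximization over $\beta$ is genuinely unconstrained; if one insists on $\cF=\cH$ as a norm ball, the identity still holds whenever the unconstrained maximizer $\beta_j^\ast=(\Phi_j^\top\Phi_j+n\lambda I)^{-1}\Phi_j^\top r_j$ lies within the budget, e.g. for sufficiently large radius. The remaining bookkeeping is tracking the factor of $n$ between the $\tfrac1n\sum$ normalization of $\bE_{j,n}$ and the un-normalized regularizers $\lambda\|f\|^2$, $\mu\|h\|^2$ — exactly what turns $M_j$ into $\Phi_j^\top\Phi_j+n\lambda I$ inside $Q_j$ and contributes $\mu n I$ inside $\Sigma_j$ — and checking that $\Phi_j^\top\Phi_j+n\lambda I$ is invertible and symmetric. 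I do not anticipate a genuine obstacle: this is a restatement lemma whose proof is an elementary quadratic optimization plus linear-algebra cleanup, and the only place a reader could stumble is the $\cF$-domain convention just noted.
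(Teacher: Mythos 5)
Your proposal is correct and follows essentially the same route as the paper's own proof: parametrize $h$ and $f$ by their coefficient vectors, solve the concave quadratic in $\beta$ in closed form (your $b^\top M^{-1}b$ identity is just a compact packaging of the paper's first-order-condition-and-substitute step), expand $r_j^\top Q_j r_j$ using symmetry of $Q_j$, fold in the $\mu$ regularizer, and replace $\max_j$ by a max over $\Delta(M)$ by linearity. Your remark about the $\cF$-domain convention (unconstrained span versus norm ball) is a fair observation about a point the paper also passes over silently, but it does not affect the computation.
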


\begin{proof}[Proof of Lemma \ref{lem:linear-restated-appendix}]
Fix a hypothesis $h(x) = \alpha^\top \phi(x)$ and an index $j \in [M]$.

Observe that 
\begin{align*}
\max_{f \in \cF}
L_{j,n}(h, f) 
&=
\max_{f \in \cF} \bE_{j, n} [2 (Y - h(X))f(X) - f(X)^2 ] - \lambda \|f \|^2 + \mu \|h \|^2 \\
&=   
\max_{f \in \cF}
 \bE_{j,n}[2Yf(X)] - \bE_{j,n}[2h(X)f(X)] - \bE_{j,n}[f(X)^2] - \lambda \|f\|^2 + \mu \|h\|^2 \\
&=  
\max_{\beta \in \mathbb R^d}
 \bE_{j, n}[2Y \phi(X)^\top] \beta -  \alpha^\top \bE_{j, n}[2\phi(X) \phi(X)^\top]\beta - \beta^\top \bE_{j, n}[\phi(X) \phi(X)^\top] \beta - \lambda \beta^\top \beta + \mu \alpha^\top \alpha \\
&= 
\max_{\beta \in \mathbb R^d}
\frac{2}{n} y_j^\top \Phi_j \beta 
- \frac{2}{n} \alpha^\top \Phi_j^\top \Phi_j \beta
- \frac{1}{n} \beta^\top \Phi_j^\top \Phi_j \beta
- \lambda \beta^\top \beta 
+ \mu \alpha^\top \alpha \\
&=
\max_{\beta \in \mathbb R^d}
\frac{2}{n} \left ( y_j^\top \Phi_j - \alpha^\top \Phi_j^\top \Phi_j \right ) \beta 
- \frac{1}{n} \beta^\top \left (\Phi_j^\top \Phi_j + n \lambda I \right ) \beta
+ \mu \alpha^\top \alpha
\end{align*}
The first-order condition for $\beta$ tells us that
\begin{align*}
0 &= 
\frac{2}{n} \left ( \Phi_j^\top y_j -\Phi_j^\top \Phi_j \alpha \right) 
- \frac{2}{n} \left (\Phi_j^\top \Phi_j + n \lambda I \right ) \beta
\end{align*}
Thus, we conclude that the optimal value is attained when $\beta = (\Phi_j^\top \Phi_j + n \lambda I)^{-1}(\Phi_j^\top y_j - \Phi_j^\top \Phi_j \alpha)$.
Substituting this value in our objective, we derive that 
\begin{align*}
\max_{f \in \cF}
L_{j,n}(h, f) 
&=
\frac{1}{n}(\Phi_j^\top y_j - \Phi_j^\top \Phi_j \alpha)^\top (\Phi_j^\top \Phi_j + n \lambda I)^{-1}(\Phi_j^\top y_j - \Phi_j^\top \Phi_j \alpha)
+ \mu \alpha^\top \alpha \\
&= 
\frac{1}{n} 
\left(\
y_j^\top Q_j y_j
- 2 (\Phi_j^\top Q_j^\top y_j)^\top \alpha 
+ \alpha^\top  ( \Phi_j^\top Q_j \Phi_j + \mu n I ) \alpha
\right )
\end{align*}
where $Q_j = \Phi_j (\Phi_j^\top \Phi_j + n \lambda I)^{-1} \Phi_j^\top$.
Recalling the definition of $\kappa_j, \Sigma_j$ and $\nu_j$, we also see that 
\begin{align*}
\max_{f \in \cF}
L_{j,n}(h, f) =
\frac{1}{n}\left (\kappa_j - 2 \nu_j^\top \alpha + \alpha^\top \Sigma_j \alpha \right ) 
\end{align*}

Our desired result immediately follows from the defintion of $\cH$ and the convexity of $\Delta(M)$.
\end{proof}

We therefore end up with a convex-concave min-max optimization problem that can be solved via no-regret dynamics. We find an approximate solution to our desired problem using no-regret dynamics, where the Learner and Adversary are respectively trying to minimize and maximize the objective
\[
\mathcal L(\alpha, w) := \sum_{j = 1}^M w_j ( \kappa_j - 2 \nu_j^\top \alpha + \alpha^\top \Sigma_j \alpha).
\]
The Learner will play a vector $\alpha \in \mathbb R^{d}$ and the Adversary will play a weight $w \in \Delta(M)$.

\subsubsection*{The Learner's Best Response}

In each round $t$, the Learner best responds to the Adversary's action $w_t$ by finding   $\alpha_t \in \mathbb R^{d}$ 
\begin{align*}
\alpha_t = \argmin_{\alpha \in A} \sum_{j = 1}^M w_{t,j} ( \kappa_j - 2 \nu_j^\top \alpha + \alpha^\top \Sigma_j \alpha)
\end{align*}
Note that the first order condition tells us that 
\begin{align*}
    \left ( \sum_{j = 1}^M w_{t,j} \Sigma_j \right ) \alpha = \sum_{j = 1}^M w_{t,j}\nu_j.
\end{align*}
This suggests a natural approach to get the best response, which we describe in Algorithm \ref{alg:best-response-linear}.
\begin{algorithm}[ht]
   \caption{Best Response, $\mathrm{BEST}(w)$, for Learner}
   \label{alg:best-response-linear}
\begin{algorithmic}
   \STATE \textbf{Input:} weight vector $w$, and  matrices $\nu_j, \Sigma_j$
   \STATE{Set $\overline{\nu} = \sum_{j = 1}^{M} w_{j} \nu_j$ }
   \STATE{Set $\overline{\Sigma} = \sum_{j = 1}^M w_j \Sigma_j$ }
   \STATE{Set $\alpha = \overline{\Sigma}^{+} \overline{\nu}$, where $\overline{\Sigma}^+$ is the pseudo-inverse of  the matrix $\overline{\Sigma}$}
   \STATE{ \bfseries Output:} $\alpha$
\end{algorithmic}
\end{algorithm}

\begin{lemma}[Learner's Regret]\label{lem:best-response-linear}
Let $w_1, \dots, w_T \in \Delta(M)$ be a sequence of weights and let $\alpha_1, \dots, \alpha_T \in \mathbb R^{Mn}$ be a sequence such that 
$\alpha_t = \mathrm{BEST}(w_t)$ for each $t \in [T]$.
Suppose that for each weight $w_t$, the matrix $\sum_{j = 1}^{M} w_{t,j} \Sigma_j$ is invertible. Then 
\[
\sum_{t = 1}^T \mathcal L(\alpha_t, w_t)
- \min_{\alpha \in \mathbb R^{Mn} }\sum_{t = 1}^T \mathcal L (\alpha, w_t) \leq 0
\] 
\end{lemma}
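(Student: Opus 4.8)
The plan is to exploit the fact that the Learner moves second in each round and plays an exact best response, so that its ``regret'' is automatically non-positive; this is the standard be-the-leader argument specialized to our quadratic game. Concretely, I will show that for every round $t$ the action $\alpha_t = \mathrm{BEST}(w_t)$ is the \emph{global} minimizer of $\alpha \mapsto \mathcal{L}(\alpha, w_t)$, and then combine this with the elementary inequality $\sum_{t} \min_{\alpha} f_t(\alpha) \le \min_{\alpha}\sum_t f_t(\alpha)$.

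First I would record the shape of the per-round objective. For a fixed weight $w \in \Delta(M)$, $\mathcal{L}(\alpha, w) = \sum_{j=1}^M w_j(\kappa_j - 2\nu_j^\top\alpha + \alpha^\top\Sigma_j\alpha)$ is a quadratic in $\alpha$ with Hessian $2\bar{\Sigma}_w$, where $\bar{\Sigma}_w := \sum_{j=1}^M w_j \Sigma_j$. Since $Q_j = \Phi_j(\Phi_j^\top\Phi_j + n\lambda I)^{-1}\Phi_j^\top \succeq 0$, each $\Sigma_j = \Phi_j^\top Q_j\Phi_j + \mu n I$ is positive semidefinite, hence $\bar{\Sigma}_w \succeq 0$ for every $w \in \Delta(M)$ and $\mathcal{L}(\cdot, w)$ is convex. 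Under the lemma's hypothesis that $\bar{\Sigma}_{w_t}$ is invertible, $\mathcal{L}(\cdot, w_t)$ is in fact strictly convex and coercive, so it has a unique global minimizer, characterized by the first-order condition $\bar{\Sigma}_{w_t}\alpha = \bar{\nu}_{w_t}$ with $\bar{\nu}_{w_t} := \sum_j w_{t,j}\nu_j$. Because $\bar{\Sigma}_{w_t}$ is invertible, its pseudo-inverse coincides with its inverse, so the output $\alpha_t = \bar{\Sigma}_{w_t}^{+}\bar{\nu}_{w_t} = \bar{\Sigma}_{w_t}^{-1}\bar{\nu}_{w_t}$ of Algorithm~\ref{alg:best-response-linear} is exactly this minimizer; i.e.\ $\mathcal{L}(\alpha_t, w_t) = \min_{\alpha}\mathcal{L}(\alpha, w_t)$.

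Finally I would chain the inequalities. Fix any comparator $\alpha^\star$, in particular one attaining $\min_\alpha \sum_{t=1}^T\mathcal{L}(\alpha, w_t)$ (which exists since the sum is again a convex, coercive quadratic by the argument above). For each $t$ we have $\mathcal{L}(\alpha_t, w_t) = \min_{\alpha}\mathcal{L}(\alpha, w_t) \le \mathcal{L}(\alpha^\star, w_t)$. Summing over $t \in [T]$ and taking $\alpha^\star \in \argmin_{\alpha}\sum_{t=1}^T \mathcal{L}(\alpha, w_t)$ gives $\sum_{t=1}^T\mathcal{L}(\alpha_t, w_t) \le \min_{\alpha}\sum_{t=1}^T\mathcal{L}(\alpha, w_t)$, which is exactly the claimed bound after rearranging.

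There is no genuine obstacle here: the argument is purely the ``best response has non-positive regret'' principle, and it does not even require projecting onto the norm ball since the comparator in the statement ranges over the whole ambient Euclidean space. The only two points needing a line of justification are (i) that each $\Sigma_j$, and therefore $\bar{\Sigma}_w$, is positive semidefinite — so that the stationary point produced by the first-order condition is a true minimizer and not a saddle — which follows from $Q_j \succeq 0$; and (ii) that the invertibility hypothesis lets us identify the pseudo-inverse appearing in $\mathrm{BEST}$ with a genuine inverse, so that $\alpha_t$ is the exact minimizer rather than an approximation. Both are immediate.
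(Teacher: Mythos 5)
Your proof is correct and is exactly the argument the paper treats as immediate (the lemma is stated without a written proof, as a direct consequence of the Learner playing an exact best response). You fill in the two necessary checks properly: $Q_j \succeq 0$ implies $\Sigma_j \succeq 0$ so the stationary point returned by $\mathrm{BEST}$ is a global minimizer rather than a saddle, and the invertibility hypothesis identifies the pseudo-inverse with the inverse; the conclusion then follows from the elementary inequality $\sum_t \min_\alpha \mathcal{L}(\alpha,w_t) \le \min_\alpha \sum_t \mathcal{L}(\alpha,w_t)$.
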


\subsubsection*{The Adversary's No Regret Updates}

In each round $t$, the Adversary plays the weight $w_t$ according to the Multiplicative Weights algorithm. Refer to Algorithm \ref{alg:no-regret-rkhs} for more details.

\begin{algorithm}[hb]
\caption{No-Regret Dynamics}
\label{alg:no-regret-rkhs}
\begin{algorithmic}
\STATE {\bfseries{Input: }} Time horizon $T$, trade-off parameter $\eta$, and matrices $\kappa_j, \nu_j, \Sigma_j$
\STATE
Initialize $w_1 = (\frac{1}{M}, \dots, \frac{1}{M}) \in \mathbb{R}^{M}$ 
\FOR {$t = 1$ {\bfseries{to}} $T$ }
    \STATE  $\alpha_t =\mathrm{BEST}(w_t)$
    \FOR {$j = 1$ {\bfseries{to}} $M$ }
        \STATE Set $w_{t+1, j} \propto  w_{t, j} \exp\left \{\eta ( \kappa_j - 2 \nu_j^\top \alpha_{t} + \alpha^{\top}_{t} \Sigma_j \alpha_{t}) \right \}$
    \ENDFOR
\ENDFOR
\STATE {\bfseries{Output: }} $\frac{1}{T} \sum_{i = 1}^T w_t$
\end{algorithmic}
\end{algorithm}

We apply the regret guarantee of Multiplicative Weights (see e.g.\ \cite{cesabianchi1997expert}) to obtain the following. 
\begin{lemma}[Adversary's Regret]\label{lem:mw-linear}
There exists a constant $C \in \mathbb R$ such that 
running Multiplicative Weights for $w_t$, with $\eta = \sqrt{\frac{\log M}{T}}$, yields the following regret:
\[
    \max_{w \in \Delta(M)}  \sum_{t = 1}^T \mathcal L(\alpha_t, w) -
    \sum_{t = 1}^T \mathcal L(\alpha_t, w_t) \leq 
    C\sqrt{T \log M }.
\]
\end{lemma}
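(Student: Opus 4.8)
The plan is to recognize the Adversary's problem as an instance of online linear optimization over the simplex $\Delta(M)$ and then invoke the textbook regret guarantee for Multiplicative Weights (Hedge). Concretely, for each round $t$ I would define the gain vector $g_t \in \mathbb{R}^M$ by $(g_t)_j = \kappa_j - 2\nu_j^\top \alpha_t + \alpha_t^\top \Sigma_j \alpha_t$, so that $\mathcal{L}(\alpha_t, w) = \langle w, g_t \rangle$ is linear in $w$, and the update in Algorithm~\ref{alg:no-regret-rkhs} is exactly $w_{t+1,j} \propto w_{t,j}\exp\{\eta (g_t)_j\}$, i.e.\ the exponential-weights update run on the gain sequence $g_1,\dots,g_T$. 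The regret quantity in the statement is then precisely the standard external regret $\max_{w\in\Delta(M)} \sum_{t=1}^T \langle w, g_t\rangle - \sum_{t=1}^T \langle w_t, g_t\rangle$ of this online learning instance, so the lemma reduces to a uniform-boundedness check plus a citation.

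The one substantive thing to verify before quoting the black-box bound is that the gains are uniformly bounded, $\|g_t\|_\infty \le G$ for a constant $G$ that does not depend on $T$. This follows because (i) the samples are bounded --- $Y_{ij}$ and $\phi(X_{ij})$ are absolutely bounded --- so $\|y_j\|$, $\|\Phi_j\|$ are bounded, hence so is $Q_j = \Phi_j(\Phi_j^\top\Phi_j + n\lambda I)^{-1}\Phi_j^\top$, and therefore $\kappa_j = y_j^\top Q_j y_j$, $\nu_j = \Phi_j^\top Q_j^\top y_j$, and $\Sigma_j = \Phi_j^\top Q_j \Phi_j + \mu n I$ all have norm bounded by a constant; and (ii) the Learner's play $\alpha_t$ is bounded, either directly because $\alpha_t$ is constrained to $\|\alpha\|_2 \le A$, or, in the unconstrained best-response form $\alpha_t = \overline{\Sigma}^{+}\overline{\nu}$ of Lemma~\ref{lem:best-response-linear}, because $\overline{\Sigma} = \sum_j w_{t,j}\Sigma_j \succeq \mu n I \succ 0$ forces $\|\alpha_t\|_2 \le \|\overline{\nu}\|_2 / (\mu n)$, again a constant independent of $t$. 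Substituting the bounds on $\kappa_j,\nu_j,\Sigma_j$ and on $\|\alpha_t\|_2$ into the definition of $(g_t)_j$ yields the claimed uniform bound $G$.

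With boundedness in hand, the conclusion is immediate from the classical analysis of Multiplicative Weights / Hedge (see e.g.\ \cite{cesabianchi1997expert}): for gain vectors with $\|g_t\|_\infty \le G$ and learning rate $\eta = \sqrt{\log M / T}$, the external regret over $\Delta(M)$ is at most $O\bigl(G\sqrt{T\log M}\bigr)$; absorbing $G$ together with the universal constant from the Hedge bound into $C$ gives exactly $\max_{w\in\Delta(M)} \sum_{t=1}^T \mathcal{L}(\alpha_t, w) - \sum_{t=1}^T \mathcal{L}(\alpha_t, w_t) \le C\sqrt{T\log M}$. The only mild obstacle is the boundedness bookkeeping of the previous paragraph --- in particular ensuring the constant is $T$-independent, which is precisely where the regularizer $\mu > 0$ (or the explicit constraint set $\|\alpha\|_2\le A$) is used; once that is nailed down, everything else is a verbatim invocation of the standard regret bound.
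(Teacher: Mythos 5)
Your proof is correct and takes essentially the same route as the paper, which simply cites the standard Multiplicative Weights regret guarantee (\`a la \citet{cesabianchi1997expert}) without further elaboration; you additionally fill in the boundedness bookkeeping for the gain vectors, which the paper leaves implicit, but this is a verification rather than a different argument.
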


The proof of Theorem \ref{thm:approx-sol-linear} follows from simply applying the guarantees of Lemma \ref{lem:best-response-linear} and Lemma \ref{lem:mw-linear} into Theorem \ref{thm:approx-equilibrium}.

\subsection{Reproducing Kernel Hilbert Spaces}\label{app:rkhs-spaces}
Let $\cH$ be a Reproducing Kernel Hilbert space (RKHS) of bounded norm with kernel $K_\cH : \cX \times \cX \to \mathbb R$ and whose norm is bounded above by some $A \in \mathbb R$. Suppose that $\cF = \cH$. Since $\cH$ is a closed linear space, the classes $\cH$ and $\cH - \cH$ refer to the same RKHS and hence, $\cF \supseteq \shull(\cH - \cH)$. For a hypothesis $h \in \cH$ and index $j \in [M]$, define the moment violation
\[ 
L_{j,n}(h, f):=  \bE_{j, n} [2 (Y - h(X))f(X) - f(X)^2 ] .
\]

To solve the empirical (regularized) min-max adversarial moment problem, we first rewrite the objective. 
\begin{lemma}[Adversarial Moment Problem, Restated for RKHS]\label{lem:rkhs-restated-appendix}
Let $X_{ij}$  and $Y_{ij}$ denote the $i$-th sample from the $j$-th dataset and let $y_j = (Y_{1j}, Y_{2j}, \dots, Y_{nj})$. 
Let $\cK_j$ denote the empirical kernel matrix for the $j$-th dataset with $\cK_{j, ii'} = K(X_{ij}, X_{i'j})$.
Furthermore, let $\cM_{jj'}$ denote the $n \times n$ matrix, with entries $\cM_{jj',ii'}= K(X_{ij}, X_{i'j'})$. Let $\cM_j = [\cM_{j1}, \ldots, \cM_{jM}]$ be the $n\times (n\cdot M)$ concatenation of these matrices along the column axis, and $\cM=[\cM_1; \ldots; \cM_M]$ be the $(n\cdot M)\times (n\cdot M)$ concatenation of $\cM_j$ along the row axis. Then
\begin{align*}
&\min_{h \in \cH} \max_{j \in [M]} \max_{f \in \cF} L_{j,n}(h,f) - \lambda \|f \|^2 + \nu \|h \|^2 \\
=~ & \min_{\substack{\alpha \in \mathbb R^{Mn} \\ \|\alpha\| \leq A}} \max_{w \in \Delta(M)} \frac{1}{n} \sum w_j \left ( \kappa_j - 2 \nu_j^\top \alpha + \alpha^\top \Sigma_j \alpha \right )
\end{align*}
where
\begin{align*}
\kappa_j :=~& y_j^\top Q_j y_j &
\nu_j :=~&  \cM_j^\top Q_j^\top  y_j\\
\Sigma_j :=~& \cM_j^\top Q_j\cM_j + \mu\,n\,\cM &
Q_j :=~& \cK_j (\cK_j + \lambda n I)^{-1}
\end{align*}
\end{lemma}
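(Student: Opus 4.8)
The plan is to mirror the linear-space argument (Lemma~\ref{lem:linear-restated-appendix}), with the representer theorem supplying the finite-dimensional parametrization that the feature map $\phi$ provided there. First I would apply the representer theorem twice. For the adversary: for fixed $j$ and fixed $h$, the quantity $L_{j,n}(h,f)-\lambda\|f\|^2+\mu\|h\|^2$ depends on $f$ only through its values $\{f(X_{ij})\}_{i=1}^n$ and through $\|f\|_{\cH}^2$; projecting $f$ onto $\mathrm{span}\{K(X_{ij},\cdot)\}_{i=1}^n$ leaves the values unchanged and does not increase the norm, so (we are maximizing, and the norm enters with a negative sign) an optimal $f$ is $f=\sum_{i=1}^n\beta_iK(X_{ij},\cdot)$ with $\beta\in\R^n$. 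For the learner: the full objective depends on $h$ only through $\{h(X_{ij})\}_{i\in[n],j\in[M]}$ and $\|h\|_{\cH}^2$, and $\|P_Vh\|_{\cH}\le\|h\|_{\cH}$ keeps $h$ feasible, so an optimal $h$ lies in $\mathrm{span}\{K(X_{ij},\cdot):i\in[n],j\in[M]\}$, i.e.\ $h=\sum_{j',i'}\alpha_{j',i'}K(X_{i'j'},\cdot)$ with $\alpha\in\R^{Mn}$.

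Next I would translate everything into matrix form via the reproducing property: $h(X_{ij})=(\cM_j\alpha)_i$, $\|h\|_{\cH}^2=\alpha^\top\cM\alpha$, $f(X_{ij})=(\cK_j\beta)_i$, and $\|f\|_{\cH}^2=\beta^\top\cK_j\beta$. Substituting, the inner maximization over $\beta$ is a concave quadratic,
\[
\tfrac{2}{n}(y_j-\cM_j\alpha)^\top\cK_j\beta-\tfrac{1}{n}\beta^\top\cK_j(\cK_j+n\lambda I)\beta+\mu\,\alpha^\top\cM\alpha,
\]
since $\cK_j(\cK_j+n\lambda I)=\cK_j^2+n\lambda\cK_j\succeq 0$; its first-order condition is solved by $\beta=(\cK_j+n\lambda I)^{-1}(y_j-\cM_j\alpha)$ (non-uniqueness when $\cK_j$ is singular is harmless, as it does not change the value). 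Plugging this back and simplifying — using that $\cK_j$ and $(\cK_j+n\lambda I)^{-1}$ are symmetric and commute, so $Q_j:=\cK_j(\cK_j+n\lambda I)^{-1}=(\cK_j+n\lambda I)^{-1}\cK_j$ is symmetric and $\cK_j(\cK_j+n\lambda I)\beta=\cK_j r$ with $r:=y_j-\cM_j\alpha$ — collapses the expression to $\tfrac{1}{n}r^\top Q_j r+\mu\,\alpha^\top\cM\alpha$. Expanding $r=y_j-\cM_j\alpha$ and folding the $\mu$-term into the quadratic gives exactly $\tfrac{1}{n}\bigl(\kappa_j-2\nu_j^\top\alpha+\alpha^\top\Sigma_j\alpha\bigr)$ with $\kappa_j=y_j^\top Q_jy_j$, $\nu_j=\cM_j^\top Q_j y_j$, and $\Sigma_j=\cM_j^\top Q_j\cM_j+\mu n\cM$, matching the statement.

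Finally I would replace $\max_{j\in[M]}$ by $\max_{w\in\Delta(M)}\sum_j w_j(\cdot)$, since a linear function on the simplex is maximized at a vertex, and record that the RKHS constraint $\|h\|_{\cH}\le A$ becomes $\alpha^\top\cM\alpha\le A^2$, which is what $\|\alpha\|\le A$ denotes in the lemma. The main obstacle I anticipate is the bookkeeping in the substitution step: keeping straight the block structure of the three kernel objects — the $(nM)\times(nM)$ full matrix $\cM$, its $n\times(nM)$ row block $\cM_j$, and the $n\times n$ diagonal block $\cK_j$ — and carefully justifying the matrix identities that produce $Q_j$ when $\cK_j$ is rank-deficient; everything else is a direct transcription of the linear-space proof with inner products replaced by kernel evaluations.
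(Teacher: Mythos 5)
Your proposal is correct and follows essentially the same route as the paper's proof: representer theorem for $f$ (yielding a per-population $\beta_j\in\R^n$), representer theorem for $h$ (yielding $\alpha\in\R^{Mn}$), solve the inner quadratic in $\beta$ via the first-order condition to produce $Q_j=\cK_j(\cK_j+n\lambda I)^{-1}$, substitute $\rho_j(h)=y_j-\cM_j\alpha$ and $\|h\|^2=\alpha^\top\cM\alpha$, and then pass from $\max_{j\in[M]}$ to $\max_{w\in\Delta(M)}$ by linearity on the simplex. Your version is slightly tidier in two small ways the paper glosses over: you justify the representer step by a projection argument rather than a bare citation, and you observe that taking $\beta=(\cK_j+n\lambda I)^{-1}(y_j-\cM_j\alpha)$ achieves the optimum even when $\cK_j$ is singular (the paper's Lemma~\ref{lem:opt-violation-rkhs-appendix} explicitly assumes $\cK_j$ invertible to simplify $(\cK_j(\cK_j+n\lambda I))^{-1}\cK_j$); you also make explicit that $\|\alpha\|\le A$ should be read as the $\cM$-weighted norm $\alpha^\top\cM\alpha\le A^2$, a point the statement leaves implicit.
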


\begin{proof}[Proof of Lemma \ref{lem:rkhs-restated-appendix}]
Fix a hypothesis $h \in \cH$ and $j \in [M]$. By the generalized representer theorem for an RKHS, due to \citet{scholkopf2001generalized}), the optimal solution to the moment violation
\[
\max_{f \in \cF} L_{j, n}(h, f) = \max_{f \in \cF} \bE_{j,n}[ 2(Y - h(X)) f(X) - f(X)^2] - \lambda \|f\|^2 + \mu \|h\|^2
\]
will be attained by a function $f_j$ of the form 
$
f_j(\cdot) = \sum_{i = 1}^{n} \beta_{ij}K(X_{ij}, \cdot),
$
where $\beta_{ij} \in \mathbb R$ for every $i \in [n]$.
Thus, we can rewrite the moment violation as follows:
\[
\max_{f \in \cF} L_{j, n}(h, f) = \max_{\beta_j} \frac{1}{n} (2 \rho_j(h)^\top \cK_j \beta_j - \beta_j^\top \cK_j \cK_j \beta_j) - \lambda \beta_j^\top \cK_j \beta_j
+ \mu \|h\|^2,
\]
where $\rho_j(h) = (Y_{1j} - h(X_{1j}), \dots, Y_{nj} - h(X_{nj}))$ and $\beta_j = (\beta_{1j}, \cdots \beta_{nj})$. 

We make use of the following lemma to further restate the moment violation.
\begin{lemma}\label{lem:opt-violation-rkhs-appendix}
Fix a hypothesis $h \in \cH$ and dataset $j \in [M]$. The optimal moment violation $\max_{f \in \cF} L_{j, n}(f)$ is 
\begin{align*}
    &\frac{1}{n} \rho_j(h)^\top Q_j \rho_j(h) + \mu \|h \|^2,
    & Q_j := & ~ \cK_j (\cK_j + n \lambda I)^{-1}
\end{align*}
and is attained when 
\[
\beta_j = (\cK_j + n\lambda I)^{-1} \rho_j(h).
\]
\end{lemma}
\begin{proof}
The first order condition tells us that
\begin{align*}
0 &= \frac{2}{n} \cK_j \rho_j(h)  - \frac{2}{n} \cK_j\cK_j \beta_j - 2 \lambda \cK_j \beta_j.
\end{align*}
Assuming that $\cK_j$ and $\cK_j + n\lambda I$ are invertible, we find that
\begin{align*}
\beta_j 
&= (\cK_j^2 + n\lambda \cK_j)^{-1} \cK_j \rho_j(h) \\
&= (\cK_j (\cK_j + n\lambda I))^{-1} \cK_j \rho_j(h) \\
&= (\cK_j + n\lambda I)^{-1} \rho_j(h)
\end{align*}
Substituting this value $\beta_j$ into our objective, and setting $Q_j = \cK_j(\cK_j + n \lambda I)^{-1}$ we derive
\begin{align*}
\frac{2}{n} \rho_j(h)^\top \cK_j \beta_j 
&= \frac{2}{n} \rho_j(h)^\top \cK_j  (\cK_j + n\lambda I)^{-1} \rho_j(h) \\
&= \frac{2}{n} \rho_j(h)^\top Q_j \rho_j(h) \\
- \left (\frac{1}{n} \beta_j^\top \cK_j \cK_j \beta_h + \lambda \beta_j^\top \cK_j \beta_j  \right ) 
&= - \frac{1}{n} \beta_j^\top (  \cK_j + n\lambda I )\cK_j \beta_j \\
&= - \frac{1}{n} \rho_j(h)^\top (\cK_j + n \lambda I )^{-1} (  \cK_j + n\lambda I )\cK_j  (\cK_j + n\lambda I)^{-1} \rho_j(h) \\
&= - \frac{1}{n} \rho_j(h)^\top \cK_j  (\cK_j + n\lambda I)^{-1} \rho_j(h) \\
&= - \frac{1}{n} \rho_j(h)^\top Q_j \rho_j(h)
\end{align*}
Hence, it follows that 
\begin{align*}
    \max_{f \in \cF} L_{j, n}(f) = \frac{1}{n} \rho_j(h)^\top Q _j\rho_j(h) + \mu \|h \|^2 
\end{align*}
\end{proof}
\noindent With the claim above, we have in fact  shown that 
\begin{align*}
\min_{h \in \cH} \max_{j \in [M]}\max_{f \in \cF} L_{j, n}(f)
&= \min_{h \in \cH} \max_{j \in [M]}\frac{1}{n} \rho_j(h)^\top Q_j \rho_j(h) + \mu \|h \|^2.
\end{align*}
By the generalized representer theorem, we also know that the optimal hypothesis $h \in \cH$ will be of the form
\begin{align*}
    h(\cdot) = \sum_{j = 1}^{M} \sum_{i = 1}^{n} \alpha_{ij}K(X_{ij}, \cdot) 
\end{align*}
where $\alpha_{ij} \in \mathbb R$ for every $i \in [n]$ and every $j \in [M]$.
We write
$\alpha_j = (\alpha_{1j}, \dots, \alpha_{nj})$ and $\alpha = (\alpha_1, \dots, \alpha_M) \in \mathbb R^{nM}$. 
Recalling the definition of the matrix $\cM_j$, we find that
$
(h(X_{1j}), \dots, h (X_{nj}) ) = \cM_j \alpha 
$ and hence, $\rho_j(h) = y_j - \cM_j \alpha$. Recalling the definition of $\cM$, we also have $\|h\|^2 = \alpha^\top \cM \alpha$.  Thus, we derive that
\begin{align*}
&\min_{h \in \cH} \max_{j \in [M]} \max_{f \in \cF} L_{j, n}(f)\\
= ~ & \min_{\substack{\alpha \in \mathbb R^{Mn} \\ \|\alpha\| \leq A}} \max_{j \in [M]} 
\frac{1}{n}
\left (
    y_j^\top Q_j y_j 
    - 2 y_j^\top Q_j \cM_j \alpha  
    + \alpha^\top \cM_j^\top Q_j \cM_j \alpha
\right ) 
+ \mu \alpha^\top \cM \alpha  \\
= ~ & \min_{\substack{\alpha \in \mathbb R^{Mn} \\ \|\alpha\| \leq A}} \max_{j \in [M]} \frac{1}{n}
\left (
    y_j^\top Q_j y_j 
    - 2 y_j^\top Q_j \cM_j \alpha  
    + \alpha^\top \cM_j^\top Q_j \cM_j \alpha 
    + n\mu \alpha^\top \cM \alpha 
\right ) \\
= ~ & \min_{\substack{\alpha \in \mathbb R^{Mn} \\ \|\alpha\| \leq A}} \max_{j \in [M]} \frac{1}{n}\left (\kappa_j - 2 \nu_j^\top \alpha  + \alpha^\top \Sigma_j \alpha  \right ) 
\end{align*}
where 
\begin{align*}
\kappa_j :=~& y_j^\top Q_j y_j &
\nu_j :=~&  \cM_j^\top Q_j^\top  y_j\\
\Sigma_j :=~& \cM_j^\top Q_j\cM_j + \mu\,n\,\cM.
\end{align*}

Our desired result follows from the definition of $\cH$ and the convexity of $\Delta(M)$
\end{proof}

We therefore end up with an optimization problem that resembles the one in the linear setting. The proof of \ref{thm:approx-sol-rkhs} beyond this point is identical to the linear case. See Section \ref{app:linear-spaces} for details on the no-regret dynamics.

\subsection{Convex Hypothesis Spaces}\label{app:convex-spaces}
Consider a convex hypothesis class $\cH$ and assume that $\cF = \shull(\cH - \cH)$. Note that $\cF = \shull(\cH - \cH)$ is also a convex hypothesis space. With our assumptions on $\cH$ and $\cF$, we can rewrite the min-max adversarial moment problem.
For a hypothesis $h \in \cH$ and index $j \in [M]$, define the moment violation
\[ 
V_j(h):= \max_{f \in \cF} \bE_{j, n} [2 (Y - h(X))f(X) - f(X)^2 ].
\]

Let $X_{ij}, Y_{ij}$ denote the $i$-th sample from the $j$-th dataset. 
For each hypothesis $h \in \cH$ and test function $f \in \cF$ and dataset $j$, 
we consider vectors $\alpha_j = (\alpha_{1j}, \dots, \alpha_{nj}), \beta_j = (\beta_{1j}, \dots, \beta_{nj})$, where $\alpha_{ij} = h(X_{ij}), \beta_{ij} = f(X_{ij})$. If we let $A, B$ denote the spaces containing the vectors $\alpha, \beta$, then we reduce the min-max adversarial moment problem to an $n$-dimensional convex-concave optimization problem:

\[
\min_{h \in \cH} \max_{j \in [M]} V_j(h) = \min_{\alpha \in A} \max_{j \in [M]} \max_{\beta \in B} 2(y_j - \alpha_j)^\top \beta_j - \|\beta_j\|_2^2
\]

We find an approximate solution to our desired problem using no-regret dynamics, where the Learner and Adversary are respectively trying to minimize and maximize the objective
\[
\mathcal L(\alpha, w, \beta) := \sum_{j = 1}^M w_j ( 2(y_j - \alpha_j)^\top \beta_j - \|\beta_j\|_2^2).
\]
The Learner will play a vector $\alpha \in A$ and the Adversary will play a tuple $(w, \beta) \in \Delta(M) \times B$.

\subsubsection*{The Learner's Best Response}
In each round $t$, the Learner needs to best respond to the Adversary's action $(w_t, \beta_t)$ by choosing an action $\alpha_t \in \argmin_{\alpha \in A} \mathcal L(\alpha, w_t, \beta_t)$. Note that 
\begin{align*}
\argmin_{\alpha \in A} \mathcal L(\alpha, w_t, \beta_t) 
&= \argmin_{\alpha \in A}  - \sum_{j = 1}^{M} w_{t,j} \alpha_j^\top \beta_{t, j} 
\end{align*}

The Learner solves the problem above by making use of a linear optimization oracle, $\mathrm{ORACLE}_\cH$, which computes
\begin{align*}
  \mathrm{ORACLE}_\cH(X, \beta_t, w_t) \in
  \argmin_{h \in \cH} - \sum_{j = 1}^M w_{t, j} \bE_{j,n}[h(X_{ij}) \beta_{t, ij})]
\end{align*}

\begin{algorithm}[ht!] 
\caption{Best Response, $\mathrm{BEST}(w, \beta)$, for Learner}
\label{alg:best-response-convex}
\begin{algorithmic}
\STATE {\bfseries{Input: }} vectors $w, \beta$ played by Adversary

\STATE Let $h = \mathrm{ORACLE}_\cH(X, \beta_t, w_t)$ \\
\FOR {$i = 1$ {\bfseries{to}} $n$} 
\FOR {$j = 1$ {\bfseries{to}} $M$} 
\STATE Set $\alpha_{ij} = h(X_{ij})$
\ENDFOR
\ENDFOR

\STATE {\bfseries{Output: }} $\alpha$
\end{algorithmic}
\end{algorithm}

\noindent Based on the discussion above, the following lemma is immediate.

\begin{lemma}\label{lem:best-response-convex}
Let $(w_1, \beta_1), \dots, (w_T, \beta_T) \in \Delta(M) \times B$ be some sequence, and let $\alpha_1, \dots, \alpha_T \in A$ be a sequence such that 
$\alpha_t = \mathrm{BEST}(w_t, \beta_t)$ for each $t \in [T]$.
Then 
\[
\sum_{t = 1}^T \mathcal L(\alpha_t, w_t, \beta_t)
- \min_{\alpha \in A }\sum_{t = 1}^T \mathcal L (\alpha, w_t, \beta_t) \leq 0.
\] 
\end{lemma}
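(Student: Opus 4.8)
The plan is to exploit the fact that the Learner, by construction, \emph{exactly} minimizes the per-round objective $\alpha \mapsto \mathcal L(\alpha, w_t, \beta_t)$, so that playing the per-round best response can never do worse in hindsight than any single fixed action. This is the standard ``advantage of playing second'' phenomenon, and the only thing that genuinely needs verification is that $\mathrm{BEST}(w_t,\beta_t)$ indeed returns a minimizer of $\mathcal L(\cdot, w_t,\beta_t)$ over $A$.

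First I would unpack $\mathcal L(\alpha, w_t, \beta_t) = \sum_{j=1}^M w_{t,j}\bigl(2(y_j - \alpha_j)^\top \beta_{t,j} - \|\beta_{t,j}\|_2^2\bigr)$ and observe that the terms $2 y_j^\top \beta_{t,j} - \|\beta_{t,j}\|_2^2$ do not depend on $\alpha$, so that $\argmin_{\alpha \in A}\mathcal L(\alpha, w_t,\beta_t) = \argmin_{\alpha \in A}\bigl(-\sum_{j} w_{t,j}\,\alpha_j^\top\beta_{t,j}\bigr)$. Since every $\alpha \in A$ arises as $\alpha_{ij} = h(X_{ij})$ for some $h \in \cH$ and conversely, and $\alpha_j^\top\beta_{t,j} = \sum_i h(X_{ij})\beta_{t,ij} = n\,\bE_{j,n}[h(X_{ij})\beta_{t,ij}]$, the call $\mathrm{ORACLE}_\cH(X,\beta_t,w_t)$ returns exactly such a minimizing $h$; hence $\alpha_t = \mathrm{BEST}(w_t,\beta_t) \in \argmin_{\alpha\in A}\mathcal L(\alpha, w_t,\beta_t)$. (Attainment of the minimum is ensured by compactness of $\cH$, assumed in Theorem~\ref{thm:approx-sol-convex}.)

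Given this, the remainder is immediate: for every $t \in [T]$ and every fixed $\alpha \in A$, $\mathcal L(\alpha_t, w_t,\beta_t) = \min_{\alpha'\in A}\mathcal L(\alpha', w_t,\beta_t) \le \mathcal L(\alpha, w_t,\beta_t)$. Summing over $t$ gives $\sum_{t=1}^T \mathcal L(\alpha_t, w_t,\beta_t) \le \sum_{t=1}^T \mathcal L(\alpha, w_t,\beta_t)$ for all $\alpha \in A$, and minimizing the right-hand side over $\alpha \in A$ yields $\sum_{t=1}^T \mathcal L(\alpha_t, w_t,\beta_t) - \min_{\alpha\in A}\sum_{t=1}^T \mathcal L(\alpha,w_t,\beta_t) \le 0$, which is the claim. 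There is essentially no hard step here; the only point requiring care is the reduction in the second paragraph, namely checking that the oracle's minimization over $\cH$ coincides with minimization of $\mathcal L(\cdot, w_t,\beta_t)$ over $A$ after discarding the $\alpha$-independent terms.
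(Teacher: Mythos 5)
Your proof is correct and takes essentially the same route as the paper: the paper also observes that $\mathrm{BEST}(w_t,\beta_t)$ returns a per-round minimizer of $\mathcal L(\cdot,w_t,\beta_t)$ over $A$ (via the oracle, after discarding $\alpha$-independent terms) and then notes that playing the exact best response each round trivially yields non-positive regret, declaring the lemma immediate. Your write-up simply spells out the final summation step that the paper leaves implicit.
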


\subsubsection*{The Adversary's No-regret Updates}
In each round $t$, the Adversary's play consists of a vector $\beta_t \in B$ and a weight $w_t \in \Delta(M)$. The weight $w_t$ is updated according to the Multiplicative Weights algorithm. The vector $\beta_t$ is updated according to the Follow the Leader algorithm. The Adversary maintains a vector $\beta_{t,j}$ for each dataset $j \in [M]$ which is updated as follows: 
\begin{align*}
\beta_{t+1, j} 
&= \arg \min_{\beta_j \in B_j} \frac{1}{t} \sum_{\tau = 1}^t (2(y_j - \alpha_{t,j})^\top \beta_j - \| \beta_j \|_2^2) \\
&=  \arg \min_{\beta_j \in B_j} \| y_j - \overline{\alpha}_{t, j} - \beta_j \|_2^2
\end{align*}

The Adversary solves the optimization problem above by making use of a regression oracle, $\mathrm{ORACLE}_\cF$, which computes 
\[
\mathrm{ORACLE}_\cF(X, Y, \overline{\alpha}_{t, j}) \in 
\argmin_{f \in \cF} \bE_{j,n}\left [ ( Y - \overline{\alpha}_{t, j} - f(X_{ij}))^2\right]
\]
for more details on the no-regret dynamics, refer to Algorithm \ref{alg:best-response-convex} .

\begin{algorithm}[ht]
\caption{No-Regret Dynamics}
\label{alg:-no-regret-oracle}
\begin{algorithmic}
\STATE {\bfseries{Input:}} time horizon $T$, trade-off parameter $\eta$,
\STATE Initialize $\beta_1$ to some arbitrary vector in $B$ 
\STATE Initialize $w_1 \gets (\frac{1}{M}, \dots, \frac{1}{M}) \in \mathbb{R}^{M}$ 
\FOR {$t= 1$ {\bfseries{to}} $T$}
    \STATE $\alpha_t \gets \mathrm{BEST}(w_t, \beta_t)$ 
    \STATE $\overline{\alpha_t} \gets \frac{1}{\tau} \sum_{\tau = 1}^{t} \alpha_\tau$ 

    \FOR{$j = 1$ {\bfseries{to}} $M$}
        \STATE Let $f_j = \mathrm{ORACLE}_\cF(X, Y, \overline{\alpha_t})$\\
        \STATE Set $\beta_{t+1, j} = (f_j(X_{1j}), \dots, f_j(X_{nj}))$
    \ENDFOR
    
   \FOR{$j = 1$ {\bfseries{to}} $M$}
        \STATE Set $w_{t+1, j} \propto  w_{t, j} \exp\left \{ \eta \left( 2(y_j - \alpha_{t,j})^\top \beta_{t+1,j} - \| \beta_{t+1, j} \|_2^2 \right )  \right \}$
    \ENDFOR
    
\ENDFOR

\algorithmicensure{$\frac{1}{T} \sum_{i = 1}^T w_t$, }
\end{algorithmic}
\end{algorithm}

\begin{lemma}[Adversary's Regret]\label{lem:convex-regret-appendix}
Let $\alpha_1, \dots, \alpha_T \in A$ be some sequence.
There exists a constant $C$ such that running Multiplicative Weights for $w_t$ with $\eta = \sqrt{\frac{\log M}{T}}$ and running follow the leader for $\beta_t$ yields the following regret
\[
\max_{w} \max_{\beta} \sum_{t = 1}^T \mathcal L (\alpha_t, w, \beta)
- \sum_{t = 1}^T \mathcal L (\alpha_t, w_t, \beta_t) 
\leq C \left ( \log T + \sqrt{T \log M} \right )
\]
\end{lemma}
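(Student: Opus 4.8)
The plan is to split the adversary's regret into a \emph{weights} component, handled by Multiplicative Weights, and a \emph{test-function} component, handled by Follow-the-Leader, after first observing that the comparator $\max_{w,\beta}\sum_t\mathcal L(\alpha_t,w,\beta)$ decouples across the $M$ datasets. Throughout, write $g_j(\alpha_j,\beta_j):=2(y_j-\alpha_j)^\top\beta_j-\|\beta_j\|_2^2$, so that $\mathcal L(\alpha,w,\beta)=\sum_{j=1}^M w_j\,g_j(\alpha_j,\beta_j)$. Since $\cH$ is compact, $\cF=\shull(\cH-\cH)$ is bounded, and the labels are bounded, the quantities $\|\alpha_{t,j}\|_2,\|\beta_{t,j}\|_2,\|y_j\|_2$, and hence each $g_j(\alpha_{t,j},\beta_{t,j})$, all lie in a bounded range; let $R$ be a uniform bound depending only on these diameters. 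For a fixed $w\in\Delta(M)$ the map $\beta\mapsto\sum_t\sum_j w_j g_j(\alpha_{t,j},\beta_j)$ is separable in the blocks $\beta_j$ with nonnegative coefficients, and each block lives in its own feasible set $B_j$, so
\[
\max_{w,\beta}\sum_{t=1}^T\mathcal L(\alpha_t,w,\beta)=\max_{w\in\Delta(M)}\sum_{j=1}^M w_j\max_{\beta_j\in B_j}\sum_{t=1}^T g_j(\alpha_{t,j},\beta_j)=\max_{j\in[M]}\ \max_{\beta_j\in B_j}\sum_{t=1}^T g_j(\alpha_{t,j},\beta_j).
\]
Inserting and subtracting $\max_{j}\sum_t g_j(\alpha_{t,j},\beta_{t,j})$, I would write the adversary's regret as $(\mathrm I)+(\mathrm{II})$, with $(\mathrm I)=\max_{j}\max_{\beta_j}\sum_t g_j(\alpha_{t,j},\beta_j)-\max_{j}\sum_t g_j(\alpha_{t,j},\beta_{t,j})$ and $(\mathrm{II})=\max_{j}\sum_t g_j(\alpha_{t,j},\beta_{t,j})-\sum_t\sum_j w_{t,j}g_j(\alpha_{t,j},\beta_{t,j})$.

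For $(\mathrm{II})$ I would invoke the Hedge regret bound exactly as in Lemma~\ref{lem:mw-linear}: the weights $w_t$ in Algorithm~\ref{alg:-no-regret-oracle} are updated by Multiplicative Weights against the bounded gain vector $r_t=(g_1(\alpha_{t,1},\beta_{t,1}),\dots,g_M(\alpha_{t,M},\beta_{t,M}))\in[-R,R]^M$, so with $\eta=\sqrt{\log M/T}$ one gets $(\mathrm{II})\le\max_{w}\sum_t\langle w,r_t\rangle-\sum_t\langle w_t,r_t\rangle\le C_1 R\sqrt{T\log M}$ (cf.\ \cite{cesabianchi1997expert}).

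For $(\mathrm I)$ I would let $j^\star$ attain the outer maximum in the first term; then $\max_j\sum_t g_j(\alpha_{t,j},\beta_{t,j})\ge\sum_t g_{j^\star}(\alpha_{t,j^\star},\beta_{t,j^\star})$, so $(\mathrm I)$ is at most the regret of the iterates $\{\beta_{t,j^\star}\}_t$ against the best fixed point of $B_{j^\star}$ on the loss sequence $\ell_t(\beta):=-g_{j^\star}(\alpha_{t,j^\star},\beta)=-2(y_{j^\star}-\alpha_{t,j^\star})^\top\beta+\|\beta\|_2^2$. Each $\ell_t$ has Hessian $2I$, hence is $2$-strongly convex, with gradient $2(\beta-(y_{j^\star}-\alpha_{t,j^\star}))$ bounded in norm over $B_{j^\star}$; and the update $\beta_{t+1,j^\star}=\mathrm{ORACLE}_\cF(X,Y,\bar\alpha_{t,j^\star})$ evaluated on dataset $j^\star$ is precisely $\arg\min_{\beta\in B_{j^\star}}\tfrac1t\sum_{\tau\le t}\ell_\tau(\beta)$, i.e.\ the Follow-the-Leader iterate. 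The classical logarithmic-regret bound for Follow-the-Leader on strongly convex, Lipschitz losses over a convex set then gives $(\mathrm I)\le C_2(1+\log T)$, uniformly in $j^\star$. Adding the two estimates yields the claimed $C(\log T+\sqrt{T\log M})$ with $C=O(R^2)$.

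The main obstacle is the $(\mathrm I)$ term: one must verify carefully that the regression-oracle update in Algorithm~\ref{alg:-no-regret-oracle} genuinely realizes Follow-the-Leader on a sequence of \emph{uniformly} strongly convex and \emph{uniformly} Lipschitz losses over the convex set $B_{j^\star}$, so that the standard FTL analysis---bounding the per-round regret by the iterate displacement $\|\beta_{t+1,j^\star}-\beta_{t,j^\star}\|_2=O(1/t)$ via strong convexity and then telescoping---applies with constants independent of $j^\star$. The comparator decoupling and the Multiplicative Weights bound are routine, and the only remaining bookkeeping is to confirm that $R$, hence the final constant, depends only on the fixed diameters of $\cH$, $\cF$ and the label bound.
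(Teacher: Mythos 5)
Your decomposition into a Multiplicative-Weights component $(\mathrm{II})$ and a Follow-the-Leader component $(\mathrm{I})$, together with the vertex-of-the-simplex observation that collapses $\max_{w,\beta}\sum_t\mathcal L$ to $\max_j\max_{\beta_j}\sum_t g_j(\alpha_{t,j},\beta_j)$, is exactly the chain of inequalities in the paper's proof of Lemma~\ref{lem:convex-regret-appendix}, just written out more explicitly (the paper leaves the choice of the maximizing $j^\star$ and the identification of the oracle call with FTL on the strongly convex loss $\ell_t$ implicit). The argument is correct and essentially identical to the paper's.
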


\begin{proof}[Proof of Lemma \ref{lem:convex-regret-appendix}]
\begin{align*}
\sum_{t = 1}^{T} \mathcal L (\alpha_t, w_t, \beta_t)
&=\sum_{t=1}^T \sum_{j=1}^M w_{t,j} \left(2 (y_j - \alpha_{t,j})^\top\beta_{t,j} - \|\beta_{t,j}\|_2^2\right) \\
&\geq \max_{w \in \Delta(M)} \sum_{t=1}^T w_j \left(2 (y_j - \alpha_{t,j})^\top\beta_{t,j} - \|\beta_{t,j}\|_2^2\right) - C_1 \sqrt{ T \log M}  \\ 
&\geq \max_{w \in \Delta(M)} \max_{\beta_j\in B_j} \sum_{t=1}^T \left(2 (y_j - \alpha_{t,j})^\top\beta_j - \|\beta_j\|_2^2\right) - C_1 \sqrt{ T \log M} - C_2 \log T 
\end{align*}

The first inequality follows from the regret guarantee of Multiplicative Weights (see e.g.\ \cite{cesabianchi1997expert}). The second follows from the regret guarantee of follow the leader (cf. \cite{hazan2007logarithmic}; \cite{shalev2012online} for more details ).
\end{proof}

\section{Missing Details from Section~\ref{sec:exp}}
\subsection{Missing Details from Section~\ref{subsec:exp-rkhs}}\label{app:synthetic-details}

We include a python notebook containing the code used to generate the synthetic data experiments in our supplementary code (\texttt{RKHS\_example\_final.ipynb}). 

Figure~\ref{fig:data-dist} illustrates the details of our data generation process where groups follow one of two parabolic curves with either high or low additive noise. 

\begin{figure}[hb]
    \centering
\includegraphics[scale=0.7]
{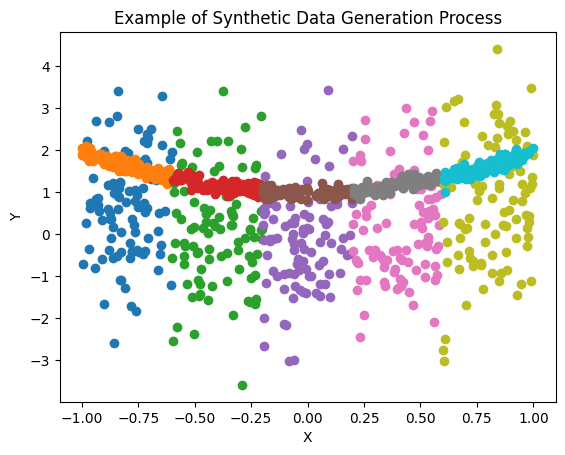}
    \caption{Example of synthetic data distribution for 10 groups, each of size 100. Each group is color-coded for easy differentiation. Every group follows a parabolic function with additive gaussian noise.}
    \label{fig:data-dist}
\end{figure}

In our experiments, we observe a tradeoff between group size and number of groups in terms of the runtime of our approach vs the baselines. We find that in regimes with small numbers of groups, each very large in size, MRO may out-perform our approach in terms of runtime (see Figure~\ref{fig:bad-regime}). It's possible that this gap could be improved through further optimization in future work. 

\begin{figure}
    \centering
\includegraphics[scale=0.7]
{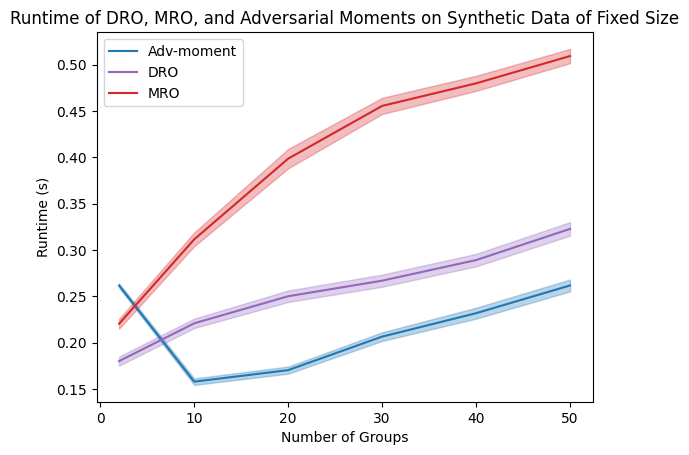}
    \caption{Runtime of our method (Adv-moment) and baselines (groupDRO (DRO), MRO) on a synthetic dataset of size 2000 as the number of groups increase. We see that when there are only two groups each of size 1000, MRO outperforms the runtime of the adversarial moment method, but this reduces as the number of groups is increased.}
    \label{fig:bad-regime}
\end{figure}

\subsection{Missing Details from Section~\ref{subsec:exp-neural}}
\label{app:exp-neural}
\textbf{Loss Function} In the experiments, we used cross entropy loss for both MRO and groupDRO. It is quite common to use cross entropy loss for a deep learning task like CelebA. However, we note that our results are pretty comparable to MRO and groupMRO despite us not having optimized the hyper-parameters for square loss and borrowing the hyper-parameters used for groupDRO and MRO.

\textbf{Hyper-parameters}
For the stochastic gradient descent update, we used a bath size of 32 while using the best hyperpameters used for groupDRO (best hyperparamets used for strong $l_2$ regularization\footnote{learning rate=1e-5, $l_2$ penalty=0.1, weight decay=0.1}); we used the same hyperparameters for updating the adversary and set $a_n=1$. Although we used a different batch size and trained the network for a smaller number of epochs (\citet{sagawa2019distributionally} used a batch size of 128 and trained for 50 epochs), we note that the results we get for groupDRO are comparable to, if not better, the results reported in \citet{sagawa2019distributionally}.

\paragraph{Adversary Network Architecture}
For the adversary network, to output 10 numbers as opposed to 1. Then, we concatenate $j \in [M]$ to these 10 values and go through a 100 hidden node layer with a ReLU activation to get the final output value. 

\paragraph{Stochastic Gradient Optimization}
At each period $t$, we draw $B$ samples from each subpopulation and let $\cB_{t,j}$ denote the set of samples drawn from population $j$ in period $t$. Then we can construct the online stochastic gradient descent-ascent analogue of the above no-regret dynamics as:
\begin{align*}
    \beta_{t+1} =~& \beta_{t} + \eta_j \, \frac{1}{M B} \sum_{j \in [M]}\sum_{i\in \cB_{t,j}} U_{t,j,i}\\
    w_{t+1,j} \propto~& w_{t,j}\, \exp\left\{\eta_w\, \frac{1}{B} \sum_{i\in \cB_{t,j}} U_{t,j,i}\right\}\\
    \alpha_{t+1} =~& \alpha_t - \eta_\alpha\, \sum_{j=1}^M w_{t,j} \frac{1}{B} \sum_{i\in \cB_{t,j}} V_{t,j,i}.
\end{align*}
We further center the adversary around the model as discussed before, but this centering is handled  as PyTorch handles the gradient computation automatically.

\end{document}